\newtheorem{theorem}{Theorem}
\newtheorem{lemma}{Lemma}
\newtheorem{proposition}{Proposition}
\newtheorem{assumption}{Assumption}
\newtheorem{definition}{Definition}
\title{Stochastic Bayesian Optimization with Unknown Continuous Context Distribution via Kernel Density Estimation}
\author{
    Xiaobin Huang, Lei Song, Ke Xue, Chao Qian\thanks{Chao Qian is the corresponding author.}
}
\begin{document}

\maketitle

\begin{abstract}
Bayesian optimization (BO) is a sample-efficient method and has been widely used for optimizing expensive black-box functions. Recently, there has been a considerable interest in BO literature in optimizing functions that are affected by context variable in the environment, which is uncontrollable by decision makers. In this paper, we focus on the optimization of functions' expectations over continuous context variable, subject to an unknown distribution. To address this problem, we propose two algorithms that employ kernel density estimation to learn the probability density function (PDF) of continuous context variable online. The first algorithm is simpler, which directly optimizes the expectation under the estimated PDF. Considering that the estimated PDF may have high estimation error when the true distribution is complicated, we further propose the second algorithm that optimizes the distributionally robust objective. Theoretical results demonstrate that both algorithms have sub-linear Bayesian cumulative regret on the expectation objective. Furthermore, we conduct numerical experiments to empirically demonstrate the effectiveness of our algorithms.
\end{abstract}

\section{Introduction}

Bayesian optimization (BO)~\cite{bosurvey1, bosurvey2} is a popular and sample-efficient method for optimizing expensive black-box functions. BO has shown excellent performance in various fields, such as chemical molecular design~\cite{autochem, chemical2}, neural architecture search~\cite{mlhp,nasbo,mctsvs}, and hyper-parameter tuning~\cite{boalp,bopp}. The typical process of BO involves approximating the objective function by a Gaussian process (GP) surrogate model~\cite{gp}, and then selecting the most valuable point for evaluation by optimizing an acquisition function based on the posterior of the surrogate model.

In some practical scenarios, uncontrollable context variable from the environment can impact the objective function, such as customer demand in inventory management~\cite{inventory2,inventory}, bid-ask spread and borrowing cost in portfolio optimization~\cite{portfolio,risk1}, and temperature in crop size optimization~\cite{drbo_wcs}. There has been some BO literature taking context variable into account with different objectives. Robust optimization aims to find solutions that perform well in the worst-case scenario~\cite{ro2,ro}, while stochastic optimization (SO) focuses on finding solutions that perform well in expectation~\cite{modified_branin, bointergrand2,bouu,stochasitc_bandit, bointergrand}. Risk optimization considers risk measures such as mean-variance~\cite{mean_variance}, value at risk (VaR)~\cite{risk1, risk2} or conditional VaR~\cite{risk1}. However, robust optimization ignores the distribution information of the context, and most existing works on SO and risk optimization assume that the distribution of context is known. 

Distributionally robust BO (DRBO)~\cite{drbo,drbqo,drbochance,drbo_wcs} is proposed to address problems with unknown context distribution by optimizing the worst expectation over a set of distributions. Existing DRBO works focus on finite context. When the context variable is in a continuous space, which is common (e.g., temperature in crop size optimization, and energy output in wind power prediction~\cite{drbo_wcs}) in practice, they usually discretize the space. However, the computational complexity of the inner convex optimization in DRBO is at least cubed to the size $|\mathcal{C}|$ of context space $\mathcal{C}$~\cite{drbo_wcs}. Smaller $|\mathcal{C}|$ leads to poor approximation to the expectation over the continuous space, thus poor performance, while larger $|\mathcal{C}|$ leads to unacceptable computational complexity. While~\citeauthor{drbo_wcs}~\shortcite{drbo_wcs} have developed a method based on fast worst case sensitivity to efficiently approximate and accelerate the inner optimization problem, it suffers from linear regret due to approximation errors.~\citeauthor{drbqo}~\shortcite{drbqo} used Lagrange multipliers to accelerate the optimization, but it is limited to the simulator setting, where the decision makers can select the context.

In this paper, we consider the problem of maximizing the SO objective $\max_{\bm{x}}\mathbb{E}_{\bm{c}\sim  p(\bm{c})}[f(\bm{x},\bm{c})]$ over the decision variable $\bm x\in \mathcal X$, where $f$ is a black-box function, and the distribution $p$ of context variable $\bm c\in \mathcal C$ is continuous and unknown. The context is observable after making decisions. To avoid the drawbacks of discretization, we propose two algorithms to directly address this problem. The first algorithm employs kernel density estimation (KDE) to estimate the unknown context distribution and maximizes the objective function $f$'s expectation under the estimated PDF, which is simple and time-efficient. Considering that the estimated PDF may have high estimation error when the true distribution is complicated, we propose the second algorithm, which also uses KDE for PDF estimation but maximizes a distributionally robust objective function, i.e., optimizes the worst-case expectation across a set of distributions around the estimated one. We provide theoretical analyses for both algorithms, proving that they have sub-linear Bayesian cumulative regret on the SO objective. The experiments on synthetic functions and two real-world optimization tasks (i.e., newsvendor problem and portfolio optimization) demonstrate that our proposed algorithms achieve better performance.

\section{Background}
\label{background}

\subsection{Bayesian Optimization}
BO is a popular algorithm for black-box optimization, which consists of two main components: a surrogate model and an acquisition function. GP~\cite{gp} is the most commonly used surrogate model. The function $f$ is assumed to be a sample path from a GP, denoted as $\mathcal{GP}(\bm{0}, k(\cdot,\cdot))$, where $\bm{0}$ is the prior mean and $k(\cdot,\cdot)$ is a kernel function. Given observed data set $\mathcal{D}_{t-1} = \{ \left(\bm{x}_i, \bm{c}_i,y_i\right)\}_{i=1}^{t-1}$, where $y_i = f(\bm{x}_i,\bm{c}_i)+\epsilon_i$ is the noisy observation and $\epsilon_i\sim\mathcal{N}(0,\sigma^2)$, we can calculate the posterior distribution of the function $f\mid\mathcal{D}_{t-1}\sim\mathcal{GP}(\mu_{t}(\bm{x},\bm{c}), k_{t}((\bm{x},\bm{c}),(\bm{x}',\bm{c}')))$, where the posterior mean $\mu_{t}(\bm{x},\bm{c})=$ $\bm{k}_{t-1}(\bm{x},\bm{c})^\top(\mathbf{K}_{t-1}+\sigma^2\mathbf{I})^{-1}\bm{y}_{t-1}$, and posterior covariance $k_{t}((\bm{x},\bm{c}),(\bm{x}',\bm{c}'))=k((\bm{x},\bm{c}),(\bm{x}',\bm{c}'))-\bm{k}_{t-1}(\bm{x},$ $\bm{c})^\top(\mathbf{K}_{t-1}+\sigma^2\mathbf{I})^{-1}\bm{k}_{t-1}(\bm{x}',\bm{c}')$. Here, $\bm{k}_{t-1}(\bm{x},\bm{c})$ $= \left[k\left((\bm{x}_i,\bm{c}_i), (\bm{x},\bm{c})\right)\right]_{i=1,\dots,t-1}^\top$, $\mathbf{K}_{t-1}\in\mathbb{R}^{(t-1)\times(t-1)}$ is the positive semi-definite kernel matrix with $[\mathbf{K}_{t-1}]_{ij}=k\left((\bm{x}_i,\bm{c}_i),(\bm{x}_j,\bm{c}_j)\right)$, and $\bm{y}_{t-1} =[y_1,\dots,y_{t-1}]^{\top}$.

Based on the posterior distribution obtained from GP, various acquisition functions can be used to determine the next query point, e.g., Probability of Improvement (PI)~\cite{POI}, Expected Improvement (EI)~\cite{ei} and Upper Confidence Bound (UCB)~\cite{gpucb}. In this work, we use the UCB acquisition function for both our algorithms, defined as $\text{ucb}_t(\bm{x},\bm{c})=\mu_t(\bm{x},\bm{c})+\sqrt{\beta_t}\sigma_t(\bm{x},\bm{c})$, where $\beta_t$ is a hyper-parameter to balance the exploitation and exploration. We use $\sigma_t^2(\bm{x},\bm{c})=k_{t}((\bm{x},\bm{c}),(\bm{x},\bm{c}))$ to represent the posterior variance at $(\bm x, \bm c)$. Note that under deterministic environments, the context variable $\bm c \in \mathcal{C}$ can be neglected.   

\subsection{Stochastic Bayesian Optimization}

In real-world problems, the objective function may be affected by context variable, which is uncontrollable by the decision makers. The problem can be formalized as a black-box function $f(\bm{x},\bm{c})$ over a convex and compact domain $\mathcal{X} \times \mathcal{C} \subset \mathbb{R}^{D_x} \times \mathbb{R}^{D_c}$, where $\mathcal{X}$ is a $D_x$-dimensional decision space controlled by decision makers, and $\mathcal{C}$ is a $D_c$-dimensional context space controlled by environment. In this paper, we consider the setting that $\mathcal{C}$ is continuous. At iteration $t$, a decision $\bm{x}_t$ is made, followed by the observation of a context $\bm{c}_t \sim p(\bm{c})$ provided by the environment and observed by the decision maker. Note that the context distribution $p(\bm{c})$ is unknown here. Next, we observe the noisy evaluation $y_t = f(\bm{x}_t,\bm{c}_t)+\epsilon_t$, where $\epsilon_t\sim\mathcal{N}(0, \sigma^2)$. We consider the SO setting aiming to identify the optimum $\bm{x}^* \in \mathop{\arg\max}_{\bm{x}\in\mathcal{X}}\mathbb{E}_{\bm{c}\sim p(\bm{c})}[f(\bm{x},\bm{c})]$. 
Given the evaluation budget $T$, the goal is to minimize the cumulative regret of SO objective, i.e.,
\begin{equation}
    R_T :=\sum_{t=1}^T\left( \mathbb{E}_{\bm{c}\sim p(\bm{c})}[f(\bm{x}^*,\bm{c})]-\mathbb{E}_{\bm{c}\sim p(\bm{c})}[f(\bm{x}_t,\bm{c})]  \right) .\label{stochastic_cumulative_regret}
\end{equation}

There has been plentiful research considering context variable in the BO literature. For instance,~\citeauthor{bo_context}~\shortcite{bo_context} considered the case that the context $\bm{c}_t$ is given before decision. When the context cannot be known beforehand, numerous approaches have been proposed with different optimization objectives.

\textbf{Robust optimization} considers a worst-case objective, formulated as $\max_{\bm{x}}\min_{\bm{c}\in\mathcal{C}}f(\bm{x},\bm{c})$, and has been studied in~\cite{ro2,ro}. However, robust optimization is too pessimistic, ignoring the distribution information of context.

\textbf{Stochastic optimization (SO)} considers an average-case optimization, i.e., $\max_{\bm{x}}\mathbb{E}_{\bm{c}\sim p(\bm{c})}[f(\bm{x},\bm{c})]$. A special case of SO is optimizing the expectation under input perturbation $\max_{\bm{x}}\mathbb{E}_{\bm{c}\sim p(\bm{c})}\left[f(\bm{x}\diamond \bm{c})\right]$, where $\diamond$ denotes the perturbation of the input $\bm x$ by $\bm c$, which has been discussed in~\cite{unscented_bo,bouu,uncertain_input,noisy_input}. The general case of SO has also been studied using different acquisition functions in~\cite{modified_branin, bointergrand2, bouu,stochasitc_bandit, bointergrand}, which, however, assume that the distribution of context is known.

\textbf{Risk optimization} uses risk measures, such as mean-variance~\cite{mean_variance}, value at risk (VaR)~\cite{risk1,risk2} and conditional VaR~\cite{risk1}, as the objectives when dealing with contextual uncertainty. For example,~\citeauthor{risk1}~\shortcite{risk1} and~\citeauthor{risk2}~\shortcite{risk2} studied $\text{VaR}_\delta(\bm{x}):=\sup\{s:\mathbb{P}(f(\bm{x},\bm{c})\geq s)\geq 1-\delta\}$, which measures the risk under a specified level of confidence $1-\delta$. 

\textbf{Distributionally robust optimization (DRO)}. The above methods usually ignore the context distribution or assume the distribution of the context variable is known. When the distribution is unknown, the DRO objective can be adopted, considering the worst-case expectation over a set of distributions. The DRO objective is formulated as $\max_{\bm{x}}\min_{q\in\mathcal{Q}}\mathbb{E}_{\bm{c}\sim q(\bm{c})}[f(\bm{x},\bm{c})]$, where $\mathcal{Q}$ is a given distribution set on the context space $\mathcal{C}$. Different approaches have been proposed to optimize the DRO objective. For instance,~\citeauthor{drbqo}~\shortcite{drbqo} considered the simulator setting where the decision makers can select the context $\bm{c}$, while~\citeauthor{drbochance}~\shortcite{drbochance} considered DRO under chance constraints.~\citeauthor{drbo}~\shortcite{drbo} used maximum mean discrepancy (MMD) to construct the distribution set $\mathcal{Q}$ given a reference distribution. However, they only considered the case where the context space is discrete with a size of $|\mathcal{C}|$, and the inner optimization is a $|\mathcal{C}|$-dimensional optimization problem, which can be computationally expensive when the size $|\mathcal{C}|$ is large.~\citeauthor{drbo_wcs}~\shortcite{drbo_wcs} proposed using worst-case sensitivity for approximation and acceleration of the inner optimization. However, the approximation error can lead to a decrease in the performance, and the regret bound they derived is linear even when the distribution distance $\epsilon_t=0$ in Theorem 4 in~\cite{drbo_wcs}. Continuous DRO has been discussed in~\cite{drbo_phi}, but the algorithms they proposed only hold under certain conditions, which will be discussed in Section~\ref{section_method_and_theory}.

In this paper, we consider the SO objective and assume that the distribution of context variable is continuous and unknown. The setting is similar to the data-driven setting in~\cite{drbo}, which, however, used a discrete context space. 
A similar setting has also been discussed in bandit problems~\cite{context_bandits}, where the context distributions for each arm are unknown and estimated online.

\subsection{Kernel Density Estimation}
\label{section_kde}
Kernel density estimation (KDE)~\cite{kde_1,kde_scott,kde_tutorial} is a non-parametric method used for estimating the probability density function (PDF) of a random variable, and is widely used in machine learning communities due to its flexibility~\cite{kde_application1,kde_application2}. The basic idea of KDE is to estimate the PDF by aggregating the density assigned around each sample. Given the i.i.d. samples $\{\bm{c}_i\}_{i=1}^{t}$ drawn from $p(\bm{c})$, the estimated distribution $\hat p(\bm c)$ of $p(\bm{c})$ is calculated as
\begin{equation}
   \hat{p}(\bm{c})=\sum\nolimits_{i=1}^{t} K\left(\mathbf{H}_t^{-1}(\bm{c}-\bm{c}_i) \right)/(t|\mathbf{H}_t|),
   \label{KDE}
\end{equation}
where $\mathbf{H}_t= \text{diag}([h_t^{(1)}, h_t^{(2)},\dots,h_t^{(D_c)}])$ is a diagonal positive definite bandwidth matrix, $|\mathbf{H}_t|$ and $\mathbf{H}_t^{-1}$ denote the determinant and inverse of $\mathbf{H}_t$, respectively, and $K(\cdot)$ is a kernel function satisfying 
$K(\bm{c}) = K(-\bm{c}), \forall\bm{c}\in\mathcal{C},$ $\int_{\mathcal{C}}\bm c \bm c^{\mathrm{T}} K(\bm c)\, d \bm c=m_2(K)\mathbf{I}_{D_c}$ for some constant $m_2(K)>0$, and $\int_{\mathcal{C}} K(\bm{c})\, d\bm{c}=1.$

Besides flexibility, KDE has good theoretical convergence properties for different error functions, e.g., uniform error~\cite{uniform_error}, $\ell_1$ error~\cite{kde_l1} and mean integrated square error (MISE)~\cite{mise_multi,kde_tutorial}. In this work, we primarily focus on MISE, which is one of the most well-known error measurements. Lemma~\ref{l2_bound_lemma} gives an upper bound on the MISE. It can be shown that by choosing $h_t^{(i)} \, \forall i$ to be of order $\Theta\left(t^{-1/(4+D_c)}\right)$, the MISE can be upper bounded by $\mathcal{O}\left(t^{-4/(D_c+4)}\right)$~\cite{mise_multi}, which will play a crucial role in deriving the regret bound for our algorithms in Section~\ref{section_method_and_theory}.

\begin{lemma}[\citeauthor{mise_multi}, \citeyear{mise_multi}]
Suppose $K(\cdot)$ is a bounded kernel for KDE, and $p(\bm{c})$ is a twice-differentiable PDF over $\mathcal{C}$. Let $J = \int_{\mathcal{C}}(p(\bm{c}) - \hat{p}(\bm{c}))^2\, d\bm{c}$ with $\hat{p}$ defined as Eq.~\eqref{KDE}. Then the MISE $\mathbb{E}[J]$ has an order of
\begin{align}
   \mathcal{O}\left(\frac{1}{t|\mathbf{H}_t|}R(K)
+\frac{1}{4}m_2(K)^2(\textnormal{vec}^{\mathrm{T}} (\mathbf{H}_t^2))\Psi_4(\textnormal{vec} (\mathbf{H}_t^2))\right),
\nonumber
\end{align}
where $R(K)=\int_{\mathcal{C}}K(\bm c)^2 \,d \bm c$, $\textnormal{vec}(\cdot)$ is the vector operator that vectorizes a matrix into a column vector, $\Psi_4=\int_{\mathcal{C}}(\textnormal{vec}(\nabla^2\,p(\bm c)))(\textnormal{vec}^{\mathrm{T}}( \nabla^2\,p(\bm c)))\, d\bm{c}$ is a $D_c^2 \times D_c^2$ matrix of integrated second order partial derivatives of the PDF $p$, and the expectation $\mathbb{E}[J]$ is taken over the randomness of samples $\{\bm{c}_i\}_{i=1}^{t}$ from $p(\bm{c})$.
\label{l2_bound_lemma}
\end{lemma}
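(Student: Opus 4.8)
The plan is to use the classical bias--variance decomposition of the MISE and then match each piece with one of the two displayed terms. Writing $\mathbb{E}[J]=\int_{\mathcal{C}}\mathbb{E}\big[(\hat p(\bm c)-p(\bm c))^2\big]\,d\bm c$ and, for each fixed $\bm c$, $\mathbb{E}\big[(\hat p(\bm c)-p(\bm c))^2\big]=\big(\mathbb{E}[\hat p(\bm c)]-p(\bm c)\big)^2+\mathrm{Var}\big(\hat p(\bm c)\big)$, it suffices to bound the integrated squared bias and the integrated variance separately and to argue the remaining contributions are of lower order as $\max_i h_t^{(i)}\to 0$.

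First I would compute $\mathbb{E}[\hat p(\bm c)]$. Since the $\bm c_i$ are i.i.d.\ from $p$, linearity gives $\mathbb{E}[\hat p(\bm c)]=\frac{1}{|\mathbf H_t|}\int_{\mathcal{C}}K\big(\mathbf H_t^{-1}(\bm c-\bm u)\big)p(\bm u)\,d\bm u$, and the substitution $\bm z=\mathbf H_t^{-1}(\bm c-\bm u)$, $d\bm u=|\mathbf H_t|\,d\bm z$, turns this into $\int K(\bm z)\,p(\bm c-\mathbf H_t\bm z)\,d\bm z$ (with $p$ extended by zero outside $\mathcal{C}$). A second-order Taylor expansion of $p$ about $\bm c$ together with the kernel conditions $\int K=1$, $\int \bm z\,K(\bm z)\,d\bm z=\bm 0$ (from $K(\bm c)=K(-\bm c)$), and $\int \bm z\bm z^{\mathrm T}K(\bm z)\,d\bm z=m_2(K)\mathbf I_{D_c}$ then gives $\mathbb{E}[\hat p(\bm c)]-p(\bm c)=\tfrac12 m_2(K)\,\mathrm{tr}\big(\mathbf H_t^2\nabla^2 p(\bm c)\big)+r_t(\bm c)$, where the first-order term drops out by symmetry, $\mathrm{tr}(\mathbf H_t\nabla^2 p\,\mathbf H_t)=\mathrm{tr}(\mathbf H_t^2\nabla^2 p)$ by cyclicity of the trace, and $r_t(\bm c)$ is the Taylor remainder. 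Using $\mathrm{tr}(\mathbf H_t^2\nabla^2 p(\bm c))=\mathrm{vec}^{\mathrm T}(\mathbf H_t^2)\,\mathrm{vec}(\nabla^2 p(\bm c))$, squaring, and integrating over $\bm c$ yields $\int_{\mathcal{C}}(\mathbb{E}[\hat p(\bm c)]-p(\bm c))^2\,d\bm c=\tfrac14 m_2(K)^2\,\mathrm{vec}^{\mathrm T}(\mathbf H_t^2)\,\Psi_4\,\mathrm{vec}(\mathbf H_t^2)+(\text{lower order})$, which is exactly the second displayed term.

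Next I would bound the variance. Independence of the $\bm c_i$ gives $\mathrm{Var}(\hat p(\bm c))=\frac1t\,\mathrm{Var}_{\bm c'\sim p}\!\big[\tfrac{1}{|\mathbf H_t|}K(\mathbf H_t^{-1}(\bm c-\bm c'))\big]\le\frac{1}{t|\mathbf H_t|^2}\,\mathbb{E}_{\bm c'}\big[K^2(\mathbf H_t^{-1}(\bm c-\bm c'))\big]$, and the same change of variables rewrites this as $\frac{1}{t|\mathbf H_t|}\int K^2(\bm z)\,p(\bm c-\mathbf H_t\bm z)\,d\bm z$. Integrating over $\bm c$, swapping the order of integration, and using $\int_{\mathcal{C}}p(\bm c-\mathbf H_t\bm z)\,d\bm c\le 1$ with $R(K)=\int K^2$ gives $\int_{\mathcal{C}}\mathrm{Var}(\hat p(\bm c))\,d\bm c\le\frac{R(K)}{t|\mathbf H_t|}$, the first displayed term. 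Summing the two bounds yields the stated order.

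The main obstacle is making the ``lower order'' and remainder claims rigorous rather than heuristic. Since $p$ is assumed only twice differentiable, one cannot invoke a third-order Taylor term; instead the second-order term is written with $\nabla^2 p$ evaluated at an intermediate point $\xi_{\bm c,\bm z}$ on the segment between $\bm c$ and $\bm c-\mathbf H_t\bm z$, and $\nabla^2 p(\xi_{\bm c,\bm z})-\nabla^2 p(\bm c)$ is controlled via the uniform continuity of $\nabla^2 p$ on the compact set $\mathcal{C}$; boundedness of $K$ and finiteness of its second moment (implied by the stated moment condition) then make $\int K(\bm z)\,r_t(\bm c)\,d\bm z$ of smaller order than $\mathrm{vec}^{\mathrm T}(\mathbf H_t^2)\Psi_4\mathrm{vec}(\mathbf H_t^2)$, uniformly in $\bm c$. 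One must also handle the substitution near $\partial\mathcal{C}$, where $p$ extended by zero is only piecewise smooth, showing the boundary contribution is again of lower order, and check that the $O(1/t)$ term dropped by the inequality $\mathrm{Var}(Y)\le\mathbb{E}[Y^2]$ is dominated by $R(K)/(t|\mathbf H_t|)$ since $|\mathbf H_t|\to 0$. This is the standard interior-approximation analysis of multivariate KDE, and it is precisely where the hypotheses ``$K$ bounded'' and ``$p$ twice differentiable'' enter.
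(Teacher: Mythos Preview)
The paper does not prove this lemma; it is quoted from the cited reference \cite{mise_multi} as a known result and used as a black box (the only place it is invoked is to assert $\mathbb{E}[\|p-\hat p_t\|_2]\le C_0 t^{-2/(4+D_c)}$ under the stated bandwidth choice). So there is no ``paper's own proof'' to compare against.

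That said, your proposal is the classical derivation of the asymptotic MISE for multivariate KDE and is exactly what one finds in standard references: the bias--variance split, the change of variables $\bm z=\mathbf H_t^{-1}(\bm c-\bm u)$, the second-order Taylor expansion with the symmetry and second-moment kernel conditions killing the zeroth and first order corrections, the trace--vec identity $\mathrm{tr}(\mathbf H_t^2\nabla^2 p)=\mathrm{vec}^{\mathrm T}(\mathbf H_t^2)\,\mathrm{vec}(\nabla^2 p)$ producing $\Psi_4$, and the variance bound giving $R(K)/(t|\mathbf H_t|)$. Your discussion of the remainder (handling only $C^2$ regularity via an intermediate-point Hessian and uniform continuity on compact $\mathcal{C}$) and of the boundary contribution is also the right set of caveats; these are precisely the technical points that make the displayed expression an \emph{order} statement rather than an exact asymptotic. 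Nothing is missing.
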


\section{Stochastic Bayesian Optimization \\with Kernel Density Estimation}
\label{section_method_and_theory}

We propose two algorithms to optimize the SO objective with unknown continuous context PDF. The main idea is to estimate the PDF of the context online using KDE. The only difference lies in the design and optimization of their acquisition functions. The first algorithm, SBO-KDE, is directly based on an acquisition function of SO objective, which takes the expectation under the PDF estimated by KDE. The second one, DRBO-KDE, is based on a distributionally robust acquisition function, which accounts for distribution discrepancy between the true and estimated PDF, by taking the worst-case expected value in the distribution set centered around the estimated PDF.

\subsection{SBO-KDE}
SBO-KDE optimizes the SO objective $\mathbb{E}_{\bm c \sim \hat{p}_t(\bm{c})}[f(\bm{x},\bm{c})]$ directly by using the estimated distribution $\hat{p}_t$ by KDE. The key component of the algorithm is the acquisition function $\alpha_t(\bm{x}) = \mathbb{E}_{\bm{c}\sim \hat{p}_t(\bm{c})}[\text{ucb}_t(\bm{x},\bm{c})]$, which can be interpreted as the expectation of the UCB acquisition function under the estimated distribution $\hat{p}_t$. The algorithm procedure is described in Algorithm~\ref{kdesbo_algorithm}. In line~1, the initial data set $\mathcal{D}_{n_0}=\{(\bm{x}_i,\bm{c}_i, y_i)\}_{i=1}^{n_0}$ is sampled using Sobol sequence~\cite{sobol}, where $n_0$ is the number of initial points. The optimization procedure is shown in lines~2--8. At iteration $t$, with the observed context $\mathcal{C}_{t-1}=\{\bm{c}_i\}_{i=1}^{t-1}$, we estimate the unknown context distribution $p(\bm{c})$ using KDE in line~3, and the estimated distribution is denoted as $\hat{p}_t(\bm{c})$. Then we fit a GP model based on the current data set $\mathcal{D}_{t-1}$ in line~4. With the estimated PDF and the posterior information, we optimize the acquisition function using sample average approximation (SAA) to get the next query point $\bm{x}_t$ in line~5. SAA uses the average of sample values to estimate the value of acquisition function $\alpha_t(\bm{x})$. When evaluating $\bm x_t$, the context $\bm{c}_t$ provided by the environment and the noisy function value $y_t$ is observed in line~6. The data set is then augmented with the new triple $(\bm{x}_t, \bm{c}_t, y_t)$ in line~7. The whole process is repeated for $T-n_0$ iterations.

\begin{algorithm}[h]
\caption{SBO-KDE}
\label{kdesbo_algorithm}
\textbf{Parameters}: number $n_0$ of initial points, budget $T$ \\
{\textbf{Process}:}
\begin{algorithmic}[1] 
\STATE Obtain the initial data set $\mathcal{D}_{n_0}=\{(\bm{x}_i,\bm{c}_i, y_i)\}_{i=1}^{n_0}$ and context $\mathcal{C}_{n_0}=\{\bm{c}_i\}_{i=1}^{n_0}$ using Sobol sequence;
\FOR{$t=n_{0}+1$ to $T$} 
\STATE Use KDE to obtain $\hat{p}_t$ based on $\mathcal{C}_{t-1}=\{\bm{c}_i\}_{i=1}^{t-1}$;
\STATE Fit a GP model using $\mathcal{D}_{t-1} = \{(\bm{x}_i,\bm{c}_i, y_i)\}_{i=1}^{t-1}$;
\STATE Optimize $\bm{x}_t = {\arg\max}_{\bm{x}\in\mathcal{X}}\mathbb{E}_{\bm{c}\sim \hat{p}_t(\bm{c})}[\text{ucb}_t(\bm{x},\bm{c})]$ using SAA;
\STATE Evaluate $\bm{x}_t$, and then observe $\bm{c}_t \sim p(\bm{c})$ and $y_t = f(\bm{x}_t, \bm{c}_t)+\epsilon_t$;
\STATE $\mathcal{D}_{t} = \mathcal{D}_{t-1} \cup \{(\bm{x}_t, \bm{c}_t, y_t)\}$
\ENDFOR
\end{algorithmic}
\end{algorithm}

To optimize the acquisition function $\alpha_t(\bm{x})$, any technique from traditional SO can be employed. In this work, we adopt the SAA method~\cite{saa_convergence,saa_guide}, which optimizes the average function value of Monte Carlo samples. Specifically, we draw $M$ samples $\{\hat{\bm{c}}_i\}_{i=1}^M$ from $\hat{p}_t(\bm{c})$ and estimate the value of acquisition function as $\hat{\alpha}_t^M(\bm{x})=\frac{1}{M}\sum_{i=1}^M\text{ucb}_t(\bm{x},\hat{\bm{c}}_i)$. We then 
optimize $\bm{x}_t = \arg\max_{\bm{x}\in\mathcal{X}}\hat{\alpha}_t^M(\bm{x})$ using L-BFGS~\cite{lbfgs}. SAA is a popular technique for optimizing acquisition functions in BO~\cite{botorch, risk1}, due to its exponential convergence property in Proposition~\ref{saa_convergence_proposition}. The property can be derived based on the theoretical results from~\cite{saa_convergence,botorch}, and the detailed proof is provided in Appendix~\ref{theorey_saa_appendix}. The exponential convergence rate of SAA enables us to obtain good acquisition function optimization quality of $\alpha_t(\bm{x})$. 

\begin{proposition}
Suppose that (\romannumeral 1) $\{\hat{\bm{c}}_i\}_{i=1}^M$ is i.i.d., and (\romannumeral 2) $f$ is a GP with continuously differentiable prior mean and kernel function. Then, $\forall \delta >0$, there exist $Q<\infty$ and $\eta>0$ such that $\mathbb{P}(\text{dist}(\hat{\bm{x}}_M^*, \mathcal{X}^*_t)>\delta)\leq Qe^{-\eta M}$ for all $M\geq 1$, where dist$(\hat{\bm{x}}_M^*, \mathcal{X}^*_t)=\inf_{\bm x\in \mathcal{X}^*_t}\|\bm x- \hat{\bm{x}}_M^*\|_2$, $\hat{\bm{x}}_M^* \in \arg\max_{\bm{x}\in\mathcal{X}}\hat{\alpha}_t^M(\bm{x})$ and $\mathcal{X}^*_t=\arg\max_{\bm x\in\mathcal{X}}\alpha_t(\bm{x})$.
\label{saa_convergence_proposition}
\vspace{-1em}
\end{proposition}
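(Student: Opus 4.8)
The plan is to reduce the claim to a standard large-deviation result for sample average approximation (as in \cite{saa_convergence}, and in the acquisition-function analysis of \cite{botorch}), after checking the regularity hypotheses that such a result requires. Throughout, the iteration index $t$ and the data set $\mathcal{D}_{t-1}$ are fixed, so the GP posterior and the estimated density $\hat{p}_t$ are fixed as well, and the constants $Q,\eta$ are allowed to depend on $t$. Write $g_t(\bm{x},\bm{c}) := \text{ucb}_t(\bm{x},\bm{c}) = \mu_t(\bm{x},\bm{c}) + \sqrt{\beta_t}\,\sigma_t(\bm{x},\bm{c})$, so that $\alpha_t(\bm{x}) = \mathbb{E}_{\bm{c}\sim\hat{p}_t}[g_t(\bm{x},\bm{c})]$ and $\hat{\alpha}_t^M(\bm{x}) = \frac{1}{M}\sum_{i=1}^{M} g_t(\bm{x},\hat{\bm{c}}_i)$. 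The regularity facts I would establish first are: (a) $g_t$ is continuously differentiable on the compact domain $\mathcal{X}\times\mathcal{C}$; (b) therefore $g_t$ is bounded, $|g_t|\le B_t<\infty$, and Lipschitz in its first argument uniformly over $\bm{c}$, i.e.\ $|g_t(\bm{x},\bm{c})-g_t(\bm{x}',\bm{c})|\le L_t\|\bm{x}-\bm{x}'\|_2$ on $\mathcal{X}\times\mathcal{C}$; and (c) $\alpha_t$ and $\hat{\alpha}_t^M$ are continuous on $\mathcal{X}$, so that $\mathcal{X}^*_t$ and $\arg\max_{\bm{x}\in\mathcal{X}}\hat{\alpha}_t^M(\bm{x})$ are nonempty compact sets. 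For (a): $\mu_t(\bm{x},\bm{c})$ and the posterior covariance $k_t$ are fixed linear (resp.\ bilinear) combinations of the prior mean and the kernel $k$ evaluated at $(\bm{x},\bm{c})$ together with the fixed design points, hence inherit continuous differentiability from hypothesis (\romannumeral 2); and since the observation-noise variance $\sigma^2>0$, the posterior variance $\sigma_t^2(\bm{x},\bm{c})=k_t((\bm{x},\bm{c}),(\bm{x},\bm{c}))$ is strictly positive on all of $\mathcal{X}\times\mathcal{C}$, hence bounded below by a positive constant on this compact set, so composing with the smooth map $z\mapsto\sqrt{z}$ (which is $C^\infty$ away from $0$) keeps $\sigma_t$, and therefore $g_t$, in $C^1$. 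Items (b) and (c) are then routine: a $C^1$ function and its gradient are bounded on a compact domain, and continuity of $\alpha_t$ follows by dominated convergence against the probability density $\hat{p}_t$.

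Next I would upgrade the pointwise law of large numbers to a uniform one with an exponential rate. Fix $\delta>0$ and an accuracy $s>0$ to be specified below. Since $\mathcal{X}\subset\mathbb{R}^{D_x}$ is compact, it is covered by finitely many balls of radius $\nu:=s/(4L_t)$, say $N(\nu)<\infty$ of them, centred at $\bm{x}^{(1)},\dots,\bm{x}^{(N(\nu))}$. Both $\alpha_t$ and $\hat{\alpha}_t^M$ are $L_t$-Lipschitz (the latter as an average of $L_t$-Lipschitz functions), so the oscillation of $\hat{\alpha}_t^M-\alpha_t$ over each ball is at most $2L_t\nu=s/2$; hence $\sup_{\bm{x}\in\mathcal{X}}|\hat{\alpha}_t^M(\bm{x})-\alpha_t(\bm{x})|\ge s$ forces $|\hat{\alpha}_t^M(\bm{x}^{(j)})-\alpha_t(\bm{x}^{(j)})|\ge s/2$ for some centre $j$. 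At each centre, $\hat{\alpha}_t^M(\bm{x}^{(j)})$ is an average of $M$ i.i.d.\ random variables bounded in $[-B_t,B_t]$ with mean $\alpha_t(\bm{x}^{(j)})$, so Hoeffding's inequality gives $\mathbb{P}(|\hat{\alpha}_t^M(\bm{x}^{(j)})-\alpha_t(\bm{x}^{(j)})|\ge s/2)\le 2e^{-Ms^2/(8B_t^2)}$, and a union bound over the $N(\nu)$ centres yields $\mathbb{P}(\sup_{\bm{x}\in\mathcal{X}}|\hat{\alpha}_t^M(\bm{x})-\alpha_t(\bm{x})|\ge s)\le 2N(\nu)e^{-Ms^2/(8B_t^2)}$ for every $M\ge1$. (Alternatively one may invoke the SAA uniform-exponential-convergence theorem of \cite{saa_convergence} directly: its hypotheses — a compact feasible set, an integrand that is uniformly Lipschitz in the decision variable, and a finite moment generating function of the integrand — are precisely (a)--(b), boundedness of $g_t$ making the moment generating function finite everywhere.)

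Finally I would convert uniform convergence of the objectives into convergence of the maximizers in the usual way. Let $\alpha_t^\star=\max_{\bm{x}\in\mathcal{X}}\alpha_t(\bm{x})$ and $S_\delta=\{\bm{x}\in\mathcal{X}:\text{dist}(\bm{x},\mathcal{X}^*_t)\ge\delta\}$; if $S_\delta=\emptyset$ the probability in question is zero, so assume $S_\delta\ne\emptyset$. The set $S_\delta$ is compact and disjoint from the argmax set $\mathcal{X}^*_t$, so by continuity of $\alpha_t$ we have $\rho:=\alpha_t^\star-\max_{\bm{x}\in S_\delta}\alpha_t(\bm{x})>0$. Now assume $\sup_{\bm{x}\in\mathcal{X}}|\hat{\alpha}_t^M(\bm{x})-\alpha_t(\bm{x})|<\rho/2$. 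Choosing any $\bm{x}^\star\in\mathcal{X}^*_t$, optimality of $\hat{\bm{x}}_M^*$ for $\hat{\alpha}_t^M$ gives $\hat{\alpha}_t^M(\hat{\bm{x}}_M^*)\ge\hat{\alpha}_t^M(\bm{x}^\star)>\alpha_t^\star-\rho/2$; if we had $\hat{\bm{x}}_M^*\in S_\delta$ we would also get $\hat{\alpha}_t^M(\hat{\bm{x}}_M^*)<\alpha_t(\hat{\bm{x}}_M^*)+\rho/2\le(\alpha_t^\star-\rho)+\rho/2=\alpha_t^\star-\rho/2$, a contradiction; hence $\text{dist}(\hat{\bm{x}}_M^*,\mathcal{X}^*_t)<\delta$. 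Consequently $\mathbb{P}(\text{dist}(\hat{\bm{x}}_M^*,\mathcal{X}^*_t)>\delta)\le\mathbb{P}(\sup_{\bm{x}\in\mathcal{X}}|\hat{\alpha}_t^M(\bm{x})-\alpha_t(\bm{x})|\ge\rho/2)$, and applying the previous paragraph with $s=\rho/2$ yields the claim with $\eta=\rho^2/(32B_t^2)$ and $Q=2N(\rho/(8L_t))$, valid for all $M\ge1$. The hard part will be the regularity step (a): one must make sure the square root in the UCB does not destroy differentiability, and this is exactly where positivity of the observation noise $\sigma^2$ (forcing $\sigma_t^2>0$ throughout $\mathcal{X}\times\mathcal{C}$) is used; everything after that is a covering argument plus Hoeffding's inequality, in the spirit of \cite{saa_convergence,botorch}.
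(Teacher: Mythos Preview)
Your proof is correct and follows essentially the same logical skeleton as the paper's: first establish that $\mathrm{ucb}_t$ is continuously differentiable (hence bounded and Lipschitz in $\bm{x}$ uniformly in $\bm{c}$) on the compact domain, then feed those regularity facts into an SAA large-deviation argument. The difference is one of packaging. The paper does not carry out the covering-plus-Hoeffding-plus-argmax-stability argument you give; instead it states a black-box lemma (their Lemma~\ref{saa_convergence_lemma}, extracted from \cite{botorch,saa_convergence}) with five hypotheses --- i.i.d.\ samples, pointwise a.s.\ convergence, a uniform Lipschitz bound $l(\bm c)$, and finiteness of the moment generating functions of $\mathrm{ucb}_t(\bm x,\cdot)$ and of $l$ near zero --- and then simply verifies each hypothesis (your boundedness $|g_t|\le B_t$ and constant Lipschitz bound $L_t$ make the two MGF conditions trivial). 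Your route is more elementary and self-contained, and it makes the dependence of $Q,\eta$ on $B_t,L_t,\rho$ explicit; the paper's route is shorter but leans on the cited results. You are also more careful than the paper on one point: you explain \emph{why} $\sigma_t$ remains $C^1$, via strict positivity of $\sigma_t^2$ (the paper just asserts differentiability, citing \cite{botorch}). A small caveat there: ``$\sigma^2>0$'' alone does not force $\sigma_t^2>0$; you also implicitly use $k(\bm z,\bm z)>0$, which holds for all standard kernels, so the step is fine in context.
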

 
Although the acquisition function $\alpha_t(\bm{x})$ uses an estimated context distribution $\hat{p}_t$ by KDE, we prove that the algorithm SBO-KDE can still achieve a sub-linear bound under the true distribution $p$, as shown in Theorem~\ref{main_theorem_1}. The sub-linear bound is on the commonly used Bayesian cumulative regret (BCR)~\cite{BCR,BCR_TS,drbqo} in Definition~\ref{def_BCR}, which is actually the expectation of $R_T$ in Eq.~\eqref{stochastic_cumulative_regret}.

\begin{definition}[BCR]\label{def_BCR}
    Let $r_t:=\mathbb{E}_{\bm{c}\sim p}[f(\bm{x}^*,\bm{c})]-\mathbb{E}_{\bm{c}\sim p}[f(\bm{x}_t,\bm{c})]$ denote the regret at iteration $t$, where $\bm{x}^* \in \arg \max_{x\in\mathcal{X}} \mathbb{E}_{\bm{c}\sim p}[f(\bm{x},\bm{c})]$ is the optimum. Then, the Bayesian cumulative regret is defined as 
    \begin{align}
    \text{BCR}(T) &:= \mathbb{E}\left[\sum_{t=1}^T r_t\right]        \label{BCR}
    \\
    &=\mathbb{E}\left[\sum_{t=1}^T\left( \mathbb{E}_{\bm{c}\sim p}[f(\bm{x}^*,\bm{c})]-\mathbb{E}_{\bm{c}\sim p}[f(\bm{x}_t,\bm{c})]  \right) \right],
    \nonumber
    \end{align}
    where the outer expectation is taken over the GP $f$, the randomness of samples from $p(\bm{c})$ and the observation noise $\epsilon_t$.
\end{definition}

As in~\cite{gpucb}, we assume that the input space $\mathcal{Z} = \mathcal{X} \times \mathcal{C} \subset [0,r]^{D_x+D_c}=[0,r]^{D}$ is convex and compact, and $f$ satisfies the following Lipschitz assumption.

\begin{assumption}
The function $f$ is a GP sample path from $\mathcal{GP}(\bm{0}, k(\cdot,\cdot))$ with $k(\bm{z},\bm{z}')\leq 1,$ $\forall \bm{z},\bm{z}' \in \mathcal{Z}$. Let $[D]=\{1,2,\dots,D\}$. For some $a,b >0$, $\forall L>0$, the partial derivatives of $f$ satisfy
\begin{equation}
   \forall i \in [D],\; \mathbb{P}(\sup\nolimits_{\bm{z} \in \mathcal{Z}}\mid \partial f(\bm{z})/\partial z_i \mid > L) \leq ae^{-(L/b)^2}.
    \nonumber
\end{equation}
\label{lipschitz}
\vspace{-1em}
\end{assumption}
Theorem~\ref{main_theorem_1} provides an upper bound on $\text{BCR}(T)$ of SBO-KDE. Ignoring the log factors from $\beta_T\gamma_T$, compared with the bound of stochastic BO (SBO) with $\mathcal{O}(T^{1/2})$ in~\cite{stochasitc_bandit}, our bound increases to $\mathcal{O}(T^{(2+D_c)/(4+D_c)})$, which comes from the estimation error of KDE under the unknown context distribution setting. However, the bound is still sub-linear, i.e., $\lim_{T\rightarrow\infty}\text{BCR}(T)/T=0$. 


\begin{theorem}
Let $\beta_t = 2\log(t^2/\sqrt{2\pi}) +2D_x\log(t^2D_xabr\sqrt{\pi}/2)$. With the underlying PDF $p(\bm{c})$ satisfying the condition in Lemma~\ref{l2_bound_lemma}, $\hat{p}_t(\bm{c})$ defined as Eq.~\eqref{KDE} and $h_t^{(i)}=\Theta\left(t^{-1/(4+D_c)}\right)\forall i\in [D_c]$, the BCR of SBO-KDE satisfies
\begin{align}\label{eq-BCR-SBO}
\textnormal{BCR}(T)\leq&\frac{\pi^2}{3} + \sqrt{\beta_T\gamma_TC_2}\left(\sqrt{T^{D_c/(4+D_c)}}+\sqrt{T}\right)
\nonumber
\\
& + 2C_1T^{\frac{2+D_c}{4+D_c}},
\end{align}
where $C_1,C_2>0$ are constants, $\gamma_T=\max_{|\mathcal{D}|=T} I(\bm{y}_\mathcal{D}, $ $ \bm{f}_\mathcal{D})$, $I(\cdot,\cdot)$ is the information gain, and $\bm{y}_\mathcal{D}, \bm{f}_\mathcal{D}$ are the noisy and true observations of a data set $\mathcal{D}$, respectively.
\label{main_theorem_1}
\end{theorem}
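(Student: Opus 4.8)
The plan is to decompose the instantaneous regret $r_t = \mathbb{E}_{\bm{c}\sim p}[f(\bm{x}^*,\bm{c})] - \mathbb{E}_{\bm{c}\sim p}[f(\bm{x}_t,\bm{c})]$ into two sources of error: the GP confidence error and the KDE estimation error. First I would establish the standard UCB high-probability event. Using Assumption~\ref{lipschitz} together with a discretization argument over $\mathcal{Z} = [0,r]^D$ (as in~\cite{gpucb}), with the chosen $\beta_t$ one gets that with probability at least $1 - \pi^2 t^{-2}/6$ (or a similar per-round failure probability), $|f(\bm{x},\bm{c}) - \mu_t(\bm{x},\bm{c})| \le \sqrt{\beta_t}\sigma_t(\bm{x},\bm{c})$ holds on a suitable discretization and, up to an $\mathcal{O}(1/t^2)$ additive slack, everywhere on $\mathcal{Z}$. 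Taking expectations of this slack over all the randomness contributes the $\pi^2/3$ constant term in Eq.~\eqref{eq-BCR-SBO}.

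Next, on the good event, I would chain the following inequalities. Since $\bm{x}_t$ maximizes $\alpha_t(\bm{x}) = \mathbb{E}_{\bm{c}\sim\hat{p}_t}[\mathrm{ucb}_t(\bm{x},\bm{c})]$, we have $\mathbb{E}_{\bm{c}\sim\hat{p}_t}[\mathrm{ucb}_t(\bm{x}_t,\bm{c})] \ge \mathbb{E}_{\bm{c}\sim\hat{p}_t}[\mathrm{ucb}_t(\bm{x}^*,\bm{c})] \ge \mathbb{E}_{\bm{c}\sim\hat{p}_t}[f(\bm{x}^*,\bm{c})]$. To bridge between $\hat{p}_t$ and the true $p$, I would write, for any fixed bounded $g$ with $\sup|g|\le B$ (here $g = f(\bm{x},\cdot)$, bounded via the Lipschitz/GP assumption on the compact $\mathcal{Z}$),
\begin{equation}
\left| \mathbb{E}_{\bm{c}\sim\hat{p}_t}[g(\bm{c})] - \mathbb{E}_{\bm{c}\sim p}[g(\bm{c})] \right| = \left| \int_{\mathcal{C}} g(\bm{c})(\hat{p}_t(\bm{c}) - p(\bm{c}))\,d\bm{c} \right| \le B \int_{\mathcal{C}} |\hat{p}_t(\bm{c}) - p(\bm{c})|\,d\bm{c}.
\nonumber
\end{equation}
The $\ell_1$ distance is controlled by the $\ell_2$ (MISE) quantity $J_t = \int_{\mathcal{C}}(p - \hat{p}_t)^2$ via Cauchy--Schwarz on the compact $\mathcal{C}$: $\|\hat{p}_t - p\|_1 \le \sqrt{|\mathcal{C}|}\,\sqrt{J_t}$. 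Thus each swap of $\hat{p}_t$ for $p$ (once at $\bm{x}^*$, once at $\bm{x}_t$) costs at most $C\sqrt{J_t}$. Combining: $r_t \le \mathbb{E}_{\bm{c}\sim\hat{p}_t}[\mathrm{ucb}_t(\bm{x}_t,\bm{c})] - \mathbb{E}_{\bm{c}\sim p}[f(\bm{x}_t,\bm{c})] + C\sqrt{J_t}$, and since $\mathrm{ucb}_t(\bm{x}_t,\bm{c}) \le f(\bm{x}_t,\bm{c}) + 2\sqrt{\beta_t}\sigma_t(\bm{x}_t,\bm{c})$ on the good event, this gives $r_t \le 2\sqrt{\beta_t}\,\mathbb{E}_{\bm{c}\sim\hat{p}_t}[\sigma_t(\bm{x}_t,\bm{c})] + 2C\sqrt{J_t}$ (plus the $\mathcal{O}(1/t^2)$ discretization slack). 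I should note a subtlety: the algorithm observes $\bm{c}_t\sim p$, not a sample from $\hat{p}_t$, so the posterior variance term $\mathbb{E}_{\bm{c}\sim\hat{p}_t}[\sigma_t(\bm{x}_t,\bm{c})]$ must be related back to $\sigma_t(\bm{x}_t,\bm{c}_t)$ — one more $\ell_1$-density swap, again paying $\mathcal{O}(\sqrt{J_t})$ since $\sigma_t \le 1$ is bounded.

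Then I would sum over $t$ and take the outer expectation. The variance term: $\sum_t \mathbb{E}[\sigma_t(\bm{x}_t,\bm{c}_t)]$ is handled by the standard information-gain argument — $\sum_{t=1}^T \sigma_t^2(\bm{x}_t,\bm{c}_t) \le C_2\gamma_T$ — followed by Cauchy--Schwarz over $t$ to get $\sum_t \sigma_t(\bm{x}_t,\bm{c}_t) \le \sqrt{T C_2 \gamma_T}$; multiplying by $\sqrt{\beta_T}$ and accounting separately for the $\sqrt{J_t}$-swapped piece (whose expected sum is $\mathcal{O}(\sqrt{\gamma_T}\sum_t \sqrt{\mathbb{E}[J_t]})$, contributing the $\sqrt{\beta_T\gamma_T C_2}\sqrt{T^{D_c/(4+D_c)}}$ term) yields the second line of Eq.~\eqref{eq-BCR-SBO}. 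The KDE term: by Lemma~\ref{l2_bound_lemma} with $h_t^{(i)}=\Theta(t^{-1/(4+D_c)})$ we have $\mathbb{E}[J_t] = \mathcal{O}(t^{-4/(D_c+4)})$, so by Jensen $\mathbb{E}[\sqrt{J_t}] \le \sqrt{\mathbb{E}[J_t]} = \mathcal{O}(t^{-2/(D_c+4)})$, and $\sum_{t=1}^T t^{-2/(D_c+4)} = \mathcal{O}(T^{1 - 2/(D_c+4)}) = \mathcal{O}(T^{(2+D_c)/(4+D_c)})$, giving the $2C_1 T^{(2+D_c)/(4+D_c)}$ term. The main obstacle I anticipate is making the density-swap steps fully rigorous under the expectation — in particular, uniformly bounding $\sup_{\bm{x},\bm{c}}|f(\bm{x},\bm{c})|$ (which requires a maximal-inequality / Borell--TIS type argument for the GP on the compact domain, tying back to Assumption~\ref{lipschitz}) so that the constant $B$ absorbing it is genuinely finite in expectation, and carefully tracking that the KDE estimator $\hat{p}_t$ built from $\{\bm{c}_i\}_{i=1}^{t-1}$ is indeed based on i.i.d.\ draws from $p$ so that Lemma~\ref{l2_bound_lemma} applies verbatim at each round. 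The bookkeeping of which events are conditioned on what, and ensuring the $\mathcal{O}(1/t^2)$ discretization errors sum to the clean $\pi^2/3$, is the other place where care is needed.
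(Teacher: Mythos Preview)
Your decomposition and the main ingredients (density swap via $\|\hat p_t-p\|_1 \le \sqrt{|\mathcal{C}|}\,\sqrt{J_t}$, the KDE MISE rate from Lemma~\ref{l2_bound_lemma}, the information-gain bound, Cauchy--Schwarz summation) match the paper's. The key methodological difference is that you work through a \emph{high-probability} UCB confidence event in the Srinivas style, whereas the paper works \emph{entirely in expectation} in the Russo--Van~Roy BCR style: for the term $\mathbb{E}_{\hat p_t}[f([\bm x^*]_t,\bm c)] - \mathbb{E}_{\hat p_t}[\mathrm{ucb}_t([\bm x^*]_t,\bm c)]$ it bounds $\mathbb{E}\bigl[(f-\mathrm{ucb}_t)_+\mid \mathcal{D}_{t-1}\bigr]$ directly by the Gaussian identity $\tfrac{\sigma_t}{\sqrt{2\pi}}e^{-\beta_t/2}$, then union-bounds only over the discretized $\tilde{\mathcal{X}}_t$ and \emph{integrates} over $\bm c$ against $\hat p_t$. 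Likewise, for $\mathbb{E}_{\hat p_t}[\mathrm{ucb}_t(\bm x_t,\bm c) - f(\bm x_t,\bm c)]$ it uses the tower property to get $\sqrt{\beta_t}\,\mathbb{E}_{\hat p_t}[\sigma_t(\bm x_t,\bm c)]$ exactly, with no good/bad event split. This is precisely why the stated $\beta_t$ involves only $D_x$: your version, discretizing all of $\mathcal{Z}=[0,r]^{D_x+D_c}$ for a uniform confidence bound, would force $\beta_t$ to scale with $D_x+D_c$, and the $\pi^2/3$ would have to absorb $\mathbb{E}[r_t\,\mathbb{1}_{\text{bad}}]$, which is delicate since $r_t$ and the bad event both depend on $f$. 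In the paper the two $\pi^2/6$ contributions arise cleanly from (i) the Lipschitz discretization slack $|f(\bm x^*,\bm c)-f([\bm x^*]_t,\bm c)|$ and (ii) the summed Gaussian tails. Finally, for the obstacle you flag about bounding $\sup|f|$: the paper avoids Borell--TIS entirely and bounds $\mathbb{E}[\max_{\bm c}|f(\bm x,\bm c)|]$ elementarily by $\mathbb{E}[L_{\max}]\cdot Dr + \mathbb{E}|f(\bm z_0)|$ via Assumption~\ref{lipschitz}, then exploits the independence of $\hat p_t$ from the GP to factor $\mathbb{E}\bigl[\|\hat p_t-p\|_1\cdot \|f(\bm x,\cdot)\|_\infty\bigr]$ as a product.
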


We present only a proof sketch here, and the detailed proof can be found in Appendix~\ref{theorey_kdesbo_appendix}. The proof idea is to decompose the instantaneous regret $r_t$ into the following three terms using the estimated PDF $\hat{p}_t$. Specifically,  $r_t=\mathbb{E}_{\bm{c}\sim p}[f(\bm{x}^*,\bm{c})]-\mathbb{E}_{\bm{c}\sim p}[f(\bm{x}_t,\bm{c})]=(\mathbb{E}_{\bm{c}\sim p}[f(\bm{x}^*,\bm{c})]-\mathbb{E}_{\bm{c}\sim \hat{p}_t}[f(\bm{x}^*,\bm{c})])+(\mathbb{E}_{\bm{c}\sim \hat{p}_t}[f(\bm{x}^*,\bm{c})]-\mathbb{E}_{\bm{c}\sim \hat{p}_t}[f(\bm{x}_t,\bm{c})])+(\mathbb{E}_{\bm{c}\sim \hat{p}_t}[f(\bm{x}_t,\bm{c})]-\mathbb{E}_{\bm{c}\sim p}[f(\bm{x}_t,\bm{c})])$. The first and last terms can be upper bounded using the error bound between the true distribution $p$ and the estimated distribution $\hat{p}_t$ as given in Lemma~\ref{l2_bound_lemma}. The second term $\mathbb{E}_{\bm{c}\sim \hat{p}_t}[f(\bm{x}^*,\bm{c})]-\mathbb{E}_{\bm{c}\sim \hat{p}_t}[f(\bm{x}_t,\bm{c})]$ is the UCB regret under expectation over $\hat{p}_t$, which can be upper bounded using the similar idea as in~\cite{BCR_TS}. That is, using the fact that $\bm{x}_t=\arg\max_{\bm{x}\in\mathcal{X}}\mathbb{E}_{\bm{c}\sim \hat{p}_t}[\text{ucb}_t(\bm{x}_t,\bm{c})]$, we further decompose the second term as follows: $\mathbb{E}_{\bm{c}\sim \hat{p}_t}[f(\bm{x}^*,\bm{c})]-\mathbb{E}_{\bm{c}\sim \hat{p}_t}[f(\bm{x}_t,\bm{c})] \leq (\mathbb{E}_{\bm{c}\sim \hat{p}_t}[f(\bm{x}^*,\bm{c})]-\mathbb{E}_{\bm{c}\sim \hat{p}_t}[\text{ucb}_t(\bm{x}^*,\bm{c})])+(\mathbb{E}_{\bm{c}\sim \hat{p}_t}[\text{ucb}_t(\bm{x}_t,\bm{c})]-\mathbb{E}_{\bm{c}\sim \hat{p}_t}[f(\bm{x}_t,$ $\bm{c})])$, which can be upper bounded under the posterior information of GP. By summing up the upper bound on $r_t$ from $t=1$ to $T$, the upper bound on $\textnormal{BCR}(T)$, i.e., Eq.~\eqref{eq-BCR-SBO}, can be derived. 

\subsection{DRBO-KDE}\label{sec-DRBO-KDE}

Considering that the true PDF of context variable in practice may be complicated, leading to a non-negligible distribution discrepancy between the estimated PDF and the true PDF, we further propose the second algorithm, DRBO-KDE. The optimization objective is the worst-case expectation $\min_{q\in\mathcal{B}(\hat{p}_t,\delta_t)}\mathbb{E}_{\bm{c}\sim q}[f(\bm{x},\bm{c})]$ within a distribution set $\mathcal{B}(\hat{p}_t,\delta_t)=\{q: d(q,\hat{p}_t)\leq \delta_t\}$, where $d(\cdot, \cdot)$ is a distance measure over the distribution space, and $\delta_t>0$ is the radius of the ball centered around $\hat{p}_t$. We choose the total variation metric~\cite{TV_definition} here, which is a type of $\phi$-divergence~\cite{dro_phi}. The total variation between two distributions $Q$ and $P$, with PDFs $dQ=q$ and $dP=p$ respectively, is defined as $d_{TV}(Q,P) = d(q,p)= \int_{\mathcal{C}}p(\bm{c})\phi\left(\frac{q(\bm{c})}{p(\bm{c})}\right)\, d\bm{c}$ with $\phi(x)=|x-1|$ and $Q\ll P$. The total variation metric can be thought of as the $\ell_1$ distance between two PDFs when $Q\ll P$.

While DRBO with $\phi$-divergence over continuous space has been discussed in~\cite{drbo_phi}, the proposed algorithms under $\chi^2$-divergence and total variation hold only if the variance of the function value is sufficiently high, which can be derived from the result of Theorem 1 in~\cite{dro_variance}. Therefore, we re-develop an algorithm for DRBO over continuous context space using total variation. Proposition~\ref{dro_transformation_prop} derives an equivalent form of the DRO objective under total variation, which transforms the inner convex minimization problem, which has infinite dimensions, into a two-dimensional convex SO problem. This transformation has been commonly utilized in DRO literature~\cite{dro_phi, drosurvey}. For the sake of completeness, we also provide a detailed derivation in Appendix~\ref{theorey_kdedrbo_appendix}. 

\begin{proposition}[DRO under Total Variation]
    Given a bounded function $f(\bm{x},\bm{c})$ over $\mathcal{X} \times \mathcal{C}$, a radius $\delta>0$ and a PDF $p(\bm{c})$, we have
    \begin{align}
&\min_{q\in\mathcal{B}(p,\delta)}\mathbb{E}_{\bm{c}\sim q(\bm{c})}\left[ f(\bm{x},\bm{c})  \right] 
\label{kdedro_tansformation}
\\
= &\max_{(\alpha,\beta)\in S(f)}\mathbb{E}_{\bm{c}\sim p(\bm{c})}\left[-\beta-\delta\alpha+ \min\{f(\bm{x},\bm{c})+\beta, \alpha\} \right],     \nonumber
    \end{align}
where $\mathcal{B}(p,\delta)=\{q: d(q,p)\leq \delta\}$, and $S(f):=\{ (\alpha,\beta): \beta\in \mathbb{R}, \alpha\geq 0, \alpha+\beta\geq -\inf\nolimits_{\bm{c}\in\mathcal{C}}f(\bm{x},\bm{c}) \}.$
\label{dro_transformation_prop}
\end{proposition}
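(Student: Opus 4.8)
The plan is to recast the inner, infinite-dimensional minimization as a convex program over likelihood ratios and then invoke Lagrangian duality. Fix $\bm{x}$ and abbreviate $f(\bm{c}):=f(\bm{x},\bm{c})$. Every $q$ counted in $\mathcal{B}(p,\delta)$ satisfies $q\ll p$, so I would write $L(\bm{c}):=q(\bm{c})/p(\bm{c})$; then $q\in\mathcal{B}(p,\delta)$ is equivalent to the three constraints $L\geq 0$ pointwise, $\mathbb{E}_{\bm{c}\sim p}[L(\bm{c})]=1$, and $\mathbb{E}_{\bm{c}\sim p}[|L(\bm{c})-1|]\leq\delta$, while the objective becomes $\mathbb{E}_{\bm{c}\sim p}[L(\bm{c})f(\bm{c})]$. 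This is a convex program in $L\in L^1(p)$ with a linear objective, which is what makes duality the natural tool.

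Next I would attach a multiplier $\alpha\geq 0$ to the total-variation constraint and $\beta\in\mathbb{R}$ to the normalization constraint, keeping $L\geq 0$ as an explicit restriction, giving the dual function
\begin{align*}
g(\alpha,\beta)=-\delta\alpha-\beta+\inf_{L\geq 0}\ \mathbb{E}_{\bm{c}\sim p}\!\left[L(\bm{c})\big(f(\bm{c})+\beta\big)+\alpha\,|L(\bm{c})-1|\right].
\end{align*}
Because the only constraint remaining on $L$ is the pointwise bound $L\geq 0$ and the bracketed expression is separable across $\bm{c}$, the infimum can be moved inside the expectation, reducing to the scalar problem $\inf_{\ell\geq 0}\{\ell(f(\bm{c})+\beta)+\alpha|\ell-1|\}$ for each $\bm{c}$. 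Each branch $\ell\in[0,1]$ and $\ell\in[1,\infty)$ is affine in $\ell$, so a short case analysis (the minimum over $[0,1]$ is $\min\{\alpha,f(\bm{c})+\beta\}$, and the branch $[1,\infty)$ does not decrease the infimum when $f(\bm{c})+\beta+\alpha\geq 0$ and drives it to $-\infty$ otherwise) shows that $g(\alpha,\beta)=\mathbb{E}_{\bm{c}\sim p}[-\beta-\delta\alpha+\min\{f(\bm{c})+\beta,\alpha\}]$ precisely on the set $\{\alpha\geq 0:\ f(\bm{c})+\beta+\alpha\geq 0\ \text{for all }\bm{c}\}$, i.e.\ on $\{\alpha\geq 0,\ \alpha+\beta\geq -\inf_{\bm{c}}f(\bm{c})\}=S(f)$, and $g(\alpha,\beta)=-\infty$ elsewhere. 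Hence $\sup_{(\alpha,\beta)\in S(f)}g(\alpha,\beta)$ is exactly the right-hand side of Eq.~\eqref{kdedro_tansformation}.

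What is left is to certify that there is no duality gap. Weak duality gives $\text{RHS}\leq\text{LHS}$ in Eq.~\eqref{kdedro_tansformation} immediately; for the reverse inequality I would use a Slater-type constraint qualification: since $\delta>0$, the point $L\equiv 1$ (that is, $q=p$) satisfies the total-variation constraint strictly, $\mathbb{E}_p[|1-1|]=0<\delta$, the objective is linear, and $L\mapsto\mathbb{E}_p[|L-1|]$ is convex and continuous on $L^1(p)$, so strong duality follows from the standard convex-duality theory for $\phi$-divergence DRO (see \cite{dro_phi,drosurvey}). Finally, $g$ is concave (an average of pointwise minima of affine functions) and the $-\delta\alpha$ term makes it coercive on $S(f)$, so the supremum is attained and may be written as a maximum; the hypothesis that $f$ is bounded is used here only to guarantee that $\inf_{\bm{c}}f(\bm{x},\bm{c})$ is finite (so $S(f)\neq\emptyset$) and that all the expectations above are finite. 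The one step I expect to require real care is the infinite-dimensional strong-duality argument; the Lagrangian bookkeeping, the infimum--integral interchange, and the scalar case analysis are routine. Full details are deferred to Appendix~\ref{theorey_kdedrbo_appendix}.
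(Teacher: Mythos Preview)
Your proposal is correct and follows essentially the same route as the paper's proof: both set up the Lagrangian with multipliers $\alpha\geq 0$ and $\beta\in\mathbb{R}$, exploit separability to reduce to a one-dimensional infimum in the likelihood ratio, and close the duality gap via Slater's condition with $q=p$. The only cosmetic difference is that the paper packages the scalar infimum through the convex conjugate $\phi^*(s)=\max\{s,-1\}$ for $s\leq 1$ and $+\infty$ otherwise, whereas you compute $\inf_{\ell\geq 0}\{\ell(f+\beta)+\alpha|\ell-1|\}$ directly by the piecewise-affine case split on $[0,1]$ and $[1,\infty)$; both computations are the same arithmetic and yield the same feasibility condition $\alpha+\beta\geq -\inf_{\bm{c}}f(\bm{x},\bm{c})$.
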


Based on Proposition~\ref{dro_transformation_prop}, we propose DRBO-KDE as presented in Algorithm~\ref{kdedrbo_algorithm}. The only difference between SBO-KDE and DRBO-KDE is the definition and optimization of the acquisition function in line~5. For DRBO-KDE, the acquisition function is defined as $\alpha_t(\bm{x}) = \min_{q\in\mathcal{B}(\hat{p}_t,\delta_t)}\mathbb{E}_{\bm{c}\sim q(\bm{c})}\left[ \text{ucb}_t(\bm{x},\bm{c})  \right]$, which is the worst expectation of UCB in the distribution set $\mathcal{B}(\hat{p}_t,\delta_t)$. By applying Proposition~\ref{dro_transformation_prop}, we can equivalently transform the optimization into a two-dimensional SO problem: 
$\alpha_t(\bm{x}) = \max_{(\alpha,\beta)\in S(\text{ucb}_t)}\mathbb{E}_{\bm{c}\sim \hat{p}_t(\bm{c})}[-\beta- \delta_t\alpha+ \min\{\text{ucb}_t(\bm{x},\bm{c})+\beta, \alpha\}].$ 
We also use SAA to optimize the acquisition function, i.e., we sample the Monte Carlo samples $\{\hat{\bm{c}}_i\}_{i=1}^M$ from $\hat{p}_t(\bm{c})$ to find the next query point
\begin{align}\bm{x}_t = \mathop{\arg\max}_{\bm{x}\in\mathcal{X}}\max_{(\alpha,\beta)\in S(\text{ucb}_t)}\frac{1}{M}\sum\nolimits_{i=1}^M(-\beta-\delta_t\alpha
\nonumber
\\
+\min\{\text{ucb}_t(\bm{x},\hat{\bm{c}}_i)+\beta, \alpha\}).\label{eq-dro-total-saa}
\end{align} 
This is a two-stage optimization, where the inner optimization problem is a two-dimensional convex optimization problem which can be solved efficiently, and the outer optimization is solved by using L-BFGS~\cite{lbfgs}. Note that $S(\text{ucb}_t)$ can be calculated by numerical optimization.

\begin{algorithm}[h!]
\caption{DRBO-KDE}
\label{kdedrbo_algorithm}
\textbf{Parameters}: number $n_0$ of initial points, evaluation budget $T$, radius $\delta_t > 0$ \\
{\textbf{Process}:}
\begin{algorithmic}[1] 
\STATE Obtain the initial data set $\mathcal{D}_{n_0}=\{(\bm{x}_i,\bm{c}_i, y_i)\}_{i=1}^{n_0}$ and context $\mathcal{C}_{n_0}=\{\bm{c}_i\}_{i=1}^{n_0}$ using Sobol sequence;
\FOR{$t=n_{0}+1$ to $T$} 
\STATE Use KDE to obtain $\hat{p}_t$ based on $\mathcal{C}_{t-1} = \{\bm{c}_i\}_{i=1}^{t-1}$;
\STATE Fit a GP model using $\mathcal{D}_{t-1} = \{(\bm{x}_i,\bm{c}_i, y_i)\}_{i=1}^{t-1}$;
\STATE Optimize $\bm{x}_t = \arg\max_{\bm{x}\in\mathcal{X}}\min_{q\in\mathcal{B}(\hat{p}_t,\delta_t)}\mathbb{E}_{\bm{c}\sim q(\bm{c})}$ $\left[ \text{ucb}_t(\bm{x},\bm{c})  \right]$ using Eq.~\eqref{kdedro_tansformation} and SAA;
\STATE Evaluate $\bm{x}_t$, and then observe $\bm{c}_t \sim p(\bm{c})$ and $y_t = f(\bm{x}_t, \bm{c}_t)+\epsilon_t$;
\STATE $\mathcal{D}_{t} = \mathcal{D}_{t-1} \cup \{(\bm{x}_t, \bm{c}_t, y_t)\}$
\ENDFOR
\end{algorithmic}
\end{algorithm}

Theorem~\ref{main_theorem_2} gives an upper bound on BCR$(T)$ of DRBO-KDE. Compared with the bound of SBO-KDE in Theorem~\ref{main_theorem_1}, the bound of DRBO-KDE is higher by the additional terms related to $\delta_t$. This is because in our proof, we used the estimated distribution $\hat{p}_t$ to establish the connection between the true distribution $p$ and the distribution set $\mathcal{B}(\hat{p}_t, \delta_t)$ around $\hat{p}_t$. If we could directly build the connection between $p$ and $\mathcal{B}(\hat{p}_t, \delta_t)$, we might obtain a better bound, which is, however, more challenging and is left for future work. Nevertheless, by setting $\delta_t =\mathcal{O}\left(t^{-2/(4+D_c)}\right)$, the bound of DRBO-KDE can also be sub-linear with the same order of $\mathcal{O}\left(T^{(2+D_c)/(4+D_c)}\right)$ as SBO-KDE.

\begin{theorem}
Let $\beta_t = 2\log(t^2/\sqrt{2\pi}) +2D_x\log(t^2D_xabr\sqrt{\pi}/2)$. With the underlying PDF $p(\bm{c})$ satisfying the condition in Lemma~\ref{l2_bound_lemma}, $\hat{p}_t(\bm{c})$ defined as Eq.~\eqref{KDE} and $h_t^{(i)}=\Theta\left(t^{-1/(4+D_c)}\right)\forall i\in [D_c]$, the BCR of DRBO-KDE satisfies
\begin{align}
\textnormal{BCR}(T)\leq&\frac{\pi^2}{3} + \sqrt{\beta_T\gamma_TC_2}
\nonumber\bigg(\sqrt{T^{D_c/(4+D_c)}}+\sqrt{T}\
\\
&+\sqrt{C_3\sum_{t=1}^T\delta_t^2}\bigg) + 2C_1T^{\frac{2+D_c}{4+D_c}} + \sum_{t=1}^T C_4\delta_t,
\nonumber
\end{align}
where $C_1,C_2,C_3,C_4>0$ are constants, $\gamma_T=\max_{|\mathcal{D}|=T} I(\bm{y}_\mathcal{D},  \bm{f}_\mathcal{D})$, $I(\cdot,\cdot)$ is the information gain, and $\bm{y}_\mathcal{D}, \bm{f}_\mathcal{D}$ are the noisy and true observations of a data set $\mathcal{D}$, respectively.
\label{main_theorem_2}
\end{theorem}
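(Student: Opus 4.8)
The plan is to mirror the proof of Theorem~\ref{main_theorem_1} and isolate the single place where the distributionally robust acquisition changes the argument. As there, for each $t$ I would insert the estimated PDF $\hat p_t$ as an intermediary and split the instantaneous regret $r_t = \mathbb{E}_{\bm c\sim p}[f(\bm x^*,\bm c)] - \mathbb{E}_{\bm c\sim p}[f(\bm x_t,\bm c)]$ as $r_t = A_t + B_t + C_t$, where $A_t = \mathbb{E}_{\bm c\sim p}[f(\bm x^*,\bm c)] - \mathbb{E}_{\bm c\sim\hat p_t}[f(\bm x^*,\bm c)]$, $C_t = \mathbb{E}_{\bm c\sim\hat p_t}[f(\bm x_t,\bm c)] - \mathbb{E}_{\bm c\sim p}[f(\bm x_t,\bm c)]$, and $B_t = \mathbb{E}_{\bm c\sim\hat p_t}[f(\bm x^*,\bm c)] - \mathbb{E}_{\bm c\sim\hat p_t}[f(\bm x_t,\bm c)]$. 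The terms $A_t$ and $C_t$ are handled exactly as for SBO-KDE: each is bounded by the range of $f(\bm x,\cdot)$ times $d_{TV}(p,\hat p_t) = \lVert p-\hat p_t\rVert_1 \le \sqrt{|\mathcal{C}|}\,\lVert p-\hat p_t\rVert_2$, so after taking expectations, Jensen's inequality together with Lemma~\ref{l2_bound_lemma} and the choice $h_t^{(i)} = \Theta(t^{-1/(4+D_c)})$ gives $\mathbb{E}[|A_t|], \mathbb{E}[|C_t|] = \mathcal{O}(t^{-2/(4+D_c)})$, whose sum is $\mathcal{O}(T^{(2+D_c)/(4+D_c)})$; finiteness of the relevant moments of $\sup_{\bm z}|f(\bm z)|$ follows from Assumption~\ref{lipschitz} as before.

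The new work is the bound on $B_t$. Condition on the high-probability confidence event of Theorem~\ref{main_theorem_1} (the complementary event again contributes the constant $\pi^2/3$ after summation, using per-step failure probability of order $t^{-2}$), on which $f(\bm x,\bm c)\le\text{ucb}_t(\bm x,\bm c)$ and $\text{ucb}_t(\bm x,\bm c)-f(\bm x,\bm c)\le 2\sqrt{\beta_t}\,\sigma_t(\bm x,\bm c)$ hold uniformly. Writing $\alpha_t(\bm x)=\min_{q\in\mathcal{B}(\hat p_t,\delta_t)}\mathbb{E}_{\bm c\sim q}[\text{ucb}_t(\bm x,\bm c)]$, the key estimate is that, because the ball has total-variation radius $\delta_t$ and $\text{ucb}_t$ is bounded, $\lvert\alpha_t(\bm x)-\mathbb{E}_{\bm c\sim\hat p_t}[\text{ucb}_t(\bm x,\bm c)]\rvert\le\delta_t\,U_t(\bm x)$ with $U_t(\bm x):=\sup_{\bm c\in\mathcal{C}}\lvert\text{ucb}_t(\bm x,\bm c)\rvert$; the ``$\le$'' direction is immediate since $\hat p_t\in\mathcal{B}(\hat p_t,\delta_t)$, and the ``$\ge$'' direction follows by plugging the dual variables $\beta=0$, $\alpha=\max\{0,U_t(\bm x)\}$ into the variational formula of Proposition~\ref{dro_transformation_prop}. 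Chaining $\mathbb{E}_{\hat p_t}[f(\bm x^*,\cdot)] \le \mathbb{E}_{\hat p_t}[\text{ucb}_t(\bm x^*,\cdot)] \le \alpha_t(\bm x^*)+\delta_t U_t(\bm x^*) \le \alpha_t(\bm x_t)+\delta_t U_t(\bm x^*) \le \mathbb{E}_{\hat p_t}[\text{ucb}_t(\bm x_t,\cdot)]+\delta_t U_t(\bm x^*)$, where the middle inequality uses that $\bm x_t$ maximizes $\alpha_t$, yields $B_t \le \mathbb{E}_{\hat p_t}[\text{ucb}_t(\bm x_t,\cdot)-f(\bm x_t,\cdot)] + \delta_t U_t(\bm x^*) \le 2\sqrt{\beta_t}\,\mathbb{E}_{\bm c\sim\hat p_t}[\sigma_t(\bm x_t,\bm c)] + \delta_t U_t(\bm x^*)$. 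On the confidence event $U_t(\bm x^*)\le\sup_{\bm z}\lvert f(\bm z)\rvert+2\sqrt{\beta_t}$ (since $\sigma_t\le1$ and $\lvert\mu_t-f\rvert\le\sqrt{\beta_t}\sigma_t$), so the extra per-step cost beyond SBO-KDE is $\mathcal{O}\big(\delta_t(\sup_{\bm z}\lvert f(\bm z)\rvert+\sqrt{\beta_t})\big)$, while the exploration term $2\sqrt{\beta_t}\,\mathbb{E}_{\hat p_t}[\sigma_t(\bm x_t,\cdot)]$ is precisely the one already treated in Theorem~\ref{main_theorem_1}.

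It remains to sum over $t$ and take the outer expectation. For the exploration term I would replace $\mathbb{E}_{\hat p_t}[\sigma_t(\bm x_t,\cdot)]$ by $\mathbb{E}_{\bm c_t\sim p}[\sigma_t(\bm x_t,\bm c_t)\mid\mathcal{F}_{t-1}]$ at the cost of another $\tfrac12\lVert p-\hat p_t\rVert_1$ term (folded into the $C_1T^{(2+D_c)/(4+D_c)}$ bound via Lemma~\ref{l2_bound_lemma}), apply the tower rule, and then use Cauchy--Schwarz with the information-gain bound $\sum_t\sigma_t^2(\bm x_t,\bm c_t)=\mathcal{O}(\gamma_T)$ to recover the $\sqrt{\beta_T\gamma_TC_2}\,(\sqrt{T^{D_c/(4+D_c)}}+\sqrt{T})$ contribution, exactly as for SBO-KDE. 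The genuinely new contributions are $\sum_t\delta_t\,\mathbb{E}[\sup_{\bm z}\lvert f(\bm z)\rvert]=C_4\sum_t\delta_t$ and $\sum_t\mathcal{O}(\sqrt{\beta_t}\,\delta_t)$, the latter absorbed into a term of the form $\sqrt{\beta_T\gamma_TC_2}\sqrt{C_3\sum_t\delta_t^2}$ by one further Cauchy--Schwarz step (bounding $\gamma_T$ below by a constant for $T\ge1$), paralleling the analogous bookkeeping in the proof of Theorem~\ref{main_theorem_1}. Substituting $\delta_t=\mathcal{O}(t^{-2/(4+D_c)})$ then gives $\sum_t\delta_t=\mathcal{O}(T^{(2+D_c)/(4+D_c)})$ and $\sum_t\delta_t^2=\mathcal{O}(T^{D_c/(4+D_c)})$, so every term is sub-linear and the claimed bound, with the same leading order $\mathcal{O}(T^{(2+D_c)/(4+D_c)})$ as SBO-KDE, follows. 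The main obstacle is the $B_t$ step: one must pass between the three distributions $p$, $\hat p_t$, and the adversarial $q\in\mathcal{B}(\hat p_t,\delta_t)$ while keeping the penalty for ``undoing'' the inner minimization controlled --- this works only because $\sigma_t\le1$ forces $U_t$ to be of size $\mathcal{O}(\sup_{\bm z}\lvert f(\bm z)\rvert+\sqrt{\beta_t})$ --- and, since Lemma~\ref{l2_bound_lemma} controls the KDE error only in expectation, the entire chain has to be carried out under the outer expectation rather than pathwise.
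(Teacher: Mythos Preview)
Your route is genuinely different from the paper's. Rather than staying with $\hat p_t$ and paying a TV penalty $\delta_t U_t(\bm x^*)$ to ``undo'' the inner minimization, the paper introduces the worst-case densities $q_{\bm x}^g := \arg\min_{q\in\mathcal B(\hat p_t,\delta_t)}\mathbb E_{\bm c\sim q}[g(\bm x,\bm c)]$ explicitly and uses a \emph{five}-term decomposition of $r_t$ passing through $q_{\bm x^*}^f$ and $q_{\bm x_t}^f$. The payoff is that the exploration piece becomes $\mathbb E\big[\int q_{\bm x_t}^f(\bm c)\sqrt{\beta_t}\sigma_t(\bm x_t,\bm c)\,d\bm c\big]$ (obtained via the tower property $\mathbb E[\text{ucb}_t-f\mid\mathcal D_{t-1}]=\sqrt{\beta_t}\sigma_t$, not a confidence event), and since $\|q_{\bm x_t}^f-p\|_1\le\delta_t+\|\hat p_t-p\|_1$, the radius $\delta_t$ stays \emph{coupled to} $\sigma_t(\bm x_t,\bm c_t^*)$. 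Cauchy--Schwarz then pairs $\sum_t\delta_t^2$ with $\sum_t\beta_t\sigma_t^2\le\beta_T\gamma_T C$, which is exactly what produces the stated $\sqrt{\beta_T\gamma_T C_2}\sqrt{C_3\sum_t\delta_t^2}$.

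Your bound $U_t(\bm x^*)\lesssim\sup_{\bm z}|f(\bm z)|+\sqrt{\beta_t}$ decouples $\delta_t$ from $\sigma_t$ and leaves a residual $\sum_t\sqrt{\beta_t}\,\delta_t$. The claim that this can be absorbed into $\sqrt{\beta_T\gamma_T C_2}\sqrt{C_3\sum_t\delta_t^2}$ by ``bounding $\gamma_T$ below by a constant'' does not work: Cauchy--Schwarz only gives $\sum_t\sqrt{\beta_t}\,\delta_t\le\sqrt{T\beta_T}\sqrt{\sum_t\delta_t^2}$, and for the kernels of interest $\gamma_T\ll T$. What your argument actually yields is an extra term of size $\sqrt{\beta_T}\sum_t\delta_t$ --- still sub-linear, but off from the stated $C_4\sum_t\delta_t$ by a $\sqrt{\log T}$ factor, so the theorem as written (with $C_4$ a true constant) is not recovered. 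A secondary point: the paper never conditions on a uniform high-probability event over $\mathcal X\times\mathcal C$; the $\pi^2/3$ arises from the pointwise expectation bound $\mathbb E[(f-\text{ucb}_t)_+]\le 1/(t^2\tau_t^{D_x})$ at grid points $[\bm x^*]_t$ (Lemma~\ref{lemma_for_disretize}) plus a Lipschitz discretization error, and the $\text{ucb}_t-f$ part is handled by conditional expectation directly. Your ``$f\le\text{ucb}_t$ uniformly with high probability'' over continuous $\mathcal C$ would require a separate argument.
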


We also only present a proof sketch here, and the detailed proof is provided in Appendix~\ref{theorey_kdedrbo_appendix}. The idea is similar to that of Theorem~\ref{main_theorem_1} for SBO-KDE. Specifically, for any function $g$, let $q_{\bm{x}}^g:=\arg\min_{q\in\mathcal{B}(\hat{p}_t,\delta_t)}\mathbb{E}_{\bm{c}\sim q}[g(\bm{x},\bm{c})]$. Then, we decompose the instantaneous regret at iteration $t$ as $r_t = \mathbb{E}_{\bm{c}\sim p}[f(\bm{x}^*,\bm{c})]-\mathbb{E}_{\bm{c}\sim p}[f(\bm{x}_t,\bm{c})]=( \mathbb{E}_{\bm{c}\sim p}[f(\bm{x}^*,\bm{c})]- \mathbb{E}_{\bm{c}\sim \hat{p}_t}[f(\bm{x}^*,\bm{c})])
+
( \mathbb{E}_{\bm{c}\sim \hat{p}_t}[f(\bm{x}^*,\bm{c})]  - \mathbb{E}_{\bm{c}\sim q_{\bm{x}^*}^f}[f(\bm{x}^*,\bm{c})])
+
( \mathbb{E}_{\bm{c}\sim q_{\bm{x}^*}^f}[f(\bm{x}^*,\bm{c})]  - \mathbb{E}_{\bm{c}\sim q_{\bm{x}_t}^f}[f(\bm{x}_t,\bm{c})]   )
+
(  \mathbb{E}_{\bm{c}\sim q_{\bm{x}_t}^f}[f(\bm{x}_t,\bm{c})] $ $-\mathbb{E}_{\bm{c}\sim \hat{p}_t}[f(\bm{x}_t,\bm{c})])+ (\mathbb{E}_{\bm{c}\sim \hat{p}_t}[f(\bm{x}_t,\bm{c})]  - \mathbb{E}_{\bm{c}\sim p}[f(\bm{x}_t,\bm{c})])$. All terms, with the except of the third one, can be bounded using the error bound between the estimated PDF $\hat{p}_t$ and the true distribution $p$ or the radius $\delta_t$ of the distribution
 ball $\mathcal{B}(\hat{p}_t,\delta_t)$. The third term is the UCB regret, but under the DRO objective at each iteration, which can be bounded using the posterior information of GP and the fact that $q_{\bm{x}}^g$ belongs to $\mathcal{B}(\hat{p}_t,\delta_t)$ for any $\bm{x}$ and $g$. 

\section{Experiments}
\label{experiment}

In order to empirically evaluate the effectiveness of SBO-KDE and DRBO-KDE, we conduct numerical experiments on synthetic functions and two real-world problems, i.e., the Newsvendor and portfolio problem. We use five identical random seeds (100--104) for all problems and
methods. The code is available at https://github.com/lamda-bbo/sbokde.

\subsection{Experimental Setting}
We adopt the frequently-used cumulative regret (or reward) in BO literature considering uncertainty~\cite{drbo,drbo_wcs} as performance evaluation metric. The experimental setting of our algorithms and compared baselines are summarized as follows.

\textbf{SBO-KDE}. We choose the Gaussian kernel $K(\bm x)=(2\pi)^{-D_c/2}e^{-\|\bm x\|_2^2}$ for KDE. The bandwidth $h_t^{(i)}=(4/(D_c+2))^{1/(4+D_c)} \hat{\sigma}_t^{(i)}t^{-1/(4+D_c)}$ based on the rule of thumb~\cite{kde_1}, where $\hat{\sigma}_t^{(i)}$ is the standard deviation of the $i$th dimension of observed context.

\textbf{DRBO-KDE}. The radius of the distribution set is set as $\delta_t=t^{-2/(4+D_c)}$, which can guarantee a sub-linear regret as we introduced in Section~\ref{sec-DRBO-KDE}. The kernel and bandwidth for KDE are the same as those used in SBO-KDE.

\textbf{DRBO-MMD}~\cite{drbo} discretizes the continuous context space $\mathcal{C}$, and selects the next query point $\bm{x}_t =\arg\max_{\bm{x}\in\mathcal{X}}\min_{Q\in\mathcal{B}(\hat{P}_t,\delta_t)}\mathbb{E}_{\bm{c}\sim Q}[\text{ucb}_t(\bm{x},\bm{c})]$ with mean maximum discrepancy (MMD) as the distribution distance on the discretized context space $\tilde{\mathcal{C}}$. The reference distribution $\hat{P}_t$ is the empirical distribution in $\tilde{\mathcal{C}}$, and we set $\delta_t=(2+\sqrt{2\log{(1/\gamma)} })/\sqrt{t}$ with $\gamma=0.1$, as suggested by Lemma~3 in~\cite{drbo}. Due to its high computational complexity of the inner convex optimization, we use a small discretization size $|\tilde{C}|=\lceil 100^{1/D_c} \rceil^{D_c}$.

\textbf{DRBO-MMD-MinimaxApprox}~\cite{drbo_wcs} accelerates DRBO-MMD with minimax approximation. Thus, we can use a larger discretization size $|\tilde{C}| = \lceil 1024^{1/D_c} \rceil^{D_c}$.

\textbf{StableOpt}~\cite{ro} selects $\bm{x}_t =\arg$ $\max_{\bm{x}\in\mathcal{X}}\min_{\bm{c}\in\mathcal{C}_t}\text{ucb}_t(\bm{x},\bm{c})$. There is no standard way to choose $\mathcal{C}_t$. Instead of setting $\mathcal{C}_t=\mathcal{C}$, which is overly conservative and assumes worst-case scenario under the full context space, we set each dimension of $\mathcal{C}_t$ to $[\mu_{\bm{c}}^{(i)}-\sigma_{\bm{c}}^{(i)}, \mu_{\bm{c}}^{(i)}+\sigma_{\bm{c}}^{(i)}]$, where $\mu_{\bm{c}}^{(i)}$ and $\sigma_{\bm{c}}^{(i)}$ are the mean and standard deviation of the $i$th dimension of observed context, respectively.
 
\textbf{GP-UCB}~\cite{gpucb} ignores the context variable. That is, it builds GP only on the decision space $\mathcal{X}$ and selects the next query point $\bm{x}_t$ by maximizing the UCB acquisition function. 

Further detailed descriptions and hyper-parameters of these algorithms are provided in Appendix~\ref{method_appendix}.

\subsection{Synthetic Functions}
\begin{figure*}[t]
\centering

  \begin{subfigure}
    \centering
\includegraphics[width=0.24\linewidth]{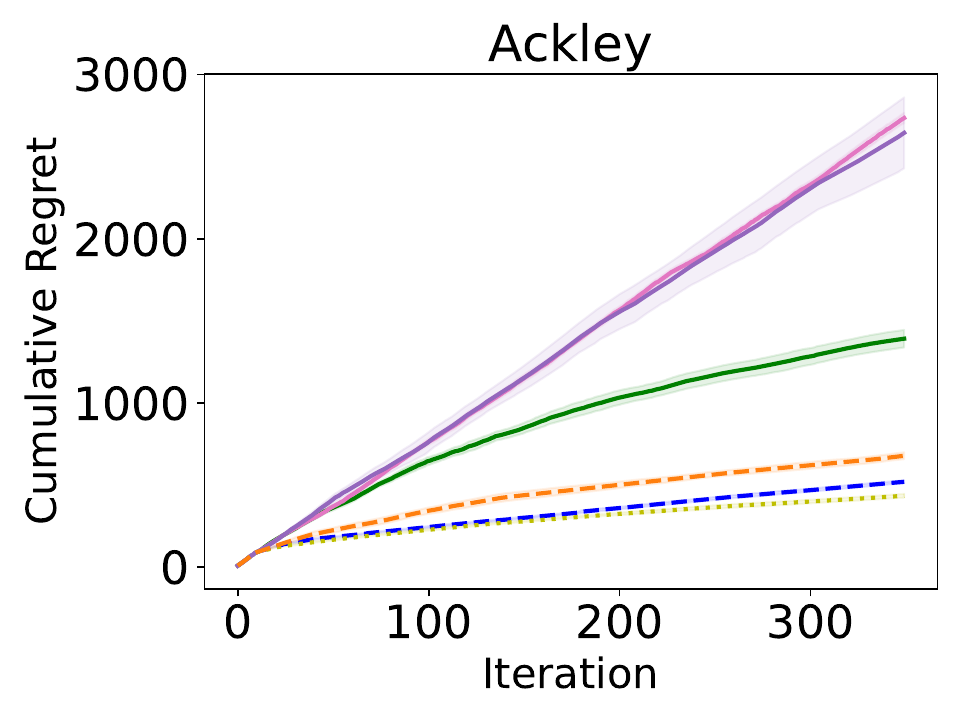}
\includegraphics[width=0.24\linewidth]{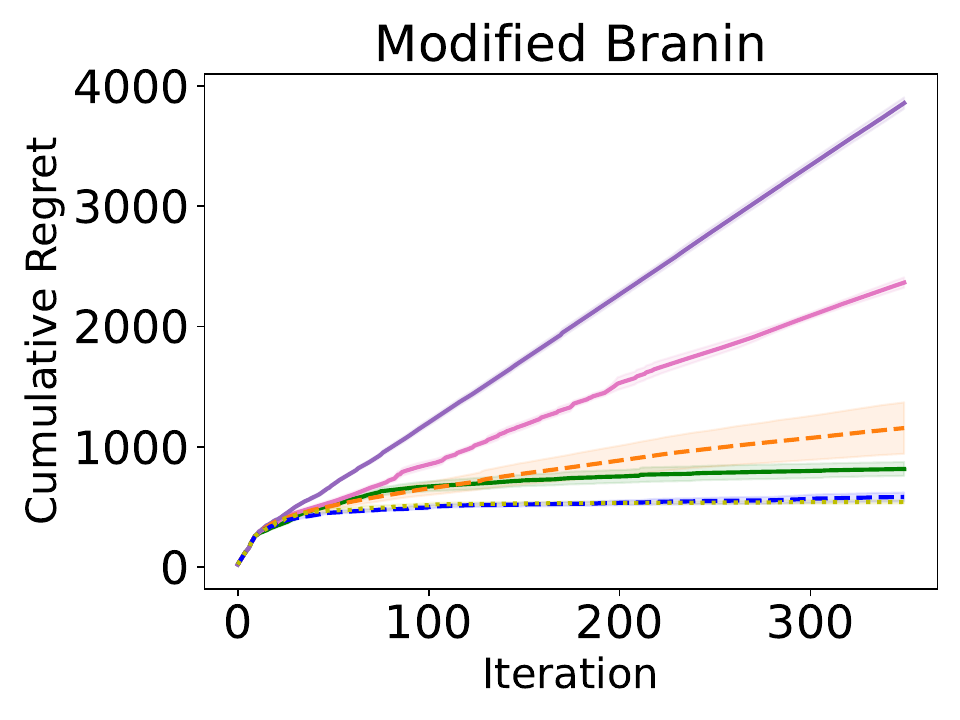}  \includegraphics[width=0.24\linewidth]{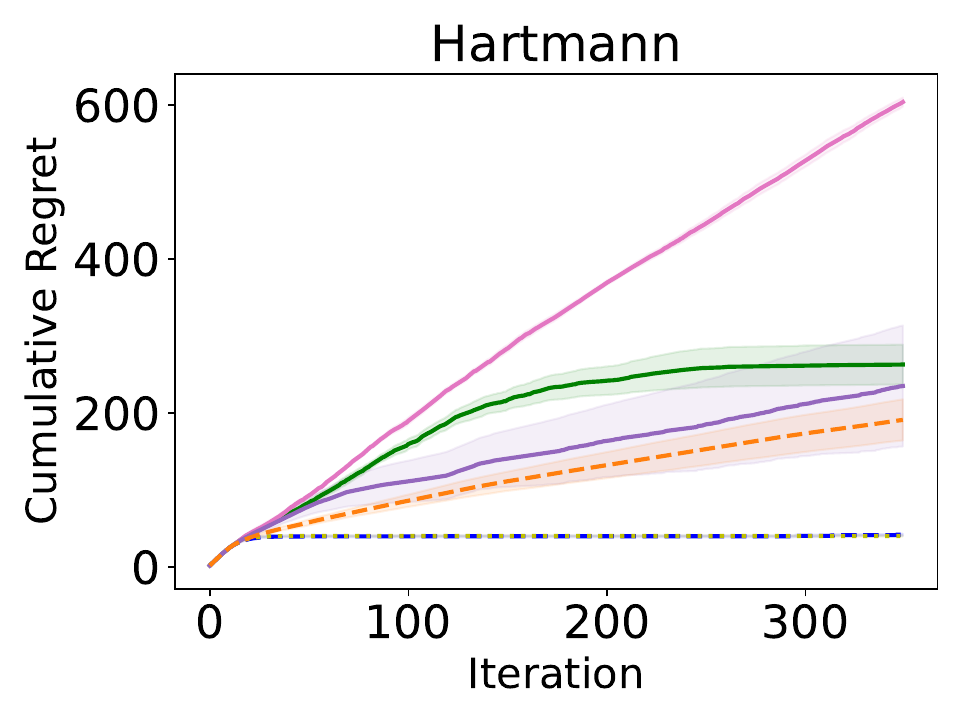}\includegraphics[width=0.24\linewidth]{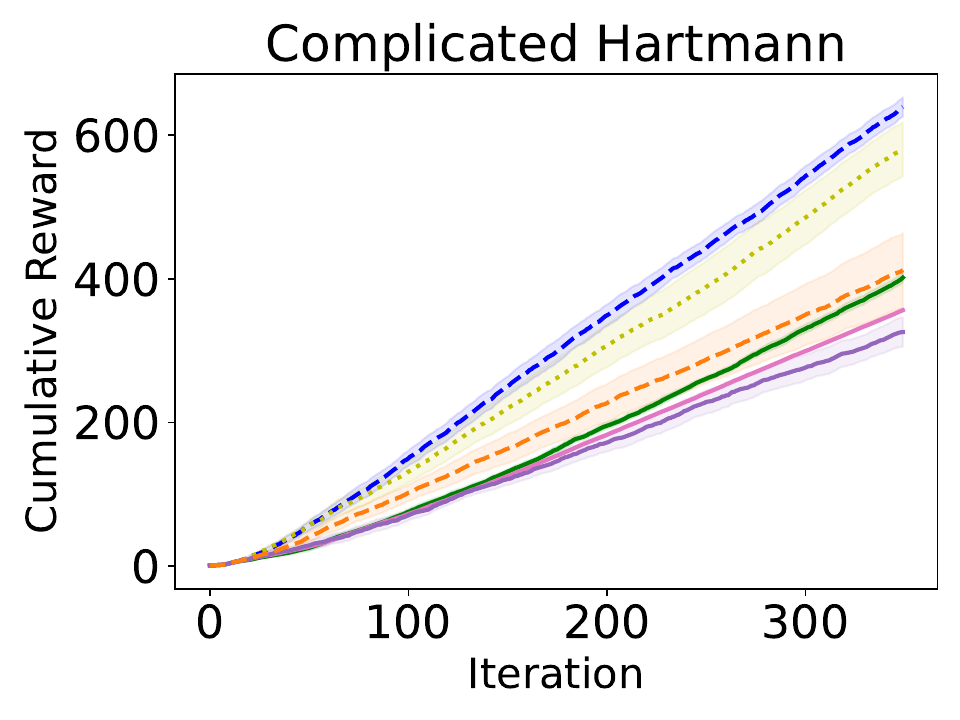}
\vspace{-1.3em}
\caption*{(a) Synthetic Functions}
\label{synthetic}
\end{subfigure}
\begin{subfigure}
    \centering
\includegraphics[width=0.24\linewidth]{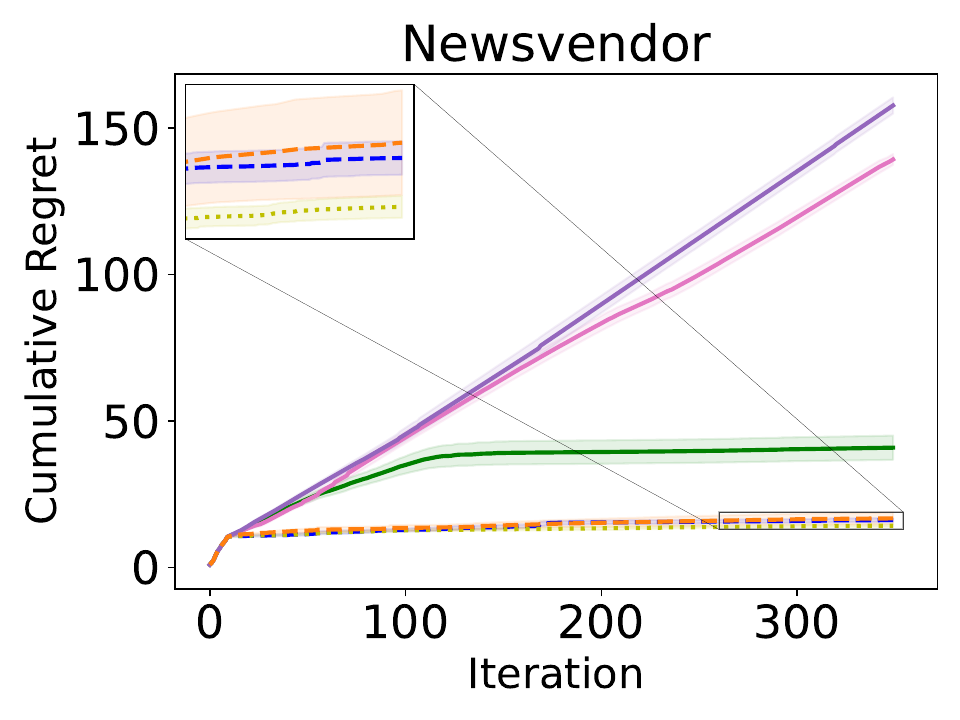}
\includegraphics[width=0.24\linewidth]{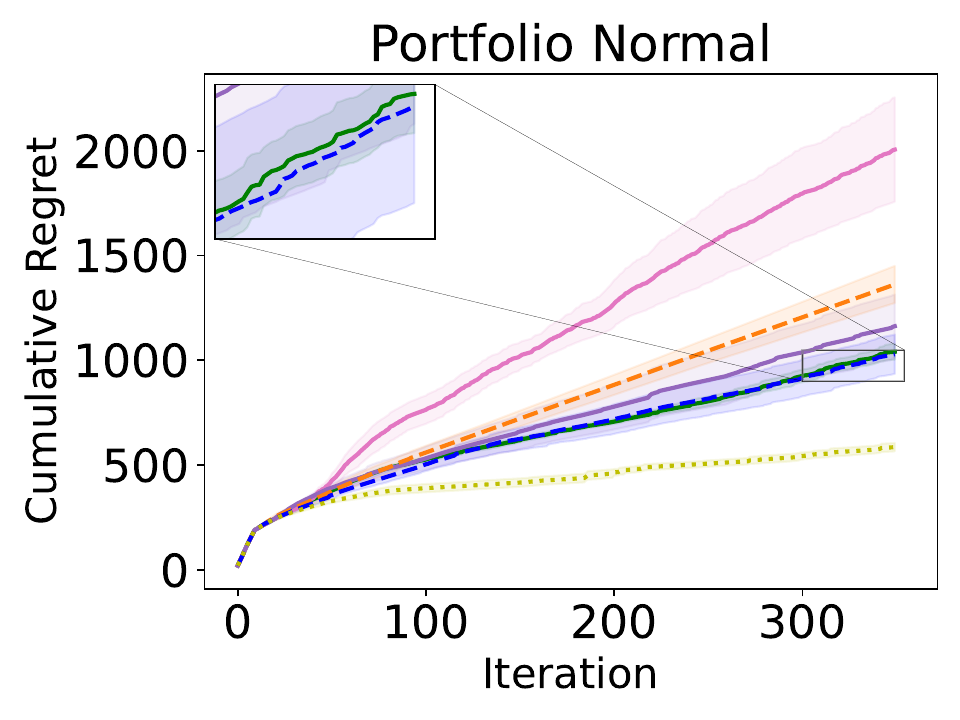}
\includegraphics[width=0.24\linewidth]{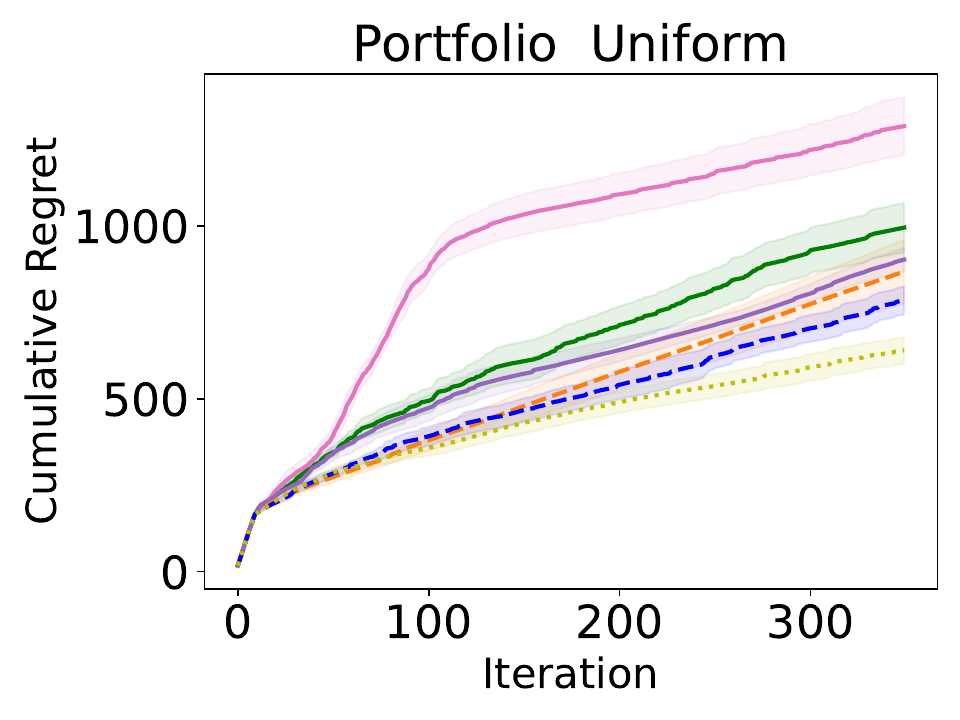}
\includegraphics[width=0.24\linewidth]{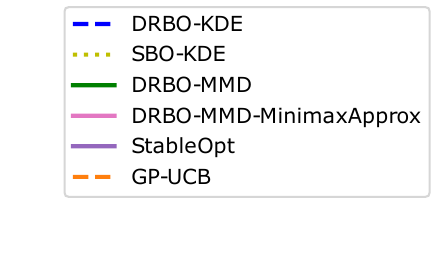}
\vspace{-1.3em}
\caption*{(b) Real-World Problems}
\label{real}
\end{subfigure}
\vspace{-0.6em}
\caption{Mean and standard error of cumulative regret (the lower the better) or cumulative reward (the higher the better).}
\vspace{-0.5em}
\end{figure*}

We conduct experiments on four commonly used synthetic test functions~\cite{synthetic_function}, in which we follow the approach of setting some dimensions as context variable from~\cite{modified_branin,risk1}. The functions include the Ackley function with one dimension set as a context variable following a normal distribution, the Modified Branin function with two dimensions set as context variable following a normal distribution, the Hartmann function with one dimension set as a context variable following a normal distribution, and the Complicated Hartmann function whose context variable follows a more complicated distribution (a mixture of six normal and two Cauchy distributions). More details can be found in Appendix~\ref{problem_definition_appendix}. For the first three functions, we use the cumulative regret in Eq.~\eqref{stochastic_cumulative_regret} as the metric,\footnote{The optimal solution is approximately obtained by optimizing the average of QMC samples using multi-restart L-BFGS.} and we calculate the expectation $\mathbb{E}_{\bm{c}\sim p(\bm{c})}[f(\bm{x},\bm{c})]$ by averaging $2^{21}$ quasi-Monte Carlo (QMC) samples.\footnote{We set it arbitrarily. Using more QMC samples leads to more accurate estimation. Due to limitations in computing resources, we use $2^{21}$ QMC samples, which, however, can guarantee the estimation accurate enough.} For the last function, due to the complexity of the context distribution, it is difficult to perform QMC sampling, so we only report the observed cumulative reward, that is $\sum_{t=1}^Tf(\bm{x}_t,\bm{c}_t)$.

The results are shown in Figure~1(a). We can observe that the proposed algorithms, SBO-KDE and DRBO-KDE, outperform all the other baselines on the synthetic functions. In the Ackley function, SBO-KDE performs slightly better than DRBO-KDE, which is because DRBO-KDE is more conservative. However, in the Complicated Hartmann function with a more complex context distribution, DRBO-KDE performs better. This is because KDE may suffer from a higher estimation error for complicated distributions, while DRBO-KDE takes into account the discrepancy between the estimated distribution and the true distribution, thus exhibiting a more robust performance. To observe the higher estimation error between the KDE and the true context distribution under the complicated distribution, we report the discrepancy measured by total variation between PDF estimated by KDE and the true context distribution under the two distributions on Hartmann function in Appendix~\ref{AppendixE}.
In the Modified Branin and original Hartmann functions, the performance of SBO-KDE and DRBO-KDE is very close. It is interesting to note that GP-UCB, which does not consider context variable, has an acceptable performance. This may be because GP-UCB can model the impact of context variable on function evaluations as evaluation noise. DRBO-MMD performs well in the Modified Branin function but performs worse in the other functions, which is related to the quality of the discretization space approximation. StableOpt performs poorly because the robust objective is too pessimistic. DRBO-MMD-MinimaxApprox also has poor performance, which could be due to the minimax approximation error being too large for these problems.

In addition to the performance of optimization, we also provide the computational complexity comparison of the algorithms in Appendix~\ref{ComplexityAppendix}. Besides, although we use a small dimension of context variable by following the experiments in DRBO literature,
where the dimension of context variable tends to be relatively
low (at most three)~\cite{drbo,drbo_wcs}, we conduct experiments on one more problem with four-dimensional context variable in Appendix~\ref{AppendixD}, to show the performance of algorithms on problems with higher dimensional context variable. 

\subsection{Real-World Problems}
We further examine the effectiveness of our algorithms on two real-world optimization tasks, including newsvendor problem and portfolio optimization. Newsvendor problem is a classic inventory management problem in stochastic optimization, where a seller pre-determines the inventory to satisfy customer demand, and portfolio optimization is the process of adjusting trading strategies to maximize the returns on investment. 

\textbf{Newsvendor problem}. The first real-world problem we consider is a continuous newsvendor problem~\cite{simoptgithub}, where a vendor purchases a certain amount of liquid denoted by $x$, and sells to customers with a demand of $c$ units. The vendor incurs a cost of $s_0$ per unit for the initial inventory and sells the liquid to customers at a price of $s_1$ per unit. Any unsold liquid at the end of the day can be sold for a salvage value of $w$ per unit. The decision variable is the purchase quantity $x$, and the context variable is the customer demand $c$. The goal is to maximize the vendor's profit, which is defined as $f(x,c) = s_1\min\{x,c\} + w\max\{0,x-c\} - s_0x$. We use the default setting of~\cite{simoptgithub}, where $s_0=5$, $s_1=9$, and $w=1$. Additionally, the customer demand $c$ follows a Burr Type XII distribution with PDF $p(c;\alpha,\beta)=\alpha\beta\frac{c^{\alpha-1}}{(1+c^{\alpha})^{\beta+1}}$, where $\alpha=2$ and $\beta=20$.

\textbf{Portfolio optimization}. The second real-world problem is portfolio optimization~\cite{risk1, risk2}. The problem involves three-dimensional decision variable (risk and trade aversion parameters, and holding cost multiplier), and two-dimensional context variable (bid-ask spread and borrowing cost). The objective function is the posterior mean of a GP trained on $3,000$ samples, which are generated by~\cite{risk1} from the CVXPortfolio problem~\cite{portfolio}. We define two tasks by setting the distribution of the context variable as normal distribution or uniform distribution. For a more detailed description, please refer to Appendix~\ref{problem_definition_appendix}. We modify the problem to the setting where the distribution is unknown and the context can be observed after each evaluation. 

The results are shown in Figure~1(b). Newsvendor problem uses $2^{21}$ QMC samples for calculating the expectation, while portfolio optimization uses $2^{16}$ QMC samples because the evaluation is more time-consuming. SBO-KDE and DRBO-KDE still outperform all the other baselines, with SBO-KDE demonstrating better performance. Among
the methods that consider context variable in the newsvendor
problem, only SBO-KDE and DRBO-KDE outperform
GP-UCB. For portfolio optimization with a normal context distribution, DRBO-MMD is competitive with DRBO-KDE, which is because the expectation under the discretized context space has a good approximation over the continuous context space. 

\section{Conclusion}
\label{conclusion}

In this paper, we consider the stochastic optimization problem with an unknown continuous context distribution, and propose the two algorithms, SBO-KDE and DRBO-KDE. The former directly optimizes the SO objective using the estimated density from KDE. The latter optimizes the distributionally robust objective considering the discrepancy between the true and estimated PDF, which is more suitable when the KDE approximation error might be high due to the complexity of the true distribution. We prove sub-linear Bayesian cumulative regret bounds for both algorithms. Furthermore, we conduct numerical experiments on synthetic functions and two real-world problems to empirically demonstrate the effectiveness of the proposed algorithms. One limitation of this work is that we assume that the distribution of context variable remains static over time. We will investigate scenarios where distributional shifts occur in our future work. 

\section*{Acknowledgements}
This work was supported by the National Key R\&D Program of China (2022ZD0116600) and National Science Foundation of China (62022039, 62276124).

\bibliography{aaai24.bib}

\begin{thebibliography}{58}
\providecommand{\natexlab}[1]{#1}

\bibitem[{Balandat et~al.(2020)Balandat, Karrer, Jiang, Daulton, Letham, Wilson, and Bakshy}]{botorch}
Balandat, M.; Karrer, B.; Jiang, D.~R.; Daulton, S.; Letham, B.; Wilson, A.~G.; and Bakshy, E. 2020.
\newblock {BoTorch: {a} framework for efficient {M}onte-{C}arlo {B}ayesian optimization}.
\newblock In \emph{Advances in Neural Information Processing Systems 33}, 21524--21538. Virtual.

\bibitem[{Bayraksan and Love(2015)}]{dro_phi}
Bayraksan, G.; and Love, D.~K. 2015.
\newblock Data-driven stochastic programming using phi-divergences.
\newblock \emph{The operations research revolution}, 1--19.

\bibitem[{Beland and Nair(2019)}]{bouu}
Beland, J.~J.; and Nair, P.~B. 2019.
\newblock {B}ayesian optimization under uncertainty.
\newblock In \emph{Proceedings of the 22rd International Conference on Artificial Intelligence and Statistics}, volume~89, 1177--1184. Naha, Japan.

\bibitem[{Bogunovic et~al.(2018)Bogunovic, Scarlett, Jegelka, and Cevher}]{ro}
Bogunovic, I.; Scarlett, J.; Jegelka, S.; and Cevher, V. 2018.
\newblock Adversarially robust optimization with {G}aussian processes.
\newblock In \emph{Advances in Neural Information Processing Systems 31}, 5765--5775. Montréal, Canada.

\bibitem[{Boyd et~al.(2017)Boyd, Busseti, Diamond, Kahn, Koh, Nystrup, Speth et~al.}]{portfolio}
Boyd, S.; Busseti, E.; Diamond, S.; Kahn, R.~N.; Koh, K.; Nystrup, P.; Speth, J.; et~al. 2017.
\newblock Multi-period trading via convex optimization.
\newblock \emph{Foundations and Trends{\textregistered} in Optimization}, 3(1): 1--76.

\bibitem[{Cakmak et~al.(2020)Cakmak, Astudillo~Marban, Frazier, and Zhou}]{risk1}
Cakmak, S.; Astudillo~Marban, R.; Frazier, P.; and Zhou, E. 2020.
\newblock {B}ayesian Optimization of Risk Measures.
\newblock In \emph{Advances in Neural Information Processing Systems 33}, 20130--20141. Virtual.

\bibitem[{Chen et~al.(2018)Chen, Huang, Wang, Antonoglou, Schrittwieser, Silver, and de~Freitas}]{boalp}
Chen, Y.; Huang, A.; Wang, Z.; Antonoglou, I.; Schrittwieser, J.; Silver, D.; and de~Freitas, N. 2018.
\newblock {B}ayesian Optimization in AlphaGo.
\newblock \emph{CoRR}, abs/1812.06855.

\bibitem[{Chen(2017)}]{kde_tutorial}
Chen, Y.-C. 2017.
\newblock A tutorial on kernel density estimation and recent advances.
\newblock \emph{Biostatistics \& Epidemiology}, 1(1): 161--187.

\bibitem[{Dai, Chen, and Birge(2000)}]{inventory2}
Dai, L.; Chen, C.-H.; and Birge, J.~R. 2000.
\newblock Convergence Properties of Two-Stage Stochastic Programming.
\newblock \emph{Journal of Optimization Theory and Applications}, 106(3): 489--509.

\bibitem[{Deroye and Gyorfi(1985)}]{kde_l1}
Deroye, L.; and Gyorfi, L. 1985.
\newblock \emph{Nonparametric Density Estimation}.
\newblock Wiley Series in Probability and Mathematical Statistics.

\bibitem[{Diamond and Boyd(2016)}]{cvxpy}
Diamond, S.; and Boyd, S. 2016.
\newblock {CVXPY}: {A} {P}ython-embedded modeling language for convex optimization.
\newblock \emph{Journal of Machine Learning Research}, 17(83): 1--5.

\bibitem[{Eckman et~al.(2021)Eckman, Henderson, Shashaani, and Pasupathy}]{simoptgithub}
Eckman, D.~J.; Henderson, S.~G.; Shashaani, S.; and Pasupathy, R. 2021.
\newblock {SimOpt}.
\newblock \url{https://github.com/simopt-admin/simopt}.

\bibitem[{Elgammal et~al.(2002)Elgammal, Duraiswami, Harwood, and Davis}]{kde_application1}
Elgammal, A.; Duraiswami, R.; Harwood, D.; and Davis, L.~S. 2002.
\newblock Background and foreground modeling using nonparametric kernel density estimation for visual surveillance.
\newblock \emph{Proceedings of the IEEE}, 90(7): 1151--1163.

\bibitem[{Frazier(2018)}]{bosurvey2}
Frazier, P.~I. 2018.
\newblock A tutorial on {B}ayesian optimization.
\newblock \emph{CoRR}, abs/1807.02811.

\bibitem[{Fr{\"o}hlich et~al.(2020)Fr{\"o}hlich, Klenske, Vinogradska, Daniel, and Zeilinger}]{noisy_input}
Fr{\"o}hlich, L.; Klenske, E.; Vinogradska, J.; Daniel, C.; and Zeilinger, M. 2020.
\newblock Noisy-input entropy search for efficient robust {B}ayesian optimization.
\newblock In \emph{Proceedings of the 23rd International Conference on Artificial Intelligence and Statistics}, 2262--2272. Virtual.

\bibitem[{Griffiths and Hern{\'a}ndez-Lobato(2020)}]{chemical2}
Griffiths, R.-R.; and Hern{\'a}ndez-Lobato, J.~M. 2020.
\newblock Constrained {B}ayesian optimization for automatic chemical design using variational autoencoders.
\newblock \emph{Chemical Science}, 11(2): 577--586.

\bibitem[{Gómez-Bombarelli et~al.(2018)Gómez-Bombarelli, Wei, Duvenaud, Hernández-Lobato, Sánchez-Lengeling, Sheberla, Aguilera-Iparraguirre, Hirzel, Adams, and Aspuru-Guzik}]{autochem}
Gómez-Bombarelli, R.; Wei, J.~N.; Duvenaud, D.; Hernández-Lobato, J.~M.; Sánchez-Lengeling, B.; Sheberla, D.; Aguilera-Iparraguirre, J.; Hirzel, T.~D.; Adams, R.~P.; and Aspuru-Guzik, A. 2018.
\newblock Automatic Chemical Design Using a Data-Driven Continuous Representation of Molecules.
\newblock \emph{ACS Central Science}, 4(2): 268--276.

\bibitem[{Hannah, Powell, and Blei(2010)}]{inventory}
Hannah, L.; Powell, W.; and Blei, D. 2010.
\newblock Nonparametric Density Estimation for Stochastic Optimization with an Observable State Variable.
\newblock In \emph{Advances in Neural Information Processing Systems 23}, 820–828. Vancouver, Canada.

\bibitem[{Homem-de Mello(2008)}]{saa_convergence}
Homem-de Mello, T. 2008.
\newblock On rates of convergence for stochastic optimization problems under non-independent and identically distributed sampling.
\newblock \emph{SIAM Journal on Optimization}, 19(2): 524--551.

\bibitem[{Husain, Nguyen, and Hengel(2022)}]{drbo_phi}
Husain, H.; Nguyen, V.; and Hengel, A. v.~d. 2022.
\newblock Distributionally Robust {B}ayesian Optimization with $\phi$-divergences.
\newblock \emph{CoRR}, abs/2203.02128.

\bibitem[{Inatsu et~al.(2022)Inatsu, Takeno, Karasuyama, and Takeuchi}]{drbochance}
Inatsu, Y.; Takeno, S.; Karasuyama, M.; and Takeuchi, I. 2022.
\newblock {B}ayesian optimization for distributionally robust chance-constrained problem.
\newblock In \emph{Proceedings of the 39th International Conference on Machine Learning}, 9602--9621. Baltimore, Maryland.

\bibitem[{Iwazaki, Inatsu, and Takeuchi(2021)}]{mean_variance}
Iwazaki, S.; Inatsu, Y.; and Takeuchi, I. 2021.
\newblock Mean-variance analysis in {B}ayesian optimization under uncertainty.
\newblock In \emph{Proceedings of the 24th International Conference on Artificial Intelligence and Statistics}, 973--981. Virtual.

\bibitem[{Jiang(2017)}]{uniform_error}
Jiang, H. 2017.
\newblock Uniform convergence rates for kernel density estimation.
\newblock In \emph{Proceedings of the 34th International Conference on Machine Learning}, 1694--1703. Sydney, Australia.

\bibitem[{Jones, Schonlau, and Welch(1998)}]{ei}
Jones, D.~R.; Schonlau, M.; and Welch, W.~J. 1998.
\newblock Efficient global optimization of expensive black-box functions.
\newblock \emph{Journal of Global Optimization}, 13(4): 455--492.

\bibitem[{Kandasamy et~al.(2018{\natexlab{a}})Kandasamy, Krishnamurthy, Schneider, and P{\'o}czos}]{BCR_TS}
Kandasamy, K.; Krishnamurthy, A.; Schneider, J.; and P{\'o}czos, B. 2018{\natexlab{a}}.
\newblock Parallelised {B}ayesian optimisation via {T}hompson sampling.
\newblock In \emph{Proceedings of the 21st International Conference on Artificial Intelligence and Statistics}, 133--142. Playa Blanca, Spain.

\bibitem[{Kandasamy et~al.(2018{\natexlab{b}})Kandasamy, Neiswanger, Schneider, Poczos, and Xing}]{nasbo}
Kandasamy, K.; Neiswanger, W.; Schneider, J.; Poczos, B.; and Xing, E. 2018{\natexlab{b}}.
\newblock Neural Architecture Search with {B}ayesian Optimisation and Optimal Transport.
\newblock In \emph{Advances in Neural Information Processing Systems 31}, 2020--2029. Montréal, Canada.

\bibitem[{Kim, Pasupathy, and Henderson(2015)}]{saa_guide}
Kim, S.; Pasupathy, R.; and Henderson, S.~G. 2015.
\newblock A guide to sample average approximation.
\newblock \emph{Handbook of Simulation Optimization}, 207--243.

\bibitem[{Kirschner et~al.(2020)Kirschner, Bogunovic, Jegelka, and Krause}]{drbo}
Kirschner, J.; Bogunovic, I.; Jegelka, S.; and Krause, A. 2020.
\newblock Distributionally robust {B}ayesian optimization.
\newblock In \emph{Proceedings of the 23rd International Conference on Artificial Intelligence and Statistics}, 2174--2184. Virtual.

\bibitem[{Kirschner and Krause(2019)}]{stochasitc_bandit}
Kirschner, J.; and Krause, A. 2019.
\newblock Stochastic bandits with context distributions.
\newblock In \emph{Advances in Neural Information Processing Systems 32}, 14090--14099. Vancouver, Canada.

\bibitem[{Klein et~al.(2017)Klein, Falkner, Bartels, Hennig, and Hutter}]{mlhp}
Klein, A.; Falkner, S.; Bartels, S.; Hennig, P.; and Hutter, F. 2017.
\newblock Fast {B}ayesian Optimization of Machine Learning Hyperparameters on Large Datasets.
\newblock In \emph{Proceedings of the 20th International Conference on Artificial Intelligence and Statistics}, 528--536. Fort Lauderdale, FL.

\bibitem[{Krause and Ong(2011)}]{bo_context}
Krause, A.; and Ong, C. 2011.
\newblock Contextual {G}aussian Process Bandit Optimization.
\newblock In \emph{Advances in Neural Information Processing Systems 24}, 2447--2455. Granada, Spain.

\bibitem[{Kushner(1964)}]{POI}
Kushner, H.~J. 1964.
\newblock A new method of locating the maximum point of an arbitrary multipeak curve in the presence of noise.
\newblock \emph{Journal of Basic Engineering}, 86(1): 97--106.

\bibitem[{Lamprier, Gisselbrecht, and Gallinari(2018)}]{context_bandits}
Lamprier, S.; Gisselbrecht, T.; and Gallinari, P. 2018.
\newblock Profile-based bandit with unknown profiles.
\newblock \emph{Journal of Machine Learning Research}, 19(1): 2060--2099.

\bibitem[{Marzat, Walter, and Piet-Lahanier(2013)}]{ro2}
Marzat, J.; Walter, E.; and Piet-Lahanier, H. 2013.
\newblock Worst-case global optimization of black-box functions through Kriging and relaxation.
\newblock \emph{Journal of Global Optimization}, 55(4): 707--727.

\bibitem[{Namkoong and Duchi(2017)}]{dro_variance}
Namkoong, H.; and Duchi, J.~C. 2017.
\newblock Variance-based regularization with convex objectives.
\newblock In \emph{Advances in Neural Information Processing Systems 30}, 2971--2980. Long Beach, CA.

\bibitem[{Nguyen et~al.(2021)Nguyen, Dai, Low, and Jaillet}]{risk2}
Nguyen, Q.~P.; Dai, Z.; Low, B. K.~H.; and Jaillet, P. 2021.
\newblock Value-at-Risk Optimization with {G}aussian Processes.
\newblock In \emph{Proceedings of the 38th International Conference on Machine Learning}, 8063--8072. Virtual.

\bibitem[{Nguyen et~al.(2020)Nguyen, Gupta, Ha, Rana, and Venkatesh}]{drbqo}
Nguyen, T.; Gupta, S.; Ha, H.; Rana, S.; and Venkatesh, S. 2020.
\newblock Distributionally robust {B}ayesian quadrature optimization.
\newblock In \emph{Proceedings of the 23rd International Conference on Artificial Intelligence and Statistics}, 2174--2184. Virtual.

\bibitem[{Nocedal(1980)}]{lbfgs}
Nocedal, J. 1980.
\newblock Updating quasi-{N}ewton matrices with limited storage.
\newblock \emph{Mathematics of Computation}, 35(151): 773--782.

\bibitem[{Nogueira et~al.(2016)Nogueira, Martinez-Cantin, Bernardino, and Jamone}]{unscented_bo}
Nogueira, J.; Martinez-Cantin, R.; Bernardino, A.; and Jamone, L. 2016.
\newblock Unscented {B}ayesian optimization for safe robot grasping.
\newblock In \emph{Proceedings of the 26th IEEE/RSJ International Conference on Intelligent Robots and Systems}, 1967--1972. Daejeon, Korea.

\bibitem[{Oliveira, Ott, and Ramos(2019)}]{uncertain_input}
Oliveira, R.; Ott, L.; and Ramos, F. 2019.
\newblock {B}ayesian optimisation under uncertain inputs.
\newblock In \emph{Proceedings of the 22nd International Conference on Artificial Intelligence and Statistics}, 1177--1184. Okinawa, Japan.

\bibitem[{Owen(2003)}]{sobol}
Owen, A.~B. 2003.
\newblock Quasi-{M}onte {C}arlo sampling.
\newblock \emph{Monte Carlo Ray Tracing: Siggraph}, 1: 69--88.

\bibitem[{P{\'e}rez, Larra{\~n}aga, and Inza(2009)}]{kde_application2}
P{\'e}rez, A.; Larra{\~n}aga, P.; and Inza, I. 2009.
\newblock {B}ayesian classifiers based on kernel density estimation: Flexible classifiers.
\newblock \emph{International Journal of Approximate Reasoning}, 50(2): 341--362.

\bibitem[{Qian, Xiong, and Xue(2020)}]{bopp}
Qian, C.; Xiong, H.; and Xue, K. 2020.
\newblock Bayesian optimization using pseudo-points.
\newblock In \emph{Proceedings of the 29th International Joint Conference on Artificial Intelligence}, 3044--3050. Yokohama, Japan.

\bibitem[{Rahimian and Mehrotra(2022)}]{drosurvey}
Rahimian, H.; and Mehrotra, S. 2022.
\newblock Frameworks and Results in Distributionally Robust Optimization.
\newblock \emph{Open Journal of Mathematical Optimization}, 3: 1--85.

\bibitem[{Rasmussen and Williams(2006)}]{gp}
Rasmussen, C.~E.; and Williams, C. K.~I. 2006.
\newblock \emph{{G}aussian {P}rocesses for {M}achine {L}earning}.
\newblock The MIT Press.

\bibitem[{Russo and Van~Roy(2014)}]{BCR}
Russo, D.; and Van~Roy, B. 2014.
\newblock Learning to optimize via posterior sampling.
\newblock \emph{Mathematics of Operations Research}, 39(4): 1221--1243.

\bibitem[{Scott(2015)}]{kde_scott}
Scott, D.~W. 2015.
\newblock \emph{Multivariate Density Estimation: Theory, Practice, and Visualization}.
\newblock John Wiley \& Sons.

\bibitem[{Shahriari et~al.(2015)Shahriari, Swersky, Wang, Adams, and De~Freitas}]{bosurvey1}
Shahriari, B.; Swersky, K.; Wang, Z.; Adams, R.~P.; and De~Freitas, N. 2015.
\newblock Taking the human out of the loop: {A} review of {B}ayesian optimization.
\newblock \emph{Proceedings of the IEEE}, 104(1): 148--175.

\bibitem[{Silverman(1986)}]{kde_1}
Silverman, B.~W. 1986.
\newblock \emph{Density Estimation for Statistics and Data Analysis}, volume~26.
\newblock CRC press.

\bibitem[{Song et~al.(2022)Song, Xue, Huang, and Qian}]{mctsvs}
Song, L.; Xue, K.; Huang, X.; and Qian, C. 2022.
\newblock Monte {C}arlo tree search based variable selection for high dimensional {B}ayesian optimization.
\newblock In \emph{Advances in Neural Information Processing Systems 35}, 28488--28501. New Orleans, LA.

\bibitem[{Srinivas et~al.(2012)Srinivas, Krause, Kakade, and Seeger}]{gpucb}
Srinivas, N.; Krause, A.; Kakade, S.~M.; and Seeger, M.~W. 2012.
\newblock Information-theoretic regret bounds for {G}aussian process optimization in the bandit setting.
\newblock \emph{IEEE Transactions on Information Theory}, 58(5): 3250--3265.

\bibitem[{Surjanovic and Bingham(2013)}]{synthetic_function}
Surjanovic, S.; and Bingham, D. 2013.
\newblock Virtual Library of Simulation Experiments: Test Functions and Datasets.
\newblock Retrieved May 15, 2023, from \url{http://www.sfu.ca/~ssurjano}.

\bibitem[{Tay et~al.(2022)Tay, Foo, Daisuke, Leong, and Low}]{drbo_wcs}
Tay, S.~S.; Foo, C.~S.; Daisuke, U.; Leong, R.; and Low, B. K.~H. 2022.
\newblock Efficient distributionally robust {B}ayesian optimization with worst-case sensitivity.
\newblock In \emph{Proceedings of the 39th International Conference on Machine Learning}, 21180--21204. Baltimore, Maryland.

\bibitem[{Toscano-Palmerin and Frazier(2022)}]{bointergrand}
Toscano-Palmerin, S.; and Frazier, P.~I. 2022.
\newblock {B}ayesian Optimization with Expensive Integrands.
\newblock \emph{SIAM Journal on Optimization}, 32(2): 417--444.

\bibitem[{Tsybakov(2008)}]{TV_definition}
Tsybakov, A.~B. 2008.
\newblock \emph{Introduction to Nonparametric Estimation}.
\newblock Springer.

\bibitem[{Wand and Jones(1994)}]{mise_multi}
Wand, M.~P.; and Jones, M.~C. 1994.
\newblock \emph{Kernel smoothing}.
\newblock CRC press.

\bibitem[{Williams(2000)}]{modified_branin}
Williams, B.~J. 2000.
\newblock \emph{Sequential Design of computer experiments to minimize integrated response functions}.
\newblock The Ohio State University.

\bibitem[{Xie et~al.(2012)Xie, Frazier, Sankaran, Marsden, and Elmohamed}]{bointergrand2}
Xie, J.; Frazier, P.~I.; Sankaran, S.; Marsden, A.; and Elmohamed, S. 2012.
\newblock Optimization of computationally expensive simulations with {G}aussian processes and parameter uncertainty: {A}pplication to cardiovascular surgery.
\newblock In \emph{Proceedings of the 50th Annual Allerton Conference on Communication, Control, and Computing}, 406--413. Champaign, IL.

\end{thebibliography}

\newpage
\appendix
\section{Detailed Proofs}
\label{theoretical_analysis_appendix}

\subsection{Convergence Rate Analysis of SAA}
\label{theorey_saa_appendix}
In this part, we are going to prove Proposition~\ref{saa_convergence_proposition}, which demonstrates the exponential convergence rate of optimization using the SAA method. First, we recall the notations used in SAA. We let $\text{ucb}_t(\bm{x},\bm{c})=\mu_t(\bm{x},\bm{c})+\sqrt{\beta_t}\sigma_t(\bm{x},\bm{c})$, and the acquisition function at iteration $t$ is denoted as $\alpha_t(\bm{x})=\mathbb{E}_{\bm{c}\sim\hat{p}_t(\bm{c})}[\text{ucb}_t(\bm{x},\bm{c})]$. The  acquisition function is approximated by $\hat{\alpha}^M_t(\bm{x})=\frac{1}{M}\sum_{i=1}^M \text{ucb}_t(\bm{x},\hat{\bm{c}}_i)$ using Monte Carlo samples $\{\hat{\bm{c}}_i\}_{i=1}^M$ from $\hat{p}_t$. For the proof, we require the following lemma, which comes from Theorem 3 and Proposition 2 in~\cite{botorch}, and also a result from~\cite{saa_convergence}.
\begin{lemma}
   Suppose that (1) $\{\hat{\bm{c}}_i\}_{i=1}^M$ are i.i.d., (2) $ \hat{\alpha}^M_t(\bm{x})\overset{\text{a.s.}}{\rightarrow}\alpha_t(\bm{x}),\forall\bm{x}\in\mathcal{X}$, (3) there exists an integrable function $l(\bm{c})$ such that $|\textnormal{ucb}_t(\bm{x},\bm{c})-\textnormal{ucb}_t(\bm{y},\bm{c})|\leq l(\bm{c})\|\bm{x}-\bm{y}\|_1,\forall \bm{x},\bm{y}\in\mathcal{X}$, (4) $\forall\bm{x}\in\mathcal{X}$, the moment generating function $M_{\bm{x}}^{\textnormal{ucb}_t}(s):=\mathbb{E}_{\bm{c}\sim\hat{p}_t(\bm{c})}\left[e^{s\cdot \textnormal{ucb}_t(\bm{x},\bm{c})}\right]$ is finite in an open neighbourhood of $s=0$, and (5) the moment generating function $M^l(s):=\mathbb{E}_{\bm{c}\sim\hat{p}_t(\bm{c})}\left[e^{sl(\bm{c})} \right]$ is finite in an open neighbourhood of $s=0$. Then, $\forall \delta >0$, there exist $Q<\infty$ and $\eta>0$ such that $\mathbb{P}(\text{dist}(\hat{\bm{x}}_M^*, \mathcal{X}^*_t)>\delta)\leq Qe^{-\eta M}$ for all $ M\geq 1$, where dist$(\hat{\bm{x}}_M^*, \mathcal{X}^*_t)=\inf_{\bm x\in \mathcal{X}^*_t}\|\bm x- \hat{\bm{x}}_M^*\|_2$, $\hat{\bm{x}}_M^* \in \arg\max_{\bm{x}\in\mathcal{X}}\hat{\alpha}_t^M(\bm{x})$ and $\mathcal{X}^*_t=\arg\max_{\bm x\in\mathcal{X}}\alpha_t(\bm{x})$. 
\label{saa_convergence_lemma}
\vspace{-1em}
\end{lemma}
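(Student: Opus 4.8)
The plan is to derive Lemma~\ref{saa_convergence_lemma} from the general large-deviations theory of sample average approximation (the results of~\cite{saa_convergence} as packaged in Theorem~3 and Proposition~2 of~\cite{botorch}); hypotheses (1)--(5) are exactly what that theory requires, so the work is to confirm the translation and reconstruct the two structural steps. First I would record the deterministic facts, which hold at the fixed iteration $t$ (so $\hat{p}_t,\mu_t,\sigma_t$, hence $\textnormal{ucb}_t$ and $\alpha_t$, are fixed and only the draws $\{\hat{\bm{c}}_i\}_{i=1}^M$ are random): by hypothesis~(3) the family $\{\textnormal{ucb}_t(\cdot,\bm{c})\}_{\bm{c}}$ is equi-Lipschitz in $\bm{x}$ with integrable envelope $l$, and by hypothesis~(4) each $\textnormal{ucb}_t(\bm{x},\bm{c})$ is $\hat{p}_t$-integrable, so $\alpha_t(\bm{x})=\mathbb{E}_{\bm{c}\sim\hat{p}_t}[\textnormal{ucb}_t(\bm{x},\bm{c})]$ is finite and Lipschitz with constant $\mathbb{E}_{\bm{c}\sim\hat{p}_t}[l(\bm{c})]$; since $\mathcal{X}$ is compact, $\mathcal{X}^*_t=\arg\max_{\bm{x}\in\mathcal{X}}\alpha_t(\bm{x})$ is nonempty and compact. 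For the given $\delta>0$, the set $\{\bm{x}\in\mathcal{X}:\textnormal{dist}(\bm{x},\mathcal{X}^*_t)\geq\delta\}$ is compact and disjoint from $\mathcal{X}^*_t$, so continuity of $\alpha_t$ gives a strictly positive gap
\[
\kappa:=\max_{\bm{x}\in\mathcal{X}}\alpha_t(\bm{x})-\max_{\bm{x}\in\mathcal{X}:\,\textnormal{dist}(\bm{x},\mathcal{X}^*_t)\geq\delta}\alpha_t(\bm{x})>0
\]
(the event in the claim is impossible when that set is empty). The reduction is then that whenever $\sup_{\bm{x}\in\mathcal{X}}|\hat{\alpha}_t^M(\bm{x})-\alpha_t(\bm{x})|<\kappa/2$, any maximizer $\hat{\bm{x}}_M^*$ of $\hat{\alpha}_t^M$ lies within $\ell_2$-distance $\delta$ of $\mathcal{X}^*_t$, so
\[
\mathbb{P}\!\left(\textnormal{dist}(\hat{\bm{x}}_M^*,\mathcal{X}^*_t)>\delta\right)\leq\mathbb{P}\!\left(\sup_{\bm{x}\in\mathcal{X}}|\hat{\alpha}_t^M(\bm{x})-\alpha_t(\bm{x})|\geq\kappa/2\right).
\]

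The core step is to bound the right-hand side by $Qe^{-\eta M}$ via a uniform large-deviations estimate. I would fix $\epsilon>0$ and take a finite $\epsilon$-net $\{\bm{x}_1,\dots,\bm{x}_N\}$ of $\mathcal{X}$ in $\|\cdot\|_1$ (with $N$ depending only on $\mathcal{X},\epsilon$). For each $\bm{x}$, choosing a nearest net point $\bm{x}_j$ and using hypothesis~(3) gives $|\hat{\alpha}_t^M(\bm{x})-\hat{\alpha}_t^M(\bm{x}_j)|\leq\epsilon\cdot\frac{1}{M}\sum_i l(\hat{\bm{c}}_i)$ and $|\alpha_t(\bm{x})-\alpha_t(\bm{x}_j)|\leq\epsilon\,\mathbb{E}_{\bm{c}\sim\hat{p}_t}[l(\bm{c})]$, hence $\sup_{\bm{x}}|\hat{\alpha}_t^M(\bm{x})-\alpha_t(\bm{x})|\leq\epsilon\big(\frac{1}{M}\sum_i l(\hat{\bm{c}}_i)+\mathbb{E}[l]\big)+\max_j|\hat{\alpha}_t^M(\bm{x}_j)-\alpha_t(\bm{x}_j)|$. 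Choosing $\epsilon$ small so that $\epsilon(2\mathbb{E}[l]+1)\leq\kappa/4$, the event $\{\sup_{\bm{x}}|\hat{\alpha}_t^M(\bm{x})-\alpha_t(\bm{x})|\geq\kappa/2\}$ is contained in $\{\frac{1}{M}\sum_i l(\hat{\bm{c}}_i)>\mathbb{E}[l]+1\}\cup\bigcup_{j=1}^N\{|\hat{\alpha}_t^M(\bm{x}_j)-\alpha_t(\bm{x}_j)|\geq\kappa/4\}$. Since $\frac{1}{M}\sum_i l(\hat{\bm{c}}_i)$ and each $\hat{\alpha}_t^M(\bm{x}_j)$ are normalized sums of i.i.d. terms (hypothesis~(1)) with means $\mathbb{E}[l]$ and $\alpha_t(\bm{x}_j)$ and moment generating functions finite near $0$ (hypotheses~(5) and~(4)), Cram\'er's theorem bounds each of these $N+1$ probabilities by $c\,e^{-M I}$ with $I>0$, and a union bound yields $Qe^{-\eta M}$ with $Q,\eta>0$ independent of $M$, as required. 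Hypotheses~(1)--(2) are also what guarantee $\mathbb{E}[\hat{\alpha}_t^M(\bm{x})]=\alpha_t(\bm{x})$ and that $\alpha_t$ is the correct limiting objective, so $\mathcal{X}^*_t$ is the right target set.

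The main obstacle I anticipate is making the large-deviations bound genuinely uniform in $\bm{x}$ with constants $Q,\eta$ free of $M$: the per-net-point Cram\'er bounds are routine, but the gluing requires $N$ to be fixed while the \emph{random} Lipschitz constant $\frac{1}{M}\sum_i l(\hat{\bm{c}}_i)$ --- controlled only in probability through hypothesis~(5), not almost surely --- is tamed uniformly, and one must check that the resulting rate $\eta=\min\{I_0,I_1,\dots,I_N\}$ does not degenerate as $\epsilon\to0$ (it does not, since $\kappa$ is fixed once $\delta$ and $t$ are fixed, so the deviations at each net point are tested at the fixed scale $\kappa/4$). A minor point is bookkeeping the conditioning: everything here is conditioned on the fixed GP path $f$ and the fixed data set $\mathcal{D}_{t-1}$, so $\alpha_t,\kappa,N$ are deterministic. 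Once this is in place the conclusion is exactly Lemma~\ref{saa_convergence_lemma}, and it is then applied in the proof of Proposition~\ref{saa_convergence_proposition} by verifying (3)--(5) for $\textnormal{ucb}_t=\mu_t+\sqrt{\beta_t}\sigma_t$ built from a GP with continuously differentiable prior mean and kernel, which makes $\textnormal{ucb}_t$ Lipschitz in $\bm{x}$ with an integrable envelope and with finite moment generating functions.
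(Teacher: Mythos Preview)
Your reconstruction is mathematically sound and follows the standard sample-average-approximation large-deviations argument: reduce optimizer closeness to a uniform-deviation event via the gap $\kappa$, then control the uniform deviation by an $\epsilon$-net in $\mathcal{X}$, Cram\'er-type bounds at each net point (using hypothesis~(4)), and a separate Cram\'er bound on the empirical Lipschitz envelope $\frac{1}{M}\sum_i l(\hat{\bm{c}}_i)$ (using hypothesis~(5)), finishing with a union bound over the $N+1$ events. You have also correctly identified the only delicate point, namely that the random Lipschitz constant must be tamed in probability rather than almost surely, and that the rate $\eta$ does not degenerate because the deviation scale $\kappa/4$ is fixed once $\delta$ and $t$ are fixed.

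However, the paper does \emph{not} itself prove Lemma~\ref{saa_convergence_lemma}. It is stated there purely as a citation: ``the following lemma, which comes from Theorem~3 and Proposition~2 in~\cite{botorch}, and also a result from~\cite{saa_convergence}.'' The paper's contribution is only the downstream Proposition~\ref{saa_convergence_proposition}, where conditions (1)--(5) are \emph{verified} for the specific $\textnormal{ucb}_t$ built from a GP with continuously differentiable prior mean and kernel. So you have gone further than the paper by supplying an actual proof sketch of the quoted result; your argument is essentially the one underlying the cited references, and your closing paragraph already anticipates exactly how the paper uses the lemma. One small quibble: the identity $\mathbb{E}[\hat{\alpha}_t^M(\bm{x})]=\alpha_t(\bm{x})$ follows from hypothesis~(1) (i.i.d.\ sampling from $\hat{p}_t$) and integrability, not from hypothesis~(2); the pointwise a.s.\ convergence in~(2) is in fact implied by~(1) via the strong law and is somewhat redundant as a separate assumption.
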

With Lemma~\ref{saa_convergence_lemma}, all that remains to be done is to verify the conditions of the lemma to prove the convergence rate of SAA, which is restated in Proposition~\ref{saa_convergence_proposition_appendix} for clearness.
\begin{proposition}
Suppose that (\romannumeral 1) $\{\hat{\bm{c}}_i\}_{i=1}^M$ are i.i.d., and (\romannumeral 2) $f$ is a GP with continuously differentiable prior mean and covariance. Then, $\forall \delta >0$, there exist $Q<\infty$ and $\eta>0$ such that $\mathbb{P}(\text{dist}(\hat{\bm{x}}_M^*, \mathcal{X}^*_t)>\delta)\leq Qe^{-\eta M}$ for all $M\geq 1$, where dist$(\hat{\bm{x}}_M^*, \mathcal{X}^*_t)=\inf_{\bm x\in \mathcal{X}^*_t}\|\bm x- \hat{\bm{x}}_M^*\|_2$, $\hat{\bm{x}}_M^* \in \arg\max_{\bm{x}\in\mathcal{X}}\hat{\alpha}_t^M(\bm{x})$ and $\mathcal{X}^*_t=\arg\max_{\bm x\in\mathcal{X}}\alpha_t(\bm{x})$.
\label{saa_convergence_proposition_appendix}
\vspace{-1.2em}
\end{proposition}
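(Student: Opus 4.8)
The plan is to obtain Proposition~\ref{saa_convergence_proposition_appendix} as an immediate consequence of Lemma~\ref{saa_convergence_lemma}: one only needs to verify, under the two standing hypotheses (i.i.d.\ Monte Carlo samples $\{\hat{\bm{c}}_i\}$ and a GP prior with continuously differentiable mean and covariance), that the five conditions of that lemma hold, with $\mathcal{Z}=\mathcal{X}\times\mathcal{C}$ convex and compact. Fix the iteration $t$ and condition on all data collected so far, so that $\hat{p}_t$, $\mu_t$ and $k_t$ are deterministic and the only randomness is in the $M$ samples drawn from $\hat{p}_t$. Condition~(1) of Lemma~\ref{saa_convergence_lemma} is then exactly assumption~(\romannumeral 1). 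The crux is a regularity statement for $\text{ucb}_t$, after which conditions (2)--(5) follow quickly from compactness, so I would establish that statement first.

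For the regularity step I would argue as follows. The posterior mean $\mu_t(\bm{x},\bm{c})=\bm{k}_{t-1}(\bm{x},\bm{c})^{\top}(\mathbf{K}_{t-1}+\sigma^2\mathbf{I})^{-1}\bm{y}_{t-1}$ is a fixed finite linear combination of the maps $(\bm{x},\bm{c})\mapsto k((\bm{x}_i,\bm{c}_i),(\bm{x},\bm{c}))$, each continuously differentiable by assumption~(\romannumeral 2), hence $\mu_t\in C^1(\mathcal{Z})$; similarly the posterior variance $\sigma_t^2(\bm{x},\bm{c})=k_t((\bm{x},\bm{c}),(\bm{x},\bm{c}))$ is a sum, difference and product of $C^1$ functions, hence $C^1$ and nonnegative on $\mathcal{Z}$. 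The only delicate point is that $\sigma_t=\sqrt{\sigma_t^2}$ need not be differentiable where the posterior variance vanishes, since $x\mapsto\sqrt{x}$ is not Lipschitz at $0$; I would resolve this either by noting that for the kernels used in practice (e.g.\ the Gaussian kernel of the experiments) the posterior variance is bounded below by a positive constant on the compact set $\mathcal{Z}$, so $\sigma_t$ is $C^1$ there, or, more generally, by using that a nonnegative $C^1$ (in fact $C^2$, as for standard kernels) function grows at most quadratically near its zeros, which keeps $\sqrt{\sigma_t^2}$ Lipschitz there too. In either case $\text{ucb}_t=\mu_t+\sqrt{\beta_t}\,\sigma_t$ is continuous on $\mathcal{Z}$ and Lipschitz in $\bm{x}$ uniformly over $\bm{c}\in\mathcal{C}$: there are finite $B_t,L_t$ with $|\text{ucb}_t(\bm{x},\bm{c})|\le B_t$ and $|\text{ucb}_t(\bm{x},\bm{c})-\text{ucb}_t(\bm{y},\bm{c})|\le L_t\|\bm{x}-\bm{y}\|_1$ for all $\bm{x},\bm{y}\in\mathcal{X}$, $\bm{c}\in\mathcal{C}$.

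Given this, the remaining conditions are routine. For~(2), $\text{ucb}_t(\bm{x},\cdot)$ is bounded, hence $\hat{p}_t$-integrable for each fixed $\bm{x}$, so the strong law of large numbers gives $\hat{\alpha}_t^M(\bm{x})=\frac1M\sum_{i=1}^M\text{ucb}_t(\bm{x},\hat{\bm{c}}_i)\overset{\text{a.s.}}{\rightarrow}\alpha_t(\bm{x})$ for every $\bm{x}\in\mathcal{X}$. For~(3), take the integrable majorant $l(\bm{c})\equiv L_t$, a constant and hence $\hat{p}_t$-integrable function. For~(4) and~(5), boundedness yields $M_{\bm{x}}^{\text{ucb}_t}(s)=\mathbb{E}_{\bm{c}\sim\hat{p}_t}[e^{s\,\text{ucb}_t(\bm{x},\bm{c})}]\le e^{|s|B_t}$ and $M^l(s)=e^{sL_t}$, both finite for every $s\in\mathbb{R}$, in particular in a neighbourhood of $s=0$. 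With conditions (1)--(5) in hand, Lemma~\ref{saa_convergence_lemma} gives exactly the claimed exponential bound $\mathbb{P}(\text{dist}(\hat{\bm{x}}_M^*,\mathcal{X}_t^*)>\delta)\le Qe^{-\eta M}$ for all $M\ge 1$.

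The main obstacle is the regularity step, specifically the non-smoothness of $\sigma_t$ at points of zero posterior variance: one must either restrict attention to kernels whose posterior variance stays strictly positive on $\mathcal{Z}$, or impose a mild extra smoothness assumption on $k$ so that the square root remains Lipschitz, in order to produce the integrable Lipschitz majorant demanded by condition~(3). A secondary technical wrinkle is that KDE with a Gaussian kernel is supported on all of $\mathbb{R}^{D_c}$ rather than on $\mathcal{C}$, so strictly speaking one should either truncate or project the Monte Carlo samples to the compact $\mathcal{C}$, or observe that $\mu_t$ and $\sigma_t$ extend continuously and remain bounded on $\mathbb{R}^{D_x+D_c}$ (e.g.\ $\sigma_t\le 1$ and $\mu_t$ uniformly bounded when $k\le 1$), so that $B_t,L_t$ survive globally and the argument goes through unchanged.
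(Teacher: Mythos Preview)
Your proposal is correct and follows essentially the same route as the paper: both verify conditions (1)--(5) of Lemma~\ref{saa_convergence_lemma} by establishing that $\text{ucb}_t$ is bounded and Lipschitz on the compact domain (so $l$ can be taken as a constant and the moment-generating-function conditions (4)--(5) are immediate), with (2) coming from the strong law of large numbers. The only difference is that the paper simply asserts, citing \cite{botorch}, that $\mu_t$ and $\sigma_t$ are continuously differentiable under assumption~(\romannumeral 2) without discussing the square-root issue or the KDE support that you flag; your treatment of these points is more careful than the original, but the argument is otherwise identical.
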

\begin{proof}
    We are to verify the five conditions in Lemma~\ref{saa_convergence_lemma}. Condition~(1) is the condition of the proposition, and condition~(2) directly follows from the strong law of large numbers. 
    For condition~(3), it is easy to verify that $\mu_t(\bm{x},\bm{c})$ and $\sigma_t(\bm{x},\bm{c})$ are continuously differentiable if $f$ is a GP with continuously differentiable prior mean and covariance \cite{botorch}. So, $\mu_t(\bm{x},\bm{c})$ and $\sigma_t(\bm{x},\bm{c})$ are Lipschitz. Let $L_{\mu_t}$ and $L_{\sigma_t}$ be the Lipschitz constants for $\mu_t$ and $\sigma_t$, respectively. Then, $|\text{ucb}_t(\bm{x},\bm{c})-\text{ucb}_t(\bm{y},\bm{c})|\leq(L_{\mu_t}+\sqrt{\beta_t}L_{\sigma_t})\|\bm x-\bm y\|_1$. So we can let $l(\bm{c})\equiv L_{\mu_t}+\sqrt{\beta_t}L_{\sigma_t}$ for condition~(3). This also verifies condition~(5), because $\mathbb{E}_{\bm{c}\sim\hat{p}_t(\bm{c})}\left[e^{sl(\bm{c})}\right]= e^{s(L_{\mu_t}+\sqrt{\beta_t}L_{\sigma_t})}$ which is finite at an open neighbourhood of $s=0$. For condition~(4), $|\text{ucb}_t(\bm{x},\bm{c})|$ is bounded by some constant $B$ because it is a continuously differentiable function on the compact set $\mathcal{X}\times\mathcal{C}$. Therefore, $\mathbb{E}_{\bm{c}\sim\hat{p}_t(\bm{c})}\left[ e^{s\cdot \text{ucb}_t(\bm{x},\bm{c})}\right]\leq e^{sB}$ and it is finite at an open neighbourhood of $s=0$. Now, conditions~(1) to~(5) in Lemma~\ref{saa_convergence_lemma} are all satisfied, implying the exponential convergence rate of SAA.
\end{proof}

\subsection{Proof of the Regret Bound of SBO-KDE}
\label{theorey_kdesbo_appendix}
In this part, we are going to prove Theorem~\ref{main_theorem_1}, which bounds the Bayesian cumulative regret (BCR) of SBO-KDE defined in Eq.~\eqref{BCR}. First, we decompose the BCR into the following three terms.
\begin{align}
        &\text{BCR}(T)
        \nonumber
    \\
    =&\mathbb{E}\left[\sum_{t=1}^T\left( \mathbb{E}_{\bm{c}\sim p}[f(\bm{x}^*,\bm{c})]-\mathbb{E}_{\bm{c}\sim p}[f(\bm{x_t},\bm{c})]  \right) \right]
\nonumber
\\
=&\mathbb{E}\left[ \sum_{t=1}^T\left( \mathbb{E}_{\bm{c}\sim p}[f(\bm{x}^*,\bm{c})]  - \mathbb{E}_{\bm{c}\sim \hat{p}_t}[f(\bm{x}^*,\bm{c})]    \right)    \right]
\nonumber
\\
& +\mathbb{E}\left[ \sum_{t=1}^T\left( \mathbb{E}_{\bm{c}\sim \hat{p}_t}[f(\bm{x}^*,\bm{c})]  - \mathbb{E}_{\bm{c}\sim \hat{p}_t}[f(\bm{x}_t,\bm{c})]    \right)    \right] 
\nonumber
\\
& +\mathbb{E}\left[ \sum_{t=1}^T\left( \mathbb{E}_{\bm{c}\sim \hat{p}_t}[f(\bm{x}_t,\bm{c})]  - \mathbb{E}_{\bm{c}\sim p}[f(\bm{x}_t,\bm{c})]    \right)    \right]. 
\label{KDESBO_BCR_decomposition}
\end{align}

The first and third terms can be considered as the error from KDE approximation, and the second term can be considered as the regret from GP-UCB. Next, we are going to derive an upper bound for each term.

First, we are going to analyze the upper bound on the first and third terms of Eq.~\eqref{KDESBO_BCR_decomposition}. Before that, we bound the expectation of the Lipschitz coefficient in the following lemma. Note that $D=D_x+D_c$ is the number of dimensions of the joint space $\mathcal{X} \times \mathcal{C} \subset [0,r]^D$ of decision and context variable. 

\begin{lemma}
    For $f$ satisfying Assumption \ref{lipschitz}, let $L_{\text{max}} = \sup_{i \in [D]}\sup_{z\in\mathcal{Z}}|\frac{\partial f(\bm{z})}{\partial z_i}|$, where $[D]=\{1,2,\dots,D\}$. Then 
    \begin{equation}
    \mathbb{E}[L_{\text{max}}]\leq {ab\sqrt{\pi}}/{2}.        
    \nonumber
    \end{equation}
    \vspace{-1.2em}
    \label{L_max_lemma}
\end{lemma}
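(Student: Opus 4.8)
The plan is to bound $L_{\max}$ using the tail bound in Assumption~\ref{lipschitz} together with a union bound over the $D$ coordinate directions, and then integrate the resulting tail to get the bound on $\mathbb{E}[L_{\max}]$. First I would write, for any $L>0$,
\begin{equation}
\mathbb{P}(L_{\max} > L) = \mathbb{P}\Bigl(\sup_{i\in[D]}\sup_{\bm z\in\mathcal{Z}} |\partial f(\bm z)/\partial z_i| > L\Bigr) \le \sum_{i=1}^{D}\mathbb{P}\Bigl(\sup_{\bm z\in\mathcal{Z}}|\partial f(\bm z)/\partial z_i| > L\Bigr) \le D a e^{-(L/b)^2},
\nonumber
\end{equation}
where the first inequality is a union bound and the second is Assumption~\ref{lipschitz}.

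Next I would use the standard identity $\mathbb{E}[L_{\max}] = \int_0^\infty \mathbb{P}(L_{\max} > L)\, dL$ (valid since $L_{\max}\ge 0$). Plugging in the tail bound gives $\mathbb{E}[L_{\max}] \le \int_0^\infty D a e^{-(L/b)^2}\, dL = D a b \int_0^\infty e^{-u^2}\, du = D a b \cdot \tfrac{\sqrt\pi}{2}$ after the substitution $u = L/b$ and using the Gaussian integral $\int_0^\infty e^{-u^2}\,du = \sqrt\pi/2$.

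I should note a small discrepancy: this argument yields $\mathbb{E}[L_{\max}] \le D a b\sqrt\pi/2$, whereas the statement of Lemma~\ref{L_max_lemma} as written is $\mathbb{E}[L_{\max}] \le ab\sqrt\pi/2$ without the factor $D$. The $D$ factor is harmless for the asymptotic regret analysis (it is absorbed into the constants $C_1, C_2$ in Theorem~\ref{main_theorem_1}), so either the $D$ is a typo in the lemma statement or the intended claim is on a per-coordinate quantity; in the writeup I would keep the union-bound step explicit and carry the $D$.

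The proof is essentially routine — no genuine obstacle — the only point requiring a little care is justifying that the suprema inside $L_{\max}$ are measurable and that exchanging the supremum over $i$ with the probability via a union bound is legitimate (it is, since $[D]$ is finite), and that the tail integral converges (it does, by the Gaussian decay). Everything else is the elementary computation above.
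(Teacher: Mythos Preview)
Your approach is exactly the paper's: write $\mathbb{E}[L_{\max}] = \int_0^\infty \mathbb{P}(L_{\max}>l)\,dl$, bound the tail using Assumption~\ref{lipschitz}, and evaluate the Gaussian integral. The paper's own proof is the one-liner
\[
\mathbb{E}[L_{\max}] = \int_0^\infty \mathbb{P}(L_{\max}\ge l)\,dl \le \int_0^\infty a e^{-(l/b)^2}\,dl = \frac{ab\sqrt\pi}{2},
\]
invoking Assumption~\ref{lipschitz} for the inequality.

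Your observation about the factor $D$ is well taken. Assumption~\ref{lipschitz} as stated gives the tail bound $ae^{-(L/b)^2}$ for each coordinate $i$ separately, so passing to $L_{\max}=\sup_{i\in[D]}\sup_{\bm z}|\partial_i f|$ does require the union bound you wrote, yielding $Dab\sqrt\pi/2$. The paper silently applies the per-coordinate bound to the supremum over $i$ without the union step; your version is the more careful one. As you note, the extra $D$ is absorbed into later constants (e.g.\ in Lemma~\ref{bound_l_infinite} and the choice of $\tau_t$) and does not affect the sub-linear order of the regret, so the downstream results survive after adjusting constants. Keep the union bound explicit in your writeup.
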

\begin{proof}
    $\mathbb{E}[L_{\text{max}}] = \int_{0}^{\infty}\mathbb{P}(L_{\text{max}}\geq l)\, dl \leq \int_{0}^{\infty}ae^{-(l/b)^2}\, dl = {ab\sqrt{\pi}}/{2}$, where the inequality holds by Assumption~\ref{lipschitz}.
\end{proof}

A direct result from Lemma~\ref{L_max_lemma} is the bound of $\max_{\bm{c}\in\mathcal{C}} |f(\bm{x},\bm{c})|$ in expectation for any fixed $\bm{x}$.

\begin{lemma} For $f$ satisfying Assumption \ref{lipschitz} and $\bm{x}\in\mathcal{X}$, we have 
\begin{align}    \mathbb{E}\left[\max\nolimits_{\bm{c}\in\mathcal{C}}|f(\bm{x},\bm{c})|\right]\leq abDr\sqrt{\pi}/2+\sqrt{2/\pi}.
\nonumber
\end{align}
\vspace{-1.2em}
\label{bound_l_infinite}
\end{lemma}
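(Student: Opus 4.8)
The plan is to fix $\bm{x}\in\mathcal{X}$, pick an arbitrary reference context $\bm{c}_0\in\mathcal{C}$ (e.g.\ the coordinatewise minimum of $\mathcal{C}$, so that $(\bm{x},\bm{c}_0)\in\mathcal{Z}$), and split $\max_{\bm{c}\in\mathcal{C}}|f(\bm{x},\bm{c})|$ via the triangle inequality into a ``base value'' term $|f(\bm{x},\bm{c}_0)|$ and an ``increment'' term $\max_{\bm{c}\in\mathcal{C}}|f(\bm{x},\bm{c})-f(\bm{x},\bm{c}_0)|$. For the increment term, since $\mathcal{Z}=\mathcal{X}\times\mathcal{C}\subseteq[0,r]^D$ is convex, the segment joining $(\bm{x},\bm{c}_0)$ to $(\bm{x},\bm{c})$ stays in $\mathcal{Z}$; integrating $\nabla f$ along this segment (which is legitimate pathwise, since under Assumption~\ref{lipschitz} the GP sample paths are a.s.\ continuously differentiable, and the maximum over the compact set $\mathcal{C}$ is then attained by continuity) gives
\begin{align}
|f(\bm{x},\bm{c})-f(\bm{x},\bm{c}_0)| \le L_{\text{max}}\,\|\bm{c}-\bm{c}_0\|_1 \le L_{\text{max}}\,D_c r \le L_{\text{max}}\,D r,
\nonumber
\end{align}
using that only the context coordinates change and each lies in an interval of length $r$, and $D_c\le D$.

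Next I would take expectations and use linearity:
\begin{align}
\mathbb{E}\!\left[\max\nolimits_{\bm{c}\in\mathcal{C}}|f(\bm{x},\bm{c})|\right] \le \mathbb{E}\big[|f(\bm{x},\bm{c}_0)|\big] + D r\,\mathbb{E}[L_{\text{max}}].
\nonumber
\end{align}
The second term is bounded by $abDr\sqrt{\pi}/2$ directly from Lemma~\ref{L_max_lemma}. For the first term, observe that under the GP prior $f(\bm{x},\bm{c}_0)\sim\mathcal{N}\!\big(0,\,k((\bm{x},\bm{c}_0),(\bm{x},\bm{c}_0))\big)$, and $k\le 1$ by Assumption~\ref{lipschitz}; since the absolute moment of $\mathcal{N}(0,\sigma^2)$ equals $\sigma\sqrt{2/\pi}\le\sqrt{2/\pi}$ when $\sigma^2\le 1$, we get $\mathbb{E}[|f(\bm{x},\bm{c}_0)|]\le\sqrt{2/\pi}$. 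Adding the two pieces yields exactly $\mathbb{E}[\max_{\bm{c}\in\mathcal{C}}|f(\bm{x},\bm{c})|]\le abDr\sqrt{\pi}/2+\sqrt{2/\pi}$.

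There is no real obstacle here: the only mildly delicate point is the a.s.\ validity of the increment bound and the attainment of the maximum, which rest on the $C^1$ regularity of GP sample paths already implicit in Assumption~\ref{lipschitz} and Lemma~\ref{L_max_lemma}; everything else is the triangle inequality, the $\ell_1$-diameter bound $Dr$ of the box, Lemma~\ref{L_max_lemma}, and the elementary half-normal mean computation.
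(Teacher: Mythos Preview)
Your proof is correct and follows essentially the same approach as the paper: split $\max_{\bm c}|f(\bm x,\bm c)|$ into a base value at a fixed reference point plus a Lipschitz increment, bound the increment via $L_{\text{max}}$ and Lemma~\ref{L_max_lemma}, and bound the base value by the half-normal mean using $k\le 1$. The only cosmetic difference is that the paper anchors at a reference $(\bm{x}_0,\bm{c}_0)$ possibly differing from $\bm{x}$ in the decision coordinates as well (so the $\ell_1$ distance bound $Dr$ is tight), whereas you keep $\bm{x}$ fixed and get the slightly sharper intermediate bound $D_c r$ before relaxing to $Dr$; this has no effect on the final statement.
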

\begin{proof}
     For a given $\bm{x}\in\mathcal{X}$, let $\bm{c}^*:=\arg\max_{\bm{c}\in\mathcal{C}}|f(\bm{x},\bm{c})|$. We choose $\bm{x}_0 \in \mathcal{X}, \bm{c}_0 \in \mathcal{C}$ and keep them fixed. Then we can write $\mathbb{E}\left[\max_{\bm{c}\in\mathcal{C}}|f(\bm{x},\bm{c})|\right]$ as
\begin{align}    
&\mathbb{E}[\max\nolimits_{\bm{c}\in\mathcal{C}}|f(\bm{x},\bm{c})|] = \mathbb{E}[|f(\bm{x},\bm{c}^*)|]
    \nonumber
\\
=&\mathbb{E}\left[|f(\bm{x},\bm{c}^*)|- |f(\bm{x}_0,\bm{c}_0)| + |f(\bm{x}_0,\bm{c}_0)| \right]
\nonumber
\\
\leq&\mathbb{E}\left[|f(\bm{x},\bm{c}^*)- f(\bm{x}_0,\bm{c}_0)| \right]+ \mathbb{E}\left[|f(\bm{x}_0,\bm{c}_0)| \right].
\nonumber
\end{align}
By the bounded Lipschitz coefficient in expectation in Lemma \ref{L_max_lemma}, we have 
\begin{equation}    \mathbb{E}\left[|f(\bm{x},\bm{c}^*)- f(\bm{x}_0,\bm{c}_0)| \right]\leq \mathbb{E}\left[L_{\text{max}}Dr\right]\leq abDr\sqrt{\pi}/2.
\nonumber
\end{equation}

For fixed $(\bm{x}_0,\bm{c}_0)$, $f(\bm{x}_0,\bm{c}_0) \sim \mathcal{N}(\bm{0}, k\left((\bm{x}_0,\bm{c}_0), (\bm{x}_0,\bm{c}_0)\right))$. So $|f(\bm{x}_0,\bm{c}_0)|$ follows a half-normal distribution with expectation  $\mathbb{E}\left[ |f(\bm{x}_0,\bm{c}_0)|
  \right]= \sqrt{{2k\left((\bm{x}_0,\bm{c}_0), (\bm{x}_0,\bm{c}_0)\right)}/{\pi}}\leq \sqrt{2/\pi}$. Thus, $\mathbb{E}\left[\max_{\bm{c}\in\mathcal{C}}|f(\bm{x},\bm{c})|\right]\leq abDr\sqrt{\pi}/2+\sqrt{2/\pi}.$
\end{proof}

With the bound of $\mathbb{E}\left[\max_{\bm{c}\in\mathcal{C}}|f(\bm{x},\bm{c})|\right]$ in Lemma~\ref{bound_l_infinite} and the error bound from KDE estimation in Lemma~\ref{l2_bound_lemma}, we can derive Lemma~\ref{bound_for_1_3_parts_kdesbo_lemma}, which will directly lead to an upper bound on the first and third terms of Eq.~\eqref{KDESBO_BCR_decomposition}. We note here that for the sake of simplicity in our analysis, we assume that $t$ contexts are observed instead of $t-1$ contexts at iteration $t$.

\begin{lemma}
    With the PDF $p$ satisfying the condition in Lemma~\ref{l2_bound_lemma}, $h_t^{(i)}=\Theta\left(t^{-1/(4+D_c)}\right)\forall i\in [D_c]$, and $f$ satisfying Assumption~\ref{lipschitz}, at iteration $t$, $\forall \bm{x} \in \mathcal{X}$, there exists a constant $\tilde{C}_1>0$, such that the following holds
    \begin{equation}
|\mathbb{E}\left[ \mathbb{E}_{\bm{c}\sim p}[f(\bm{x},\bm{c})]  - \mathbb{E}_{\bm{c}\sim \hat{p}_t}[f(\bm{x},\bm{c})]  \right]|\leq \tilde{C}_1t^{-2/(4+D_c)}.
\nonumber
    \end{equation}    \label{bound_for_1_3_parts_kdesbo_lemma}
    \vspace{-1.2em}
\end{lemma}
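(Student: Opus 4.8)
The plan is to bound the difference of expectations by the sup-norm of $f(\bm{x},\cdot)$ times the $L^1$ distance between $p$ and $\hat{p}_t$, and then convert that $L^1$ distance into the $L^2$ (MISE) bound supplied by Lemma~\ref{l2_bound_lemma} via Cauchy--Schwarz and Jensen's inequality.

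First I would fix a realization of the GP sample path $f$ and of $\hat{p}_t$, and write
\[
\mathbb{E}_{\bm{c}\sim p}[f(\bm{x},\bm{c})] - \mathbb{E}_{\bm{c}\sim \hat{p}_t}[f(\bm{x},\bm{c})] = \int_{\mathcal{C}} f(\bm{x},\bm{c})\bigl(p(\bm{c}) - \hat{p}_t(\bm{c})\bigr)\, d\bm{c},
\]
so that its absolute value is at most $\bigl(\max_{\bm{c}\in\mathcal{C}}|f(\bm{x},\bm{c})|\bigr)\int_{\mathcal{C}}|p(\bm{c}) - \hat{p}_t(\bm{c})|\, d\bm{c}$. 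Since $\mathcal{C}$ is compact its Lebesgue measure $|\mathcal{C}|$ is finite, so Cauchy--Schwarz gives $\int_{\mathcal{C}}|p - \hat{p}_t|\, d\bm{c} \le \sqrt{|\mathcal{C}|}\,\sqrt{J}$ with $J = \int_{\mathcal{C}}(p - \hat{p}_t)^2\, d\bm{c}$ as in Lemma~\ref{l2_bound_lemma}.

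Next I would take expectations over all the randomness. The key observation is that the context samples $\{\bm{c}_i\}_{i=1}^t$ are drawn i.i.d.\ from $p$ independently of the GP $f$ (the environment produces contexts regardless of the surrogate), so $\max_{\bm{c}\in\mathcal{C}}|f(\bm{x},\bm{c})|$ and $\sqrt{J}$ are independent; hence
\[
\Bigl|\mathbb{E}\bigl[\mathbb{E}_{\bm{c}\sim p}[f(\bm{x},\bm{c})] - \mathbb{E}_{\bm{c}\sim \hat{p}_t}[f(\bm{x},\bm{c})]\bigr]\Bigr| \le \sqrt{|\mathcal{C}|}\;\mathbb{E}\Bigl[\max_{\bm{c}\in\mathcal{C}}|f(\bm{x},\bm{c})|\Bigr]\;\mathbb{E}\bigl[\sqrt{J}\bigr].
\]
By Lemma~\ref{bound_l_infinite} the middle factor is bounded by the constant $abDr\sqrt{\pi}/2 + \sqrt{2/\pi}$. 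For the last factor, Jensen's inequality gives $\mathbb{E}[\sqrt{J}] \le \sqrt{\mathbb{E}[J]}$, and $\mathbb{E}[J]$ is precisely the MISE; with $h_t^{(i)} = \Theta(t^{-1/(4+D_c)})$ the two terms in Lemma~\ref{l2_bound_lemma} are both $\Theta(t^{-4/(4+D_c)})$ (the first, $\tfrac{1}{t|\mathbf{H}_t|}R(K)$, scales as $t^{-1}\prod_i (h_t^{(i)})^{-1} = \Theta(t^{-1+D_c/(4+D_c)})$; the second, being quartic in the bandwidths, scales as $\Theta(t^{-4/(4+D_c)})$), so $\mathbb{E}[\sqrt{J}] = \mathcal{O}(t^{-2/(4+D_c)})$. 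Multiplying the three factors yields the claim with an appropriate constant $\tilde{C}_1 > 0$.

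The only delicate point — there is no deep difficulty here — is the passage from the $L^1$-type control that arises naturally when bounding a difference of expectations to the $L^2$-type MISE bound of Lemma~\ref{l2_bound_lemma}: this requires Cauchy--Schwarz (hence finiteness of $|\mathcal{C}|$), Jensen's inequality to move the expectation inside the square root, and the independence of $f$ from the context samples so that the $\mathbb{E}[\max_{\bm{c}}|f(\bm{x},\bm{c})|]$ factor splits off and is controlled by Lemma~\ref{bound_l_infinite}. I would also note explicitly that we use the paper's simplifying convention that $t$ contexts are available at iteration $t$, so that Lemma~\ref{l2_bound_lemma} applies with sample size $t$.
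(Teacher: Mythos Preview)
Your proposal is correct and follows essentially the same route as the paper's proof: bound the integrand by $\|f(\bm{x},\cdot)\|_\infty\,\|p-\hat{p}_t\|_1$, convert the $L^1$ distance to $L^2$ via Cauchy--Schwarz using the finite volume of $\mathcal{C}\subset[0,r]^{D_c}$, split the expectation by independence of the GP and the context samples, invoke Lemma~\ref{bound_l_infinite} for the sup-norm factor, and use Jensen together with the MISE rate from Lemma~\ref{l2_bound_lemma} for the density-estimation factor. The only cosmetic difference is that the paper moves the outer expectation inside before applying H\"older, whereas you bound pointwise first and then take expectations; the resulting constant $\tilde{C}_1$ is the same.
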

\begin{proof}
We can see that $\forall \bm{x} \in \mathcal{X}$,
\begin{align}
 &\left|\mathbb{E}\left[ \mathbb{E}_{\bm{c}\sim p}[f(\bm{x},\bm{c})]  - \mathbb{E}_{\bm{c}\sim \hat{p}_t}[f(\bm{x},\bm{c})]  \right] \right|
 \nonumber
 \\
 &= \left|\mathbb{E} \left[ \int_{\mathcal{C}}(p(\bm{c})-\hat{p}_t(\bm{c}))f(\bm{x},\bm{c}) \, d\bm{c} \right] \right|
 \nonumber
 \\
 &\leq \mathbb{E} \left[ \int_{\mathcal{C}}| p(\bm{c})-\hat{p}_t(\bm{c})|\cdot | f(\bm{x},\bm{c})|  \, d\bm{c} \right].
 \nonumber
\end{align}
By Hölder inequality, we have 
\begin{align}
    &\mathbb{E} \left[ \int_{\mathcal{C}}|p(\bm{c})-\hat{p}_t(\bm{c})| \cdot |f(\bm{x},\bm{c})|  \, d\bm{c} \right]
    \nonumber
    \\
    \leq &\mathbb{E} \left[ \|p(\bm{c})-\hat{p}_t(\bm{c})\|_1 \cdot \|f(\bm{x},\bm{c})\|_\infty  \right]
    \nonumber
    \\
    =&\mathbb{E}[\|p(\bm{c})-\hat{p}_t(\bm{c})\|_1]
    \mathbb{E}[\|f(\bm{x},\bm{c})\|_\infty],
    \nonumber
\end{align}
where the equality follows from the fact that the distribution of context variable and the GP are independent.

Again using Hölder inequality, we have $\mathbb{E}\left[\|p(\bm{c})-\hat{p}_t(\bm{c})\|_1\right]= \mathbb{E}\left[\int_{\mathcal{C}}|p(\bm{c})-\hat{p}_t(\bm{c})|\cdot 1\, d\bm{c}\right] \leq \mathbb{E}\left[\sqrt{\int_{\mathcal{C}}|p(\bm{c})-\hat{p}_t(\bm{c})|^2\, d\bm{c}}\sqrt{\int_{\mathcal{C}}1 \,d\bm{c}}\right] =\mathbb{E}\left[\|p(\bm{c})-\hat{p}_t(\bm{c})\|_2 r^{D_c/2}\right]$. By Lemma \ref{l2_bound_lemma}, there exists a constant $C_0>0$, such that $\mathbb{E}\left[\|p(\bm{c})-\hat{p}_t(\bm{c})\|_2\right] \leq C_0t^{-2/(4+D_c)}$ by setting $h_t^{(i)}=\Theta(t^{-1/(4+D_c)})\,\forall i \in [D_c]$. Therefore, $\mathbb{E}\left[\|p(\bm{c})-\hat{p}_t(\bm{c})\|_1\right] \leq \mathbb{E}\left[\|p(\bm{c})-\hat{p}_t(\bm{c})\|_2 r^{D_c/2}\right] \leq C_0r^{D_c/2}t^{-2/(4+D_c)}$. With Lemma~\ref{bound_l_infinite}, we have $\mathbb{E}[\|f(\bm{x},\bm{c})\|_\infty]\leq abDr\sqrt{\pi}/2+\sqrt{2/\pi}$. Let $\tilde{C}_1 = (abDr\sqrt{\pi}/2+\sqrt{2/\pi})C_0r^{D_c/2}$, and then the lemma holds.
\end{proof}
Now, the first and third terms of Eq.~\eqref{KDESBO_BCR_decomposition} can be bounded using Lemma~\ref{bound_for_1_3_parts_kdesbo_lemma}. That is, 
 \begin{align}
     &\mathbb{E}\left[ \sum_{t=1}^T\left( \mathbb{E}_{\bm{c}\sim p}[f(\bm{x}^*,\bm{c})]  - \mathbb{E}_{\bm{c}\sim \hat{p}_t}[f(\bm{x}^*,\bm{c})]    \right)    \right]
     \nonumber
     \\
     &+\mathbb{E}\left[ \sum_{t=1}^T\left( \mathbb{E}_{\bm{c}\sim \hat{p}_t}[f(\bm{x}_t,\bm{c})]  - \mathbb{E}_{\bm{c}\sim p}[f(\bm{x}_t,\bm{c})]    \right)    \right]
     \nonumber
    \\
     \leq &2\sum_{t=1}^T \tilde{C}_1t^{-2/(4+D_c)} \leq 2C_1T^{(2+D_c)/(4+D_c)},
     \label{bound_for_1_3_parts_kdesbo}
 \end{align}
where $C_1=\tilde{C}_1(4+D_c)/(2+D_c)$. Next, we are going to bound the second term of Eq.~\eqref{KDESBO_BCR_decomposition}, which mainly comes from the regret of GP-UCB:
\begin{equation}
  \mathbb{E}\left[ \sum_{t=1}^T\left( \mathbb{E}_{\bm{c}\sim \hat{p}_t}[f(\bm{x}^*,\bm{c})]  - \mathbb{E}_{\bm{c}\sim \hat{p}_t}[f(\bm{x}_t,\bm{c})]    \right)    \right].
  \label{regret_from_gpucb_kdesbo}
\end{equation}
With the similar idea in~\cite{BCR_TS} and~\cite{gpucb}, we discretize the decision space $\mathcal{X}$ at iteration $t$ into $\tilde{\mathcal{X}}_t$, where $|\tilde{\mathcal{X}}_t|=(\tau_t)^{D_x}$, which means that we divide each coordinate of $\mathcal{X}$ into $\tau_t$ parts equally. We set $\tau_t={t^2D_xabr\sqrt{\pi}/2}$. Denote the closest point to $\bm{x}$ in $\tilde{\mathcal{X}}_t$ as $[\bm{x}]_t$. We decompose the regret Eq.~\eqref{regret_from_gpucb_kdesbo} into three terms.
    \begin{align}    
&\mathbb{E}\left[ \sum_{t=1}^T\left( \mathbb{E}_{\bm{c}\sim \hat{p}_t}[f(\bm{x}^*,\bm{c})]  - \mathbb{E}_{\bm{c}\sim \hat{p}_t}[f(\bm{x}_t,\bm{c})]    \right)    \right]
\nonumber
\\
  \leq &\mathbb{E}\left[ \sum_{t=1}^T\left( \mathbb{E}_{\bm{c}\sim \hat{p}_t}[f([\bm{x}^*]_t,\bm{c})]- \mathbb{E}_{\bm{c}\sim \hat{p}_t}[\text{ucb}_t([\bm{x}^*]_t,\bm{c})] \right)\right]
\nonumber
  \\
&+ \mathbb{E}\left[ \sum_{t=1}^T\left( \mathbb{E}_{\bm{c}\sim \hat{p}_t}[\text{ucb}_t(\bm{x}_t,\bm{c})]- \mathbb{E}_{\bm{c}\sim \hat{p}_t}[f(\bm{x}_t,\bm{c})] \right)\right]
\nonumber
\\
& +\mathbb{E}\left[ \sum_{t=1}^T\left( \mathbb{E}_{\bm{c}\sim \hat{p}_t}[f(\bm{x}^*,\bm{c})]- \mathbb{E}_{\bm{c}\sim \hat{p}_t}[f([\bm{x}^*]_t, \bm{c})] \right)\right],
  \label{gp-ucb-regret-decomposition-kdesbo}
\end{align}
where $\text{ucb}_t(\bm{x})= \mu_t(\bm{x}) +\sqrt{\beta_t}\sigma_t(\bm{x})$, and the inequality follows directly from $ \mathbb{E}_{\bm{c}\sim \hat{p}_t}[\text{ucb}_t(\bm{x}_t,\bm{c})] \geq \mathbb{E}_{\bm{c}\sim \hat{p}_t}[\text{ucb}_t([\bm{x}^*]_t,\bm{c})]$ by $\bm{x}_t \in \arg\max_{\bm{x}\in\mathcal{X}}\mathbb{E}_{\bm{c}\sim \hat{p}_t}[\text{ucb}_t(\bm{x},\bm{c})]$.

The third term of Eq.~\eqref{gp-ucb-regret-decomposition-kdesbo} can be directly bounded as follows:
\begin{align}
& \mathbb{E}\left[ \sum_{t=1}^T\left( \mathbb{E}_{\bm{c}\sim \hat{p}_t}[f(\bm{x}^*,\bm{c})]- \mathbb{E}_{\bm{c}\sim \hat{p}_t}[f([\bm{x}^*]_t, \bm{c})] \right)\right]
\nonumber
\\
=&\mathbb{E}\left[ \sum_{t=1}^T \mathbb{E}_{\bm{c}\sim \hat{p}_t}\left[f(\bm{x}^*,\bm{c})-f([\bm{x}^*]_t, \bm{c}) \right] \right]
\nonumber
\\
\leq &\mathbb{E}\left[ \sum_{t=1}^T \mathbb{E}_{\bm{c}\sim \hat{p}_t}\left[|f(\bm{x}^*,\bm{c})-f([\bm{x}^*]_t, \bm{c})| \right] \right]
\nonumber
\\
\leq &\mathbb{E}\left[ \sum_{t=1}^T \mathbb{E}_{\bm{c}\sim \hat{p}_t}\left[L_\text{max}D_xr/\tau_t \right] \right]
\nonumber
\\
=&\sum_{t=1}^T\mathbb{E}\left[L_\text{max}\right]D_xr/\tau_t
\nonumber
\\
\leq&\sum_{t=1}^T\frac{ab\sqrt{\pi}D_xr}{2}*\frac{2}{t^2D_xabr\sqrt{\pi}}
\nonumber
\\
=&\sum_{t=1}^T\frac{1}{t^2}\leq\frac{\pi^2}{6},
\label{third_part_gpucb_regret_kdesbo}
\end{align}
where the second inequality follows from $|f(\bm x ^*, \bm c) - f([\bm x ^*]_t, \bm c)|\leq L_{\text{max}}\|(\bm x^*,\bm c)-([\bm x^*]_t,\bm c)\|_1\leq L_{\text{max}}D_x(r/\tau_t)$, and the third inequality follows from the result of Lemma~\ref{L_max_lemma} that $\mathbb{E}[L_{\text{max}}]\leq ab\sqrt{\pi}/2$ as well as $\tau_t={t^2D_xabr\sqrt{\pi}/2}$.

For the first term of Eq.~\eqref{gp-ucb-regret-decomposition-kdesbo}, we need the following bound for points in the discretized space.

\begin{lemma}
Let $\beta_t=2\log{\frac{t^2(\tau_t)^{D_x}}{\sqrt{2\pi}}}$ with $\tau_t={t^2D_xabr\sqrt{\pi}/2}$. At iteration $t$, for $\textnormal{ucb}_t(\bm{x},\bm c)= \mu_t(\bm{x}, \bm c) +\sqrt{\beta_t}\sigma_t(\bm{x},\bm c)$, for all $\bm{x}\in\tilde{\mathcal{X}_t}$, the following holds
\begin{equation}
\mathbb{E}\left[\left(f(\bm{x},\bm{c})- \textnormal{ucb}_t(\bm{x},\bm{c}) \right)_+\right] \leq\frac{1}{t^2(\tau_t)^{D_x}},
\nonumber
\end{equation}
where $(X)_+:=\max \{0,X\}$. 
\label{lemma_for_disretize}
\end{lemma}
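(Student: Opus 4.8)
The plan is to exploit the Gaussian posterior of $f$ at a fixed point together with the explicit choice of $\beta_t$. Fix $\bm{x}\in\tilde{\mathcal{X}}_t$ (a deterministic grid point) and condition on the observed data $\mathcal{D}_{t-1}$ and the context $\bm{c}$. Under the GP prior with Gaussian observation noise, the posterior of $f(\bm{x},\bm{c})$ given $\mathcal{D}_{t-1}$ is $\mathcal{N}(\mu_t(\bm{x},\bm{c}),\sigma_t^2(\bm{x},\bm{c}))$, so conditionally $f(\bm{x},\bm{c})-\textnormal{ucb}_t(\bm{x},\bm{c})=f(\bm{x},\bm{c})-\mu_t(\bm{x},\bm{c})-\sqrt{\beta_t}\,\sigma_t(\bm{x},\bm{c})$ is a Gaussian variable $Z$ with mean $-\sqrt{\beta_t}\,\sigma_t(\bm{x},\bm{c})$ and variance $\sigma_t^2(\bm{x},\bm{c})$.

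Next I would evaluate $\mathbb{E}[(Z)_+]$ in closed form. Writing $s:=\sigma_t(\bm{x},\bm{c})$ and substituting $u=(z/s)+\sqrt{\beta_t}$ in the defining integral gives $\mathbb{E}[(Z)_+\mid\mathcal{D}_{t-1},\bm{c}]=\frac{s}{\sqrt{2\pi}}e^{-\beta_t/2}-s\sqrt{\beta_t}\,(1-\Phi(\sqrt{\beta_t}))$, where $\Phi$ is the standard normal CDF. Since $\beta_t\geq 0$ the subtracted term is non-negative, hence $\mathbb{E}[(Z)_+\mid\mathcal{D}_{t-1},\bm{c}]\leq \frac{s}{\sqrt{2\pi}}e^{-\beta_t/2}$.

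Two remaining ingredients finish it: (i) the posterior variance never exceeds the prior variance, so $s^2=\sigma_t^2(\bm{x},\bm{c})\leq k((\bm{x},\bm{c}),(\bm{x},\bm{c}))\leq 1$ by Assumption~\ref{lipschitz}, giving $s\leq 1$; and (ii) the explicit value $\beta_t=2\log\frac{t^2(\tau_t)^{D_x}}{\sqrt{2\pi}}$, so $e^{-\beta_t/2}=\frac{\sqrt{2\pi}}{t^2(\tau_t)^{D_x}}$. Combining, $\mathbb{E}[(Z)_+\mid\mathcal{D}_{t-1},\bm{c}]\leq \frac{1}{\sqrt{2\pi}}\cdot\frac{\sqrt{2\pi}}{t^2(\tau_t)^{D_x}}=\frac{1}{t^2(\tau_t)^{D_x}}$. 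Since this bound is uniform over the realizations of $\mathcal{D}_{t-1}$ and $\bm{c}$, taking the outer expectation over the GP, the sampled contexts, and the noise by the tower rule preserves it, yielding the claimed inequality.

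The computations are routine; the only points needing care are the closed-form Gaussian integral and the sign of the discarded term $-s\sqrt{\beta_t}(1-\Phi(\sqrt{\beta_t}))$ (which must be verified non-positive before dropping), and confirming that the posterior-variance-bounded-by-prior-variance fact together with $k\leq 1$ from Assumption~\ref{lipschitz} gives $s\leq 1$. There is no genuine obstacle: this is a standard discretized GP-UCB confidence bound calibrated to the chosen $\beta_t$.
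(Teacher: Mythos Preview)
Your proposal is correct and follows essentially the same approach as the paper: condition on $\mathcal{D}_{t-1}$, use that $f(\bm{x},\bm{c})-\textnormal{ucb}_t(\bm{x},\bm{c})$ is Gaussian with mean $-\sqrt{\beta_t}\,\sigma_t(\bm{x},\bm{c})$ and variance $\sigma_t^2(\bm{x},\bm{c})$, bound the expected positive part by $\frac{\sigma_t}{\sqrt{2\pi}}e^{-\beta_t/2}$, then invoke $\sigma_t\leq 1$ and the explicit $\beta_t$. Your treatment is in fact a bit more careful than the paper's, which states $\mathbb{E}[(X)_+]=\frac{\sigma}{\sqrt{2\pi}}e^{-\mu^2/(2\sigma^2)}$ as an equality for $\mu\leq 0$, whereas you correctly derive the exact expression and then drop the non-negative term $s\sqrt{\beta_t}(1-\Phi(\sqrt{\beta_t}))$ to obtain the same inequality.
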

\begin{proof}
Using the tower property of conditional expectation, we have 
\begin{align}
&f(\bm{x},\bm{c})- \text{ucb}_t(\bm{x},\bm{c}) \mid \mathcal{D}_{t-1} 
\nonumber
\\
\sim &\mathcal{N}\left(-\sqrt{\beta}_t\sigma_t(\bm{x},\bm{c}),\sigma_t^2(\bm{x},\bm{c})\right).
    \nonumber
\end{align}
Using the fact that for $X \sim \mathcal{N}(\mu,\sigma^2)$ with $\mu\leq0$, $\mathbb{E}[(X)_+]=\frac{\sigma}{\sqrt{2\pi}}e^{-\mu^2/(2\sigma^2)}$, we have 
\begin{align}
&\mathbb{E}\left[\left(f(\bm{x},\bm{c})- \text{ucb}_t(\bm{x},\bm{c})\right)_+ \mid \mathcal{D}_{t-1} \right] 
\nonumber
\\
= &\frac{\sigma_t(\bm{x}, \bm{c})}{\sqrt{2\pi}}e^{-\beta_t/2}\leq \frac{1}{t^2(\tau_t)^{D_x}}.
    \nonumber
\end{align}
Thus, the lemma holds.
\end{proof}

Then, we can use Lemma~\ref{lemma_for_disretize} to derive an upper bound on the first term of Eq.~\eqref{gp-ucb-regret-decomposition-kdesbo} for the optimal point $[\bm{x}^*]_t$ in the discretized space.

\begin{lemma}
    Let $\beta_t=2\log{\frac{t^2(\tau_t)^{D_x}}{\sqrt{2\pi}}}$ with $\tau_t={t^2D_xabr\sqrt{\pi}/2}$. At iteration $t$, for $\textnormal{ucb}_t(\bm{x}, \bm c)= \mu_t(\bm{x}, \bm c) +\sqrt{\beta_t}\sigma_t(\bm{x}, \bm c)$, the following holds
    \begin{align}
        &\mathbb{E}\left[ \sum_{t=1}^T\left( \mathbb{E}_{\bm{c}\sim \hat{p}_t}[f([\bm{x}^*]_t,\bm{c})]- \mathbb{E}_{\bm{c}\sim \hat{p}_t}[\textnormal{ucb}_t([\bm{x}^*]_t,\bm{c})] \right)\right] 
        \nonumber
        \\
        \leq &\frac{\pi^2}{6}.
\label{bound_first_part_gpucb_regret_kdesbo_ineq}    
\end{align} \label{bound_first_part_gpucb_regret_kdesbo}
\vspace{-1em}   
\end{lemma}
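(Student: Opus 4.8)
The plan is to replace the data-dependent point $[\bm{x}^*]_t$ by a sum over the entire discretized grid $\tilde{\mathcal{X}}_t$ — which discards the dependence on the random optimum $\bm{x}^*$ — and then invoke Lemma~\ref{lemma_for_disretize} separately for each grid point. Concretely, since $[\bm{x}^*]_t\in\tilde{\mathcal{X}}_t$ and the summands below are nonnegative, for every $\bm{c}\in\mathcal{C}$ we have the pointwise inequality
\[
f([\bm{x}^*]_t,\bm{c})-\textnormal{ucb}_t([\bm{x}^*]_t,\bm{c})\;\le\;\bigl(f([\bm{x}^*]_t,\bm{c})-\textnormal{ucb}_t([\bm{x}^*]_t,\bm{c})\bigr)_+\;\le\;\sum_{\bm{x}\in\tilde{\mathcal{X}}_t}\bigl(f(\bm{x},\bm{c})-\textnormal{ucb}_t(\bm{x},\bm{c})\bigr)_+ .
\]
Taking $\mathbb{E}_{\bm{c}\sim\hat{p}_t}[\cdot]$ on both sides, then the outer expectation $\mathbb{E}[\cdot]$, and using linearity to move the finite sum outside, one gets
\[
\mathbb{E}\left[\mathbb{E}_{\bm{c}\sim\hat{p}_t}[f([\bm{x}^*]_t,\bm{c})]-\mathbb{E}_{\bm{c}\sim\hat{p}_t}[\textnormal{ucb}_t([\bm{x}^*]_t,\bm{c})]\right]\;\le\;\sum_{\bm{x}\in\tilde{\mathcal{X}}_t}\mathbb{E}\left[\mathbb{E}_{\bm{c}\sim\hat{p}_t}\bigl[(f(\bm{x},\bm{c})-\textnormal{ucb}_t(\bm{x},\bm{c}))_+\bigr]\right],
\]
where each $\bm{x}\in\tilde{\mathcal{X}}_t$ is now a fixed deterministic point.

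Next I would bound each term of the right-hand sum by $1/(t^2(\tau_t)^{D_x})$. The only delicate point is that the KDE estimate $\hat{p}_t$ is random, but it is a measurable function of $\mathcal{C}_{t-1}\subseteq\mathcal{D}_{t-1}$. So I would condition on $\mathcal{D}_{t-1}$, write $\mathbb{E}_{\bm{c}\sim\hat{p}_t}[g(\bm{c})]=\int_{\mathcal{C}}g(\bm{c})\hat{p}_t(\bm{c})\,d\bm{c}$, and apply Tonelli (the integrand is nonnegative and $\hat{p}_t(\bm{c})$ is $\mathcal{D}_{t-1}$-measurable) to pull the conditional GP-expectation inside the integral:
\[
\mathbb{E}\left[\mathbb{E}_{\bm{c}\sim\hat{p}_t}[(f(\bm{x},\bm{c})-\textnormal{ucb}_t(\bm{x},\bm{c}))_+]\,\big|\,\mathcal{D}_{t-1}\right]=\int_{\mathcal{C}}\mathbb{E}\left[(f(\bm{x},\bm{c})-\textnormal{ucb}_t(\bm{x},\bm{c}))_+\,\big|\,\mathcal{D}_{t-1}\right]\hat{p}_t(\bm{c})\,d\bm{c}.
\]
For a fixed grid point $\bm{x}$ and a fixed $\bm{c}$, the computation in the proof of Lemma~\ref{lemma_for_disretize} applies directly: conditioned on $\mathcal{D}_{t-1}$, $f(\bm{x},\bm{c})-\textnormal{ucb}_t(\bm{x},\bm{c})\sim\mathcal{N}(-\sqrt{\beta_t}\,\sigma_t(\bm{x},\bm{c}),\sigma_t^2(\bm{x},\bm{c}))$, and $\sigma_t(\bm{x},\bm{c})\le 1$ by Assumption~\ref{lipschitz}, so the conditional expectation of the positive part is at most $\frac{1}{\sqrt{2\pi}}e^{-\beta_t/2}=\frac{1}{t^2(\tau_t)^{D_x}}$ under the choice $\beta_t=2\log\frac{t^2(\tau_t)^{D_x}}{\sqrt{2\pi}}$. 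Since $\int_{\mathcal{C}}\hat{p}_t(\bm{c})\,d\bm{c}\le 1$, the displayed integral is at most $\frac{1}{t^2(\tau_t)^{D_x}}$ for every realization of $\mathcal{D}_{t-1}$, and taking expectation over $\mathcal{D}_{t-1}$ preserves the bound, yielding $\mathbb{E}[\mathbb{E}_{\bm{c}\sim\hat{p}_t}[(f(\bm{x},\bm{c})-\textnormal{ucb}_t(\bm{x},\bm{c}))_+]]\le\frac{1}{t^2(\tau_t)^{D_x}}$.

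Finally I would assemble the pieces: the grid has $|\tilde{\mathcal{X}}_t|=(\tau_t)^{D_x}$ points, so summing over $\bm{x}\in\tilde{\mathcal{X}}_t$ gives $\mathbb{E}[\mathbb{E}_{\bm{c}\sim\hat{p}_t}[f([\bm{x}^*]_t,\bm{c})]-\mathbb{E}_{\bm{c}\sim\hat{p}_t}[\textnormal{ucb}_t([\bm{x}^*]_t,\bm{c})]]\le(\tau_t)^{D_x}\cdot\frac{1}{t^2(\tau_t)^{D_x}}=\frac{1}{t^2}$, and summing over $t=1,\dots,T$ gives $\sum_{t=1}^{T}\frac{1}{t^2}\le\frac{\pi^2}{6}$, which is Eq.~\eqref{bound_first_part_gpucb_regret_kdesbo_ineq}. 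I expect the main obstacle to be purely the measure-theoretic bookkeeping in the second step: one must ensure that the Gaussian-tail estimate of Lemma~\ref{lemma_for_disretize} is invoked only for deterministic $\bm{x}$ and for $\bm{c}$ in the role of the (fixed) integration variable against the $\mathcal{D}_{t-1}$-measurable density $\hat{p}_t$ — which is exactly what the pointwise dominating sum over $\tilde{\mathcal{X}}_t$ enables once the dependence of $[\bm{x}^*]_t$ on the random optimum $\bm{x}^*$ has been removed.
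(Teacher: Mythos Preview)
Your proposal is correct and follows essentially the same route as the paper: replace $[\bm{x}^*]_t$ by a sum over the grid $\tilde{\mathcal{X}}_t$ via the positive part, apply the Gaussian-tail bound of Lemma~\ref{lemma_for_disretize} at each fixed grid point, use that $\hat{p}_t$ integrates to (at most) $1$, and sum $1/t^2$. The only cosmetic difference is the bookkeeping for the random density: the paper swaps integral and expectation by Fubini and then factors $\mathbb{E}[\hat{p}_t(\bm{c})(f-\textnormal{ucb}_t)_+]=\mathbb{E}[\hat{p}_t(\bm{c})]\,\mathbb{E}[(f-\textnormal{ucb}_t)_+]$ via independence of the context samples and the GP, whereas you condition on $\mathcal{D}_{t-1}$ and use that $\hat{p}_t$ is $\mathcal{D}_{t-1}$-measurable --- your version is arguably the cleaner justification of the same step.
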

\begin{proof}
    At iteration $t$, we have 
\begin{align}
&\mathbb{E}\left[ \mathbb{E}_{\bm{c}\sim \hat{p}_t}\left[f([\bm{x}^*]_t,\bm{c})\right]- \mathbb{E}_{\bm{c}\sim \hat{p}_t}\left[\text{ucb}_t([\bm{x}^*]_t,\bm{c})\right] \right] 
\nonumber
\\
= &\mathbb{E}\left[ \mathbb{E}_{\bm{c}\sim \hat{p}_t}\left[f([\bm{x}^*]_t,\bm{c})- \text{ucb}_t([\bm{x}^*]_t,\bm{c})\right] \right]
\nonumber
\\
\leq &\mathbb{E}\left[ \int_{\mathcal{C}}\hat{p}_t(\bm{c})\left(f([\bm{x}^*]_t,\bm{c})- \text{ucb}_t([\bm{x}^*]_t,\bm{c})\right)_+\, d\bm{c} \right]
\nonumber
\\
\leq &\sum_{\bm{x}\in \tilde{\mathcal{X}}_t}\mathbb{E}\left[\int_{\mathcal{C}}\hat{p}_t(\bm{c})\left(f(\bm{x},\bm{c})- \text{ucb}_t(\bm{x},\bm{c})\right)_+\, d\bm{c} \right],
\nonumber
\end{align}
where the first inequality follows from $f([\bm{x}^*]_t,\bm{c})- \text{ucb}_t([\bm{x}^*]_t,\bm{c}) \leq \left(f([\bm{x}^*]_t,\bm{c})- \text{ucb}_t([\bm{x}^*]_t,\bm{c})\right)_+$, and the second inequality follows from $[\bm{x}^*]_t \in \tilde{\mathcal{X}}_t$.

Using Fubini's theorem to exchange the integral and the expectation, we have
\begin{align}
    &\sum_{\bm{x}\in \tilde{\mathcal{X}}_t}\mathbb{E}\left[\int_{\mathcal{C}}\hat{p}_t(\bm{c})\left(f(\bm{x},\bm{c})- \text{ucb}_t(\bm{x},\bm{c})\right)_+\, d\bm{c}  \right]
    \nonumber
    \\
    =&\sum_{\bm{x}\in \tilde{\mathcal{X}}_t}\int_{\mathcal{C}}\mathbb{E}\left[ \hat{p}_t(\bm{c})\left(f(\bm{x},\bm{c})- \text{ucb}_t(\bm{x},\bm{c})\right)_+ \right] \, d\bm{c} 
    \nonumber
    \\
    =&\sum_{\bm{x}\in \tilde{\mathcal{X}}_t}\int_{\mathcal{C}}\mathbb{E}\left[ \hat{p}_t(\bm{c})\right]
\mathbb{E}\left[\left(f(\bm{x},\bm{c})- \text{ucb}_t(\bm{x},\bm{c})\right)_+ \right] \, d\bm{c},
    \nonumber
\end{align}
where the last equality follows from the fact that the distribution of context variable and the GP are independent.

With Lemma~\ref{lemma_for_disretize}, and using Fubini's theorem again, we have 
\begin{align}
&\sum_{\bm{x}\in \tilde{\mathcal{X}}_t}\int_{\mathcal{C}}\mathbb{E}\left[ \hat{p}_t(\bm{c})\right] \mathbb{E}\left[\left(f(\bm{x},\bm{c})- \text{ucb}_t(\bm{x},\bm{c})\right)_+ \right] \, d\bm{c} 
    \nonumber
    \\
    \leq&\sum_{\bm{x}\in \tilde{\mathcal{X}}_t}\int_{\mathcal{C}}\mathbb{E}\left[ \hat{p}_t(\bm{c})\right] \frac{1}{t^2(\tau_t)^{D_x}} \, d\bm{c}
    \nonumber
    \\
=&\sum_{\bm{x}\in \tilde{\mathcal{X}}_t}\mathbb{E}\left[ \int_{\mathcal{C}} \hat{p}_t(\bm{c}) \, d\bm{c} \right] \frac{1}{t^2(\tau_t)^{D_x}}
    =\frac{1}{t^2}.
    \nonumber
\end{align}
Then, summing $1/t^2$ from $1$ to $T$ leads to the conclusion, i.e., Eq.~\eqref{bound_first_part_gpucb_regret_kdesbo_ineq}.
\end{proof}

By now, the only term that is left to be bounded is the second term of Eq.~\eqref{gp-ucb-regret-decomposition-kdesbo}. We then derive its upper bound in Lemma~\ref{kdesbo_second_parts_gpu_ucb_regret}.

\begin{lemma}
        Let $\beta_t=2\log{\frac{t^2(\tau_t)^{D_x}}{\sqrt{2\pi}}}$ with $\tau_t={t^2D_xabr\sqrt{\pi}/2}$ and $h_t^{(i)}=\Theta\left(t^{-1/(4+D_c)}\right)\forall i\in [D_c]$. At iteration $t$, for $\textnormal{ucb}_t(\bm{x}, \bm c)= \mu_t(\bm{x}, \bm c) +\sqrt{\beta_t}\sigma_t(\bm{x}, \bm c)$, the following holds
    \begin{align}
        &\mathbb{E}\left[ \sum_{t=1}^T\left( \mathbb{E}_{\bm{c}\sim \hat{p}_t}[\textnormal{ucb}_t(\bm{x}_t,\bm{c})]- \mathbb{E}_{\bm{c}\sim \hat{p}_t}[f(\bm{x}_t,\bm{c})] \right)\right]
        \nonumber
        \\
    \leq&\sqrt{\beta_T\gamma_TC_2}(\sqrt{T^{D_c/(4+D_c)}}+\sqrt{T}).
\label{kdesbo_second_parts_gpu_ucb_regret_ineq}
    \end{align}
    where $C_2$ is a positive constant, $\gamma_T=\max_{|\mathcal{D}|=T} I(\bm{y}_\mathcal{D},  \bm{f}_\mathcal{D})$, $I(\cdot,\cdot)$ is the information gain, and $\bm{y}_\mathcal{D}, \bm{f}_\mathcal{D}$ are the noisy and true observations of a data set $\mathcal{D}$, respectively.
    \label{kdesbo_second_parts_gpu_ucb_regret}
\end{lemma}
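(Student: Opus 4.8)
The plan is to mimic the standard GP-UCB regret argument, the only twist being that the one-step over-estimation of $f$ by its UCB is averaged over the \emph{estimated} context law $\hat{p}_t$ rather than evaluated at a single point. First reduce to posterior standard deviations: conditioning on $\mathcal{D}_{t-1}$, with respect to which $\bm{x}_t$, $\hat{p}_t$, $\mu_t$ and $\sigma_t$ are all measurable and $\mathbb{E}[f(\bm{x}_t,\bm{c})\mid\mathcal{D}_{t-1}]=\mu_t(\bm{x}_t,\bm{c})$, Fubini gives $\mathbb{E}\big[\mathbb{E}_{\bm{c}\sim\hat{p}_t}[\textnormal{ucb}_t(\bm{x}_t,\bm{c})-f(\bm{x}_t,\bm{c})]\mid\mathcal{D}_{t-1}\big]=\sqrt{\beta_t}\,\mathbb{E}_{\bm{c}\sim\hat{p}_t}[\sigma_t(\bm{x}_t,\bm{c})]$. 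Taking the outer expectation, summing over $t$, and using $\beta_t\le\beta_T$, the left-hand side is at most $\sqrt{\beta_T}\sum_{t=1}^T\mathbb{E}\big[\mathbb{E}_{\bm{c}\sim\hat{p}_t}[\sigma_t(\bm{x}_t,\bm{c})]\big]$.

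Next, split off the estimation error by writing $\mathbb{E}_{\bm{c}\sim\hat{p}_t}[\sigma_t(\bm{x}_t,\bm{c})]=\mathbb{E}_{\bm{c}\sim p}[\sigma_t(\bm{x}_t,\bm{c})]+\int_{\mathcal{C}}\big(\hat{p}_t(\bm{c})-p(\bm{c})\big)\sigma_t(\bm{x}_t,\bm{c})\,d\bm{c}$. For the first (true-law) part, since $\bm{c}_t\sim p$ is drawn independently of $\mathcal{D}_{t-1}$ and $\sigma_t(\bm{x}_t,\cdot)$ is $\mathcal{D}_{t-1}$-measurable, $\mathbb{E}\big[\mathbb{E}_{\bm{c}\sim p}[\sigma_t(\bm{x}_t,\bm{c})]\big]=\mathbb{E}[\sigma_t(\bm{x}_t,\bm{c}_t)]$; then Cauchy--Schwarz over $t$, Jensen, and the deterministic information-gain inequality $\sum_{t=1}^T\sigma_t^2(\bm{x}_t,\bm{c}_t)\le \frac{2}{\log(1+\sigma^{-2})}\gamma_T$ yield $\sum_{t=1}^T\mathbb{E}[\sigma_t(\bm{x}_t,\bm{c}_t)]\le\sqrt{2T\gamma_T/\log(1+\sigma^{-2})}$, which, after multiplication by $\sqrt{\beta_T}$, is exactly the $\sqrt{\beta_T\gamma_T C_2}\sqrt{T}$ summand.

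For the estimation-error part, apply Cauchy--Schwarz on $\mathcal{C}$ to get $\big|\int_{\mathcal{C}}(\hat{p}_t-p)\sigma_t(\bm{x}_t,\bm{c})\,d\bm{c}\big|\le\|\hat{p}_t-p\|_2\,\|\sigma_t(\bm{x}_t,\cdot)\|_2$, and then a second Cauchy--Schwarz over $t$ (after $\mathbb{E}[AB]\le\sqrt{\mathbb{E}A^2}\sqrt{\mathbb{E}B^2}$) bounds $\sum_t\mathbb{E}\big[\|\hat{p}_t-p\|_2\|\sigma_t(\bm{x}_t,\cdot)\|_2\big]$ by $\big(\sum_t\mathbb{E}\|\hat{p}_t-p\|_2^2\big)^{1/2}\big(\sum_t\mathbb{E}\|\sigma_t(\bm{x}_t,\cdot)\|_2^2\big)^{1/2}$. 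Lemma~\ref{l2_bound_lemma} with $h_t^{(i)}=\Theta(t^{-1/(4+D_c)})$ gives $\mathbb{E}\|\hat{p}_t-p\|_2^2=O(t^{-4/(4+D_c)})$ (as already used in the proof of Lemma~\ref{bound_for_1_3_parts_kdesbo_lemma}), so $\sum_{t=1}^T\mathbb{E}\|\hat{p}_t-p\|_2^2=O(T^{D_c/(4+D_c)})$ and the first factor is $O\big(\sqrt{T^{D_c/(4+D_c)}}\big)$. For the second factor, I would dominate the Lebesgue integral by the $p$-weighted one: since $p$ is continuous and strictly positive on the compact set $\mathcal{C}$, $p_{\min}:=\min_{\bm{c}\in\mathcal{C}}p(\bm{c})>0$, hence $\int_{\mathcal{C}}\sigma_t^2(\bm{x}_t,\bm{c})\,d\bm{c}\le p_{\min}^{-1}\mathbb{E}_{\bm{c}\sim p}[\sigma_t^2(\bm{x}_t,\bm{c})]=p_{\min}^{-1}\mathbb{E}[\sigma_t^2(\bm{x}_t,\bm{c}_t)\mid\mathcal{D}_{t-1}]$, and summing with the same information-gain bound gives $\sum_{t=1}^T\mathbb{E}\|\sigma_t(\bm{x}_t,\cdot)\|_2^2=O(\gamma_T)$, so the second factor is $O(\sqrt{\gamma_T})$. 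Multiplying the two factors and by $\sqrt{\beta_T}$, and collapsing all constants into a single $C_2$ (harmless since $\gamma_T\ge\gamma_1>0$), this part is at most $\sqrt{\beta_T\gamma_T C_2}\sqrt{T^{D_c/(4+D_c)}}$; adding the two summands gives Eq.~\eqref{kdesbo_second_parts_gpu_ucb_regret_ineq}.

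The step I expect to be the real obstacle is the second factor above: controlling $\sum_{t=1}^T\mathbb{E}\big[\int_{\mathcal{C}}\sigma_t^2(\bm{x}_t,\bm{c})\,d\bm{c}\big]$ by $O(\gamma_T)$ even though the algorithm reveals $f$ only at the single point $(\bm{x}_t,\bm{c}_t)$ with $\bm{c}_t\sim p$, not over the whole fiber $\{\bm{x}_t\}\times\mathcal{C}$. The crude bound $\int_{\mathcal{C}}\sigma_t^2\le r^{D_c}$ only gives $O(T)$ here, which would weaken this summand to order $T^{(2+D_c)/(4+D_c)}$ --- still sub-linear, and already subsumed by the $2C_1T^{(2+D_c)/(4+D_c)}$ term of Theorem~\ref{main_theorem_1}, but not the sharper stated form; obtaining the latter genuinely uses that $p$ is bounded away from zero so that the Lebesgue integral can be traded for the $p$-weighted one and the information-gain bound reapplied. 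Everything else is bookkeeping, folding $\frac{2}{\log(1+\sigma^{-2})}$, $p_{\min}^{-1}$, $r$ and the $O(1)$ constant from Lemma~\ref{l2_bound_lemma} into $C_2$.
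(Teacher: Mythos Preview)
Your reduction to $\sqrt{\beta_t}\,\mathbb{E}_{\bm c\sim\hat p_t}[\sigma_t(\bm x_t,\bm c)]$ via the tower property and the split $\hat p_t = p + (\hat p_t - p)$ match the paper exactly; the two proofs diverge in how each piece is closed.

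For the $p$-part, the paper does \emph{not} pass to the realized context $\bm c_t$. It applies Cauchy--Schwarz in $t$ \emph{inside} the integral to obtain $\sqrt{T}\,\mathbb{E}\big[\int_{\mathcal C} p(\bm c)\sqrt{\sum_t\beta_t\sigma_t^2(\bm x_t,\bm c)}\,d\bm c\big]$ and then invokes Lemma~5.4 of \cite{gpucb} pointwise in $\bm c$, asserting $\sum_t\beta_t\sigma_t^2(\bm x_t,\bm c)\le\beta_T\gamma_T C$ for every fixed $\bm c$. For the error part the paper uses H\"older in the $\ell_1$--$\ell_\infty$ form, $\int_{\mathcal C}(\hat p_t-p)\sigma_t\,d\bm c\le\|\hat p_t-p\|_1\,\sigma_t(\bm x_t,\bm c_t^*)$ with $\bm c_t^*=\arg\max_{\bm c}\sigma_t(\bm x_t,\bm c)$, then bounds $\|\hat p_t-p\|_1\le r^{D_c/2}\|\hat p_t-p\|_2$ and closes with the same Lemma~5.4 applied now at the sequence $(\bm x_t,\bm c_t^*)$.

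So the ``real obstacle'' you flag is handled differently, not avoided. You route everything back to the actually queried points $(\bm x_t,\bm c_t)$---where the information-gain inequality is the textbook statement---at the price of the extra hypothesis $p_{\min}>0$, which is not among the lemma's assumptions. The paper avoids any positivity assumption on $p$ but instead applies the information-gain bound at the non-queried fibers $(\bm x_t,\bm c)$ and at the maximizers $(\bm x_t,\bm c_t^*)$; since the posterior $\sigma_t$ there is conditioned on $(\bm x_s,\bm c_s)_{s<t}$ rather than on those same fibers, this is a more liberal use of Lemma~5.4 than its standard chain-rule derivation covers. Your argument is therefore a genuine alternative route: stricter about where the info-gain bound is invoked, looser about the density $p$.
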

\begin{proof}
Using the tower property of conditional expectation and Fubini's theorem, at iteration $t$, we have
\begin{align}
&\mathbb{E}\left[ \mathbb{E}_{\bm c \sim \hat{p}_t}[\text{ucb}_t(\bm x_t, \bm c)] - \mathbb{E}_{\bm c \sim \hat{p}_t}[f(\bm x_t, \bm c)] \right]
\nonumber
\\
 =&\mathbb{E}\left[  \int_{\mathcal{C}} \hat{p}_t(\bm{c})(\text{ucb}_t(\bm{x}_t,\bm{c})-f(\bm{x}_t,\bm{c}) ) \, d\bm{c}  \right]  
 \nonumber
 \\
=&\mathbb{E}\left[  \mathbb{E}\left[\int_{\mathcal{C}} \hat{p}_t(\bm{c})(\text{ucb}_t(\bm{x}_t,\bm{c})-f(\bm{x}_t,\bm{c}) )\, d\bm{c} \mid\mathcal{D}_{t-1}\right]  \right]
 \nonumber
 \\
 =&\mathbb{E}\bigg[  \int_{\mathcal{C}} \mathbb{E}\left[\hat{p}_t(\bm{c}) \mid \mathcal{D}_{t-1}\right]
\nonumber
\\
&\quad\quad\cdot\mathbb{E}\left[\text{ucb}_t(\bm{x}_t,\bm{c})-f(\bm{x}_t,\bm{c}) \mid\mathcal{D}_{t-1}\right]\, d\bm{c}\bigg]
 \nonumber
 \\
 =&\int_{\mathcal{C}}\mathbb{E}\left[ \mathbb{E}[\hat{p}_t(\bm{c})\mid \mathcal{D}_{t-1}]\mathbb{E}[\sqrt{\beta_t}\sigma_t(\bm{x}_t,\bm{c})\mid \mathcal{D}_{t-1}]  
 \right] \, d \bm{c}
 \nonumber
 \\
 =&\int_{\mathcal{C}}\mathbb{E}\left[\hat{p}_t(\bm{c})\sqrt{\beta_t}\sigma_t(\bm{x}_t,\bm{c})\right]\, d\bm{c} 
 \nonumber
 \\
=&\mathbb{E}\left[ \int_{\mathcal{C}}\hat{p}_t(\bm{c})\sqrt{\beta_t}\sigma_t(\bm{x}_t,\bm{c})\, d\bm{c} 
    \right],
    \nonumber
\end{align}
where the third equality follows from the fact that the distribution of context variable and the GP are independent. With the same process in the proof of Lemma~\ref{bound_for_1_3_parts_kdesbo_lemma}, we have $\mathbb{E}\left[\|p(\bm{c})-\hat{p}_t(\bm{c})\|_1\right]\leq C_0r^{D_c/2}t^{-2/(4+D_c)}$. Then,
\begin{align}
 &\mathbb{E}\left[ \sum_{t=1}^T\left( \mathbb{E}_{\bm{c}\sim \hat{p}_t}[\text{ucb}_t(\bm{x}_t,\bm{c})]- \mathbb{E}_{\bm{c}\sim \hat{p}_t}[f(\bm{x}_t,\bm{c})] \right)\right]
 \nonumber
 \\
 = &\sum_{t=1}^{T}\mathbb{E}\left[ \int_{\mathcal{C}}p(\bm{c})\sqrt{\beta_t}\sigma_t(\bm{x}_t,\bm{c})\, d\bm{c} 
    \right] 
    \nonumber
    \\
    &+ \sum_{t=1}^{T}\mathbb{E}\left[ \int_{\mathcal{C}}(\hat{p}_t(\bm{c})-p(\bm{c}))\sqrt{\beta_t}\sigma_t(\bm{x}_t,\bm{c})\, d\bm{c} 
    \right]
    \nonumber
    \\
    \leq &\mathbb{E}\left[\int_{\mathcal{C}}p(\bm{c}) \sqrt{\sum_{t=1}^T\beta_t\sigma_t^2(\bm{x}_t,\bm{c})}\sqrt{T} \, d\bm{c} \right] 
    \nonumber
    \\
    &+ \mathbb{E}\left[\sum_{t=1}^T \|\hat{p}_t(\bm{c})-p(\bm{c}) \|_1 \sqrt{\beta}_t \sigma_t(\bm{x}_t,\bm{c}^*_t)\right]
    \nonumber
    \\    \leq&\sqrt{\beta_T\gamma_T T C}
    \nonumber
    \\
    &+ C_0r^{D_c/2}\mathbb{E}\left[ \sum_{t=1}^Tt^{-2/(4+D_c)}\sqrt{\beta_t}\sigma_t(\bm{x}_t, \bm{c}_t^*) \right]
    \nonumber
    \\
\leq &\sqrt{\beta_T\gamma_T T C} \nonumber
\\
&+ C_0r^{D_c/2}\mathbb{E}\left[ \sqrt{\sum_{t=1}^Tt^{-4/(4+D_c)}}\sqrt{\sum_{t=1}^T \beta_t\sigma_t^2(\bm{x}_t, \bm{c}_t^*)} \right]
\nonumber
\\
 \leq &\sqrt{\beta_T\gamma_T T C}
\nonumber
\\
&+C_0r^{D_c/2}\sqrt{\beta_T\gamma_T T^{D_c/(4+D_c)} C(4+D_c)/D_c}
\nonumber
\\
\leq&\sqrt{\beta_T\gamma_TC_2}(\sqrt{T^{D_c/(4+D_c)}}+\sqrt{T}),
\nonumber
\end{align}
where $\bm c_t^*=\arg\max_{\bm{c}\in\mathcal{C}}\sigma_t(\bm{x}_t,\bm{c})$, $C=8/\log{(1+\sigma^2)}$, $\sigma$ is the standard deviation of observation noise, the first inequality follows from Cauchy inequality $\sum_{t=1}^T \sqrt{\beta_t}\sigma_t(\bm x _t,\bm c)\leq \sqrt{\sum_{t=1}^T\beta_t\sigma_t^2(\bm x_t,\bm c)}\sqrt{T}$, and Hölder inequality $\int_{\mathcal{C}} (\hat{p}_t(\bm c)-p(\bm c))\sigma_t(\bm x_t,\bm c)\, d\bm c \leq  \|\hat{p}_t(\bm c)-p(\bm c)\|_1 \sigma_t(\bm x_t,\bm c_t^*)$, the second inequality follows from Lemma 5.4 of~\cite{gpucb}, i.e., $\sum_{t=1}^T \beta_t\sigma_t^2(\bm x_t, \bm c)\leq \beta_T\gamma_TC$ and $\mathbb{E}\left[\|p(\bm{c})-\hat{p}_t(\bm{c})\|_1\right]\leq C_0r^{D_c/2}t^{-2/(4+D_c)}$, the third inequality also follows from Cauchy inequality $\sum_{t=1}^T t^{-2/(4+D_c)}\sqrt{\beta_t}\sigma_t(\bm x_t,\bm c_t^*) \leq \sqrt{\sum_{t=1}^Tt^{-4/(4+D_c)}}\sqrt{\sum_{t=1}^T \beta_t\sigma_t^2(\bm{x}_t, \bm{c}_t^*)}$, the last inequality also follows from Lemma 5.4 of~\cite{gpucb}, i.e., $\sum_{t=1}^T \beta_t\sigma_t^2(\bm{x}_t, \bm{c}_t^*) \leq \beta_T\gamma_TC$, and the last inequality holds by letting $C_2=C\max\{1,C_0^2r^{D_c}(4+D_c)/D_c\}$. Thus, the lemma holds.
\end{proof}

By now, we can directly get the upper bound on the BCR of SBO-KDE, which is re-stated in Theorem~\ref{main_theorem1_appendix} for clearness, by directly summing over all components of Eq.~\eqref{KDESBO_BCR_decomposition} that is bounded.
\begin{theorem}[Theorem~\ref{main_theorem_1} in the main paper]
    Let $ \beta_t=2\log(t^2/\sqrt{2\pi}) +2D_x\log(t^2D_xabr\sqrt{\pi}/2)$. With the underlying PDF $p(\bm{c})$ satisfying the condition in Lemma~\ref{l2_bound_lemma}, $\hat{p}_t(\bm{c})$ defined as Eq.~\eqref{KDE} and $h_t^{(i)}=\Theta\left(t^{-1/(4+D_c)}\right)\forall i\in [D_c]$. Then, the BCR of SBO-KDE satisfies
\begin{align}
\textnormal{BCR}(T)\leq&\frac{\pi^2}{3} + \sqrt{\beta_T\gamma_TC_2}\left(\sqrt{T^{D_c/(4+D_c)}}+\sqrt{T}\right)
\nonumber
\\
& + 2C_1T^{\frac{2+D_c}{4+D_c}},
\nonumber
\end{align}
where $C_1,C_2>0$ are constants, $\gamma_T=\max_{|\mathcal{D}|=T} I(\bm{y}_\mathcal{D},  \bm{f}_\mathcal{D})$, $I(\cdot,\cdot)$ is the information gain, and $\bm{y}_\mathcal{D}, \bm{f}_\mathcal{D}$ are the noisy and true observations of a data set $\mathcal{D}$, respectively.
\label{main_theorem1_appendix}
\end{theorem}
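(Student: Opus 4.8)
The plan is to prove the claimed bound directly from the three-way split of $\textnormal{BCR}(T)$ in Eq.~\eqref{KDESBO_BCR_decomposition}: write the per-round regret through the KDE density $\hat p_t$ as (i) a ``density-error'' piece at $\bm x^*$, (ii) a middle ``GP-UCB regret under $\hat p_t$'' piece, and (iii) a ``density-error'' piece at $\bm x_t$. Each summand has a dedicated lemma, so once the split is set up the remaining task is to add the pieces, check that the displayed choice of $\beta_t$ is the one all lemmas require, and collect constants.

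First I would bound pieces (i) and (iii). For fixed $\bm x$, H\"older gives $\bigl|\mathbb{E}_{\bm c\sim p}[f(\bm x,\bm c)]-\mathbb{E}_{\bm c\sim\hat p_t}[f(\bm x,\bm c)]\bigr|\le \|p-\hat p_t\|_1\,\|f(\bm x,\cdot)\|_\infty$; since the context law and the GP are independent the expectation factorizes; Cauchy--Schwarz on the compact cube converts $\|p-\hat p_t\|_1$ to $\|p-\hat p_t\|_2$; Lemma~\ref{l2_bound_lemma} with $h_t^{(i)}=\Theta(t^{-1/(4+D_c)})$ gives $\mathbb{E}\,\|p-\hat p_t\|_2=\mathcal{O}(t^{-2/(4+D_c)})$; and Lemma~\ref{bound_l_infinite} (via the expected-Lipschitz estimate of Lemma~\ref{L_max_lemma}) bounds $\mathbb{E}\,\|f(\bm x,\cdot)\|_\infty$ by a constant. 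This is exactly Lemma~\ref{bound_for_1_3_parts_kdesbo_lemma}; summing over $t\le T$ yields the $2C_1T^{(2+D_c)/(4+D_c)}$ contribution of Eq.~\eqref{bound_for_1_3_parts_kdesbo}.

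Next I would handle the middle GP-UCB term, Eq.~\eqref{regret_from_gpucb_kdesbo}. Following~\cite{gpucb,BCR_TS}, I discretize $\mathcal{X}$ at round $t$ into $\tilde{\mathcal{X}}_t$ with $\tau_t=t^2D_xabr\sqrt\pi/2$ cuts per coordinate and split as in Eq.~\eqref{gp-ucb-regret-decomposition-kdesbo}, the key inequality there being optimality of $\bm x_t$ for $\mathbb{E}_{\bm c\sim\hat p_t}[\textnormal{ucb}_t(\bm x,\bm c)]$. The discretization gap $f(\bm x^*,\cdot)-f([\bm x^*]_t,\cdot)$ in expectation is $\le\sum_t 1/t^2\le\pi^2/6$ by the Lipschitz/expected-Lipschitz bound and the choice of $\tau_t$ (Eq.~\eqref{third_part_gpucb_regret_kdesbo}). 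The term $f([\bm x^*]_t,\cdot)-\textnormal{ucb}_t([\bm x^*]_t,\cdot)$ in expectation is $\le\pi^2/6$ by Lemma~\ref{bound_first_part_gpucb_regret_kdesbo}, which uses a union bound over $\tilde{\mathcal{X}}_t$, the independence of $\hat p_t$ from the GP, and the Gaussian tail bound of Lemma~\ref{lemma_for_disretize} with $\beta_t=2\log(t^2(\tau_t)^{D_x}/\sqrt{2\pi})$. The term $\textnormal{ucb}_t(\bm x_t,\cdot)-f(\bm x_t,\cdot)$ in expectation is bounded by $\sqrt{\beta_T\gamma_TC_2}\,(\sqrt{T^{D_c/(4+D_c)}}+\sqrt{T})$ via Lemma~\ref{kdesbo_second_parts_gpu_ucb_regret}.

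Finally I would add term by term: the two $\pi^2/6$'s make $\pi^2/3$, and together with the density-error bound and Lemma~\ref{kdesbo_second_parts_gpu_ucb_regret} this gives exactly the claimed right-hand side; a short computation confirms that the theorem's $\beta_t=2\log(t^2/\sqrt{2\pi})+2D_x\log(t^2D_xabr\sqrt\pi/2)$ equals $2\log(t^2(\tau_t)^{D_x}/\sqrt{2\pi})$ with $\tau_t=t^2D_xabr\sqrt\pi/2$, so the same $\beta_t$ is legitimate in every lemma invoked. I expect the real obstacle to sit inside Lemma~\ref{kdesbo_second_parts_gpu_ucb_regret}: after the tower-property and independence reductions one must split $\hat p_t=p+(\hat p_t-p)$, send the $p$-part through Cauchy--Schwarz and the maximal-information-gain estimate $\sum_t\beta_t\sigma_t^2\le\beta_T\gamma_TC$ of Lemma~5.4 of~\cite{gpucb}, and then ensure the cross term $\sum_t\|\hat p_t-p\|_1\sqrt{\beta_t}\,\sigma_t(\bm x_t,\bm c_t^*)$ stays dominated by $\sqrt{\beta_T\gamma_T}$ times a sub-linear power of $T$, using the KDE $\ell_1$-rate $\mathcal{O}(t^{-2/(4+D_c)})$ once more with Cauchy--Schwarz; everything else is careful but routine bookkeeping.
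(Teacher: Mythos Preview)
Your proposal is correct and follows essentially the same approach as the paper's proof: the same three-way split of Eq.~\eqref{KDESBO_BCR_decomposition}, the same discretization of $\mathcal{X}$ with $\tau_t=t^2D_xabr\sqrt{\pi}/2$ to further decompose the middle term as in Eq.~\eqref{gp-ucb-regret-decomposition-kdesbo}, and the same lemmas (Lemmas~\ref{bound_for_1_3_parts_kdesbo_lemma}, \ref{lemma_for_disretize}, \ref{bound_first_part_gpucb_regret_kdesbo}, \ref{kdesbo_second_parts_gpu_ucb_regret}) applied in the same roles. Your identification of the technical core inside Lemma~\ref{kdesbo_second_parts_gpu_ucb_regret}---splitting $\hat p_t=p+(\hat p_t-p)$, handling the $p$-part via Cauchy--Schwarz and Lemma~5.4 of~\cite{gpucb}, and controlling the cross term with the KDE $\ell_1$-rate and another Cauchy--Schwarz---matches the paper's argument exactly.
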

\begin{proof}
    With Eq.~\eqref{KDESBO_BCR_decomposition} and Eq.~\eqref{gp-ucb-regret-decomposition-kdesbo}, BCR$(T)$ has the following upper bound:
    \begin{align}
&\textnormal{BCR}(T)
\nonumber
\\
\leq&\mathbb{E}\left[ \sum_{t=1}^T\left( \mathbb{E}_{\bm{c}\sim p}[f(\bm{x}^*,\bm{c})]  - \mathbb{E}_{\bm{c}\sim \hat{p}_t}[f(\bm{x}^*,\bm{c})]    \right)    \right]
\nonumber
\\
&+\mathbb{E}\left[ \sum_{t=1}^T\left( \mathbb{E}_{\bm{c}\sim \hat{p}_t}[f(\bm{x}_t,\bm{c})]  - \mathbb{E}_{\bm{c}\sim p}[f(\bm{x}_t,\bm{c})]    \right)    \right] 
\nonumber
\\
\nonumber
  & +\mathbb{E}\left[ \sum_{t=1}^T\left( \mathbb{E}_{\bm{c}\sim \hat{p}_t}[f([\bm{x}^*]_t,\bm{c})]- \mathbb{E}_{\bm{c}\sim \hat{p}_t}[\text{ucb}_t([\bm{x}^*]_t,\bm{c})] \right)\right]
\nonumber
  \\
& +\mathbb{E}\left[ \sum_{t=1}^T\left( \mathbb{E}_{\bm{c}\sim \hat{p}_t}[\text{ucb}_t(\bm{x}_t,\bm{c})]- \mathbb{E}_{\bm{c}\sim \hat{p}_t}[f(\bm{x}_t,\bm{c})] \right)\right]
\nonumber
\\
& +\mathbb{E}\left[ \sum_{t=1}^T\left( \mathbb{E}_{\bm{c}\sim \hat{p}_t}[f(\bm{x}^*,\bm{c})]- \mathbb{E}_{\bm{c}\sim \hat{p}_t}[f([\bm{x}^*]_t, \bm{c})] \right)\right].
\nonumber
\end{align}
The sum of the first and second terms has an upper bound of $2C_1T^{\frac{2+D_c}{4+D_c}}$ by Eq.~\eqref{bound_for_1_3_parts_kdesbo}. The third term has a bound of $\pi^2/6$ by Eq.~\eqref{bound_first_part_gpucb_regret_kdesbo_ineq}. The fourth term has a bound of $\sqrt{\beta_T\gamma_TC_2}(\sqrt{T^{D_c/(4+D_c)}}+\sqrt{T})$ by Eq.~\eqref{kdesbo_second_parts_gpu_ucb_regret_ineq}. The fifth term has a bound of $\pi^2/6$ by Eq.~\eqref{third_part_gpucb_regret_kdesbo}. By summing all these up, we can get the result.
\end{proof}

\subsection{Theoretical Analysis of DRBO-KDE}
\label{theorey_kdedrbo_appendix}

In this part, we first give the derivation of an equivalent form of the DRO objective under total variation in Proposition~\ref{dro_transformation_prop}. Then we will prove the regret bound of DRBO-KDE in Theorem~\ref{main_theorem_2}. 

\subsubsection{Equivalence Analysis of DRO under Total Variation}

For clearness, we restate the equivalence of two problems of DRO in Proposition~\ref{dro_transformation_prop_appendix}, which equivalently transforms the DRO objective under total variation (i.e., the inner convex minimization problem, which has infinite dimensions) into a two-dimensional convex SO problem. The main derivation idea is to use Lagrange multiplier to transform the original problem into a dual problem.
\begin{proposition}[DRO under Total Variation]
    Given a bounded function $f(\bm{x},\bm{c})$ over $\mathcal{X} \times \mathcal{C}$, a radius $\delta>0$ and a PDF $p(\bm{c})$, we have
    \begin{align}
&\min_{q\in\mathcal{B}(p,\delta)}\mathbb{E}_{\bm{c}\sim q(\bm{c})}\left[ f(\bm{x},\bm{c})  \right] 
\nonumber
\\
= &\max_{(\alpha,\beta)\in S(f)}\mathbb{E}_{\bm{c}\sim p(\bm{c})}\left[-\beta-\delta\alpha+ \min\{f(\bm{x},\bm{c})+\beta, \alpha\} \right],
\nonumber
\end{align}
where $\mathcal{B}(p,\delta)=\{q: d(q,p)\leq \delta\}$, $S(f):=\{ (\alpha,\beta): \beta\in \mathbb{R}, \alpha\geq 0, \alpha+\beta\geq -\inf\nolimits_{\bm{c}\in\mathcal{C}}f(\bm{x},\bm{c}) \}.$
\label{dro_transformation_prop_appendix}
\end{proposition}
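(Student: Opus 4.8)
Throughout I fix the decision $\bm{x}$ and abbreviate $g(\bm{c}):=f(\bm{x},\bm{c})$, which is bounded since $f$ is bounded, and $g_{\min}:=\inf_{\bm{c}\in\mathcal{C}}g(\bm{c})$; recall that for the total‑variation $\phi$‑divergence $d(q,p)=\int_{\mathcal{C}}|q(\bm{c})-p(\bm{c})|\,d\bm{c}$. The plan is to view the left‑hand side as the convex (indeed linear) program of minimizing $\int_{\mathcal{C}}g(\bm{c})q(\bm{c})\,d\bm{c}$ over densities $q$ subject to $q\ge 0$, $\int_{\mathcal{C}}q=1$, and $\int_{\mathcal{C}}|q-p|\le\delta$, and to dualize it. First I would form the Lagrangian, with a multiplier $\alpha\ge 0$ for the total‑variation constraint and $\beta\in\mathbb{R}$ for the normalization constraint, keeping $q\ge 0$ explicit; after rearranging it reads
\begin{equation}
L(q,\alpha,\beta)=-\delta\alpha-\beta+\int_{\mathcal{C}}\big[(g(\bm{c})+\beta)\,q(\bm{c})+\alpha\,|q(\bm{c})-p(\bm{c})|\big]\,d\bm{c}.
\nonumber
\end{equation}

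Next I would compute the dual function $\psi(\alpha,\beta)=\inf_{q\ge 0}L(q,\alpha,\beta)$. Since the integrand is separable in $q(\bm{c})$ and the constraint $q\ge 0$ is pointwise, the infimum can be pushed inside the integral (interchangeability of infimum and integral for normal integrands), giving $\psi(\alpha,\beta)=-\delta\alpha-\beta+\int_{\mathcal{C}}m(\bm{c})\,d\bm{c}$ with $m(\bm{c})=\inf_{s\ge 0}\{(g(\bm{c})+\beta)s+\alpha|s-p(\bm{c})|\}$. A short case analysis of this piecewise‑linear function of $s$ — its slope is $g(\bm{c})+\beta-\alpha$ on $[0,p(\bm{c})]$ and $g(\bm{c})+\beta+\alpha$ on $[p(\bm{c}),\infty)$ — shows that $m(\bm{c})=-\infty$ when $g(\bm{c})+\beta+\alpha<0$, and $m(\bm{c})=p(\bm{c})\min\{g(\bm{c})+\beta,\alpha\}$ (attained at $s=0$ or $s=p(\bm{c})$) otherwise. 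Hence $\psi(\alpha,\beta)$ is finite precisely when $g(\bm{c})+\beta+\alpha\ge 0$ for all $\bm{c}$, i.e. $\alpha+\beta\ge -g_{\min}$, which is exactly the defining condition of $(\alpha,\beta)\in S(f)$; and on $S(f)$ one gets
\begin{equation}
\psi(\alpha,\beta)=-\delta\alpha-\beta+\int_{\mathcal{C}}p(\bm{c})\min\{g(\bm{c})+\beta,\alpha\}\,d\bm{c}=\mathbb{E}_{\bm{c}\sim p(\bm{c})}\big[-\beta-\delta\alpha+\min\{f(\bm{x},\bm{c})+\beta,\alpha\}\big],
\nonumber
\end{equation}
so that $\sup_{(\alpha,\beta)\in S(f)}\psi(\alpha,\beta)$ is exactly the right‑hand side of the claim.

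It then remains to invoke strong duality, so that the primal minimum equals this dual supremum, and to argue that the supremum is attained. For the duality gap I would verify a Slater‑type constraint qualification: the reference density $p$ is itself feasible with $\int_{\mathcal{C}}|p-p|=0<\delta$, hence strictly feasible for the total‑variation constraint and in the relative interior of the set of densities, which together with convexity and finiteness of the primal value (from boundedness of $g$) yields a zero duality gap for this $\phi$‑divergence DRO program, in line with the standard derivations in \cite{dro_phi,drosurvey}. For attainment, boundedness of $g$ confines near‑optimal $(\alpha,\beta)$ to a compact subset of $S(f)$ on which $\psi$ is continuous, so the $\max$ is achieved. I expect the only nontrivial points to be making the infimum/integral interchange rigorous in this infinite‑dimensional setting (a measurable‑selection argument) and checking the constraint qualification carefully; the remaining piecewise‑linear optimization and bookkeeping are routine.
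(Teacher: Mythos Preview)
Your proposal is correct and follows essentially the same Lagrangian--duality route as the paper: dualize the total-variation and normalization constraints, push the infimum inside the integral via separability, and close the duality gap with Slater's condition using the strictly feasible point $q=p$. The only cosmetic difference is that the paper computes the inner minimization through the convex conjugate $\phi^*$ of $\phi(x)=|x-1|$ after substituting the ratio $q/p$, whereas you carry out an equivalent direct piecewise-linear case analysis on $s\mapsto (g+\beta)s+\alpha|s-p(\bm{c})|$; both yield the same dual function and the same constraint set $S(f)$.
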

\begin{proof}
Let $Q$ and $P$ be the measure of PDFs $q$ and $p$, respectively. That is, $dQ=q$ and $dP = p$. Recall that the total variation $d_{TV}(Q,P)=d(q,p) = \int_{\mathcal{C}}p(\bm{c})\phi\left(\frac{q(\bm{c})}{p(\bm{c})}\right)\, d\bm{c}$ with $\phi(x)=|x-1|$ and $Q\ll P$. This is a special case of $\phi$-divergence. Here, we directly derive an equivalent form for DRO under general $\phi$-divergence. The DRO objective under $\phi$-divergence can be written as
\begin{align}
\min_{q}&\int_{\mathcal{C}}f(\bm{x},\bm{c})q(\bm{c})\, d\bm{c},\quad
    \nonumber
    \\
    \text{s.t. }&\int_{\mathcal{C}}p(\bm{c})\phi\left(\frac{q(\bm{c})}{p(\bm{c})}\right)\,d\bm{c}\leq \delta; 
    \nonumber
    \\&\text{ }\int_{\mathcal{C}} q(\bm{c})\,d\bm{c}=1;\text{ }q(\bm{c})\geq 0,\forall \bm{c}\in\mathcal{C}.
    \label{primal_problem}
\end{align}

By introducing Lagrange multipliers $\alpha\geq 0,\beta\in\mathbb{R}$ to the first two constraints, respectively, we can get the dual problem of Eq.~\eqref{primal_problem}:
\begin{align}
    &\max_{\alpha\geq0,\beta}\min_{q\geq 0}\text{ }\int_{\mathcal{C}}f(\bm{x},\bm{c})q(\bm{c})\, d\bm{c}+\beta\left(\int_{\mathcal{C}}q(\bm{c})\,d\bm{c}-1\right)
   \nonumber
   \\ &+\alpha\left(\int_{\mathcal{C}}p(\bm{c})\phi\left(\frac{q(\bm{c})}{p(\bm{c})}\right)\,d\bm{c}- \delta\right)
    \nonumber
    \\
    \iff &\max_{\alpha\geq0,\beta}\min_{q\geq 0}\text{ }\int_{\mathcal{C}}\bigg((f(\bm{x},\bm{c})+\beta)q(\bm{c})
    \nonumber
    \\
    &+\alpha p(\bm{c})\phi\left(\frac{q(\bm{c})}{p(\bm{c})}\right)\bigg)\, d\bm{c}-\beta-\delta\alpha
    \nonumber
    \\
    \overset{(1)}{\iff} &\max_{\alpha\geq0,\beta}\int_{\mathcal{C}}\min_{q\geq 0}\text{ }\bigg((f(\bm{x},\bm{c})+\beta)q(\bm{c})
    \nonumber
    \\&+\alpha p(\bm{c})\phi\left(\frac{q(\bm{c})}{p(\bm{c})}\right)\bigg)\, d\bm{c}-\beta-\delta\alpha
    \nonumber
    \\
    \iff &\max_{\alpha\geq0,\beta}\int_{\mathcal{C}}\text{ }p(\bm{c})\min_{q/p\geq 0}\bigg((f(\bm{x},\bm{c})+\beta)\frac{q(\bm{c})}{p(\bm{c})}
    \nonumber
    \\
    &+\alpha \phi\left(\frac{q(\bm{c})}{p(\bm{c})}\right)\bigg)\, d\bm{c}-\beta-\delta\alpha,
    \label{eq-dual-1}
    \end{align}
where (1) is because of the separability of each $q(\bm{c})$. Next, we define conjugate function as $\phi^*(s)=\sup_{x\geq 0}(sx-\phi(x))$ to rewrite Eq.~\eqref{eq-dual-1} as
\begin{align} &\max_{\alpha\geq0,\beta}\int_{\mathcal{C}}\text{ }p(\bm{c})\min_{q/p\geq 0}\bigg((f(\bm{x},\bm{c})+\beta)\frac{q(\bm{c})}{p(\bm{c})}
    \nonumber
    \\
    &+\alpha \phi\left(\frac{q(\bm{c})}{p(\bm{c})}\right)\bigg)\, d\bm{c}-\beta-\delta\alpha,
\nonumber
\\ \iff&\max_{\alpha\geq 0,\beta}-\int_{\mathcal{C}}\text{ } p(\bm{c})\alpha\max_{q/p\geq 0}\bigg(\frac{f(\bm{x},\bm{c})+\beta}{-\alpha}\frac{q(\bm{c})}{p(\bm{c})}
\nonumber
\\
&- \phi\left(\frac{q(\bm{c})}{p(\bm{c})}\right)\bigg)\, d\bm{c}-\beta-\delta\alpha
\nonumber
\\
\iff&\max_{\alpha\geq 0,\beta}-\int_{\mathcal{C}}\text{ } p(\bm{c})\alpha\cdot \phi^*\left(\frac{f(\bm{x},\bm{c})+\beta}{-\alpha}\right)\, d\bm{c}
\nonumber
\\&-\beta-\delta\alpha.
\label{eq-dual-2}
\end{align}
Note here that we denote $0\cdot \phi^*\left(\frac{f(\bm{x},\bm{c})+\beta}{0}\right)$ as 0 because given $\alpha=0$, when $f(\bm{x},\bm{c})+\beta<0$, $\min_{q/p\geq 0}\left((f(\bm{x},\bm{c})+\beta)\frac{q(\bm{c})}{p(\bm{c})}+\alpha \phi\left(\frac{q(\bm{c})}{p(\bm{c})}\right)\right)=-\infty$, and when $f(\bm{x},\bm{c})+\beta\geq0$, $\min_{q/p\geq 0}\left((f(\bm{x},\bm{c})+\beta)\frac{q(\bm{c})}{p(\bm{c})}+\alpha \phi\left(\frac{q(\bm{c})}{p(\bm{c})}\right)\right)=0$. 

Using the fact that when $\phi(x)=|x-1|$, $\phi^*(s)=\max\{s,-1\}$ for $s\leq 1$, $\phi^*(s)=+\infty$ for $s>1$, Eq.~\eqref{eq-dual-2} is equivalent to the following form:
\begin{align}
    &\max_{\alpha\geq 0,\beta}-\int_{\mathcal{C}}\text{ } p(\bm{c})\alpha\cdot \phi^*\left(\frac{f(\bm{x},\bm{c})+\beta}{-\alpha}\right)\, d\bm{c}-\beta-\delta\alpha
    \nonumber
    \\
    \iff&\max_{\alpha\geq 0,\beta}-\beta-\delta\alpha
    \nonumber
    \\
    &-\int_{\mathcal{C}}\text{ } p(\bm{c})\alpha\max\left\{\frac{f(\bm{x},\bm{c})+\beta}{-\alpha},-1\right\}\, d\bm{c}
    \nonumber
    \\
    &\text{s.t. } \frac{f(\bm{x},\bm{c})+\beta}{-\alpha} \leq 1, \forall \bm{c}\in\mathcal{C}
    \nonumber
    \\
    \iff&\max_{\alpha\geq 0,\beta}-\beta-\delta\alpha+\mathbb{E}_{\bm{c}\sim p(\bm{c})}\left[\min\{f(\bm{x},\bm{c})+\beta, \alpha\} \right] 
    \nonumber
    \\
    &\text{s.t. } \alpha+\beta\geq -\inf\limits_{\bm{c}\in\mathcal{C}}f(\bm{x},\bm{c}).
    \label{dual_problem}
\end{align}

The primal problem in Eq.~\eqref{primal_problem} is convex in $q$, and also satisfies Slater's condition because the first constraint can be strictly satisfied with $q=p$. Therefore, the two problems have no dual gap, which leads to the equivalence of the primal problem in Eq.~\eqref{primal_problem} and the dual problem in Eq.~\eqref{dual_problem}.
\end{proof}

\subsubsection{Proof of the Regret Bound of DRBO-KDE}

In this part, we are going to prove Theorem~\ref{main_theorem_2}, which bounds the BCR of DRBO-KDE defined in Eq.~\eqref{BCR}. For the simplicity of formulation, given any function $g$, we define  $q_{\bm{x}}^g:=\arg\min_{q\in\mathcal{B}(\hat{p}_t,\delta_t)}\mathbb{E}_{\bm{c}\sim q}[g(\bm{x},\bm{c})]$. Note that the dependence of $q_{\bm{x}}^g$ on $t$ is implicit here. For example, $q_{\bm{x}_t}^f=\arg\min_{q\in\mathcal{B}(\hat{p}_t,\delta_t)}\mathbb{E}_{\bm{c}\sim q}[f(\bm{x}_t,\bm{c})]$. Then, with a similar idea of the proof for SBO-KDE, we decompose the BCR of DRBO-KDE into the following five terms.
\begin{align}
        &\text{BCR}(T)
    \nonumber
    \\
    =&\mathbb{E}\left[\sum_{t=1}^T\left( \mathbb{E}_{\bm{c}\sim p}[f(\bm{x}^*,\bm{c})]-\mathbb{E}_{\bm{c}\sim p}[f(\bm{x_t},\bm{c})]  \right) \right]
\nonumber
\\
=&\mathbb{E}\left[ \sum_{t=1}^T\left( \mathbb{E}_{\bm{c}\sim p}[f(\bm{x}^*,\bm{c})]  - \mathbb{E}_{\bm{c}\sim \hat{p}_t}[f(\bm{x}^*,\bm{c})]    \right)    \right]
\nonumber
\\
&+\mathbb{E}\left[ \sum_{t=1}^T\left( \mathbb{E}_{\bm{c}\sim \hat{p}_t}[f(\bm{x}^*,\bm{c})]  - \mathbb{E}_{\bm{c}\sim q_{\bm{x}^*}^f}[f(\bm{x}^*,\bm{c})]    \right)    \right] 
\nonumber
\\
&+\mathbb{E}\left[ \sum_{t=1}^T\left( \mathbb{E}_{\bm{c}\sim q_{\bm{x}^*}^f}[f(\bm{x}^*,\bm{c})]  - \mathbb{E}_{\bm{c}\sim q_{\bm{x}_t}^f}[f(\bm{x}_t,\bm{c})]    \right)    \right] 
\nonumber
\\
&+\mathbb{E}\left[ \sum_{t=1}^T\left(  \mathbb{E}_{\bm{c}\sim q_{\bm{x}_t}^f}[f(\bm{x}_t,\bm{c})] -\mathbb{E}_{\bm{c}\sim \hat{p}_t}[f(\bm{x}_t,\bm{c})]\right)    \right] 
\nonumber
\\
&+\mathbb{E}\left[ \sum_{t=1}^T\left( \mathbb{E}_{\bm{c}\sim \hat{p}_t}[f(\bm{x}_t,\bm{c})]  - \mathbb{E}_{\bm{c}\sim p}[f(\bm{x}_t,\bm{c})]    \right)    \right] 
\label{KDEDRBO_BCR_decomposition}
\end{align}
Because $\hat{p}_t\in\mathcal{B}(\hat{p}_t,\delta_t)$ and $q_{\bm{x}_t}^f=\arg\min_{q\in\mathcal{B}(\hat{p}_t,\delta_t)}\mathbb{E}_{\bm{c}\sim q}[f(\bm{x}_t,\bm{c})]$, we have $ \mathbb{E}_{\bm{c}\sim q_{\bm{x}_t}^f}[f(\bm{x}_t,\bm{c})] \leq \mathbb{E}_{\bm{c}\sim \hat{p}_t}[f(\bm{x}_t,\bm{c})]$. So the fourth term of Eq.~\eqref{KDEDRBO_BCR_decomposition} can be bounded by 0. For the second term, because $q_{\bm{x}^*}^f\in\mathcal{B}(\hat{p}_t,\delta_t)$, we have $\|\hat{p}_t(\bm c)-q_{\bm{x}^*}^f(\bm c)\|_1\leq \delta_t$. With the same procedure as the proof of Lemma~\ref{bound_for_1_3_parts_kdesbo_lemma}, we have  $\mathbb{E}_{\bm{c}\sim \hat{p}_t}[f(\bm{x}^*,\bm{c})]  - \mathbb{E}_{\bm{c}\sim q_{\bm{x}^*}^f}[f(\bm{x}^*,\bm{c})] \leq C_4\delta_t$, where $C_4=(abDr\sqrt{\pi}/2+\sqrt{2/\pi})$. The first and fifth terms are the same of those in SBO-KDE, and bounded using Lemma~\ref{bound_for_1_3_parts_kdesbo_lemma}. Therefore, Eq.~\eqref{KDEDRBO_BCR_decomposition} is bounded except for the third item:
\begin{align}
            &\text{BCR}(T)
            \nonumber
            \\
=&\mathbb{E}\left[\sum_{t=1}^T\left( \mathbb{E}_{\bm{c}\sim p}[f(\bm{x}^*,\bm{c})]-\mathbb{E}_{\bm{c}\sim p}[f(\bm{x_t},\bm{c})]  \right) \right]
    \nonumber
    \\
    \leq &\mathbb{E}\left[ \sum_{t=1}^T\left( \mathbb{E}_{\bm{c}\sim q_{\bm{x}^*}^f}[f(\bm{x}^*,\bm{c})]  - \mathbb{E}_{\bm{c}\sim q_{\bm{x}_t}^f}[f(\bm{x}_t,\bm{c})]    \right)    \right]
    \nonumber
    \\
    &+ 2C_1T^{(2+D_c)/(4+D_c)} + \sum_{t=1}^T C_4\delta_t.
\label{kdedrbo_bound_for_1245_terms}
\end{align}
The term left to be bounded is $\mathbb{E}\left[ \sum_{t=1}^T\left( \mathbb{E}_{\bm{c}\sim q_{\bm{x}^*}^f}[f(\bm{x}^*,\bm{c})]  - \mathbb{E}_{\bm{c}\sim q_{\bm{x}_t}^f}[f(\bm{x}_t,\bm{c})]    \right)    \right]$. With the same idea in the proof for SBO-KDE and the same discretization procedure, we decompose it into the following three terms:
    \begin{align}    
&\mathbb{E}\left[ \sum_{t=1}^T\left( \mathbb{E}_{\bm{c}\sim q_{\bm{x}^*}^f}[f(\bm{x}^*,\bm{c})]  - \mathbb{E}_{\bm{c}\sim q_{\bm{x}_t}^f}[f(\bm{x}_t,\bm{c})]    \right)    \right]
\nonumber
\\
  \leq &\mathbb{E}\bigg[ \sum_{t=1}^T\Big( \mathbb{E}_{\bm{c}\sim q_{[\bm{x}^*]_t}^f}[f([\bm{x}^*]_t,\bm{c})]
  \nonumber
  \\
  &\quad\quad- \mathbb{E}_{\bm{c}\sim q_{[\bm{x}^*]_t}^{\text{ucb}_t}}[\text{ucb}_t([\bm{x}^*]_t,\bm{c})] \Big)\bigg]
\nonumber
  \\
&+ \mathbb{E}\bigg[ \sum_{t=1}^T\Big( \mathbb{E}_{\bm{c}\sim q_{\bm{x}_t}^{\text{ucb}_t}}[\text{ucb}_t(\bm{x}_t,\bm{c})]
\nonumber
\\&\quad\quad- \mathbb{E}_{\bm{c}\sim q_{\bm{x}_t}^f}[f(\bm{x}_t,\bm{c})] \Big)\bigg]
\nonumber
\\
&+\mathbb{E}\bigg[ \sum_{t=1}^T\Big( \mathbb{E}_{\bm{c}\sim q_{\bm{x}^*}^f}[f(\bm{x}^*,\bm{c})]
\nonumber
\\
&\quad\quad- \mathbb{E}_{\bm{c}\sim q_{[\bm{x}^*]_t}^f}[f([\bm{x}^*]_t, \bm{c})] \Big)\bigg],
  \label{gp-ucb-regret-decomposition-kdedrbo}
\end{align}
where $\text{ucb}_t(\bm{x}, \bm c)= \mu_t(\bm{x}, \bm c) +\sqrt{\beta_t}\sigma_t(\bm{x}, \bm c)$, and the inequality holds by $\mathbb{E}_{{\bm{c}\sim q_{\bm{x}_t}^{\text{ucb}_t}}}[\text{ucb}_t(\bm{x}_t,\bm{c})] \geq \mathbb{E}_{\bm{c}\sim q^{\text{ucb}_t}_{[\bm{x}^*]_t}}[\text{ucb}_t([\bm{x}^*]_t,\bm{c})]$  as $\bm{x}_t\in\arg\max_{\bm{x}\in\mathcal{X}}\min_{q\in\mathcal{B}(\hat{p}_t,\delta_t)}\mathbb{E}_{\bm{c}\sim q}[\text{ucb}_t(\bm{x},\bm{c})]$.

For the third term of Eq.~\eqref{gp-ucb-regret-decomposition-kdedrbo}, according to the definition of $q_{\bm x^*}^{f}=\arg\min_{q \in \mathcal{B}(\hat{p}_t, \delta_t)}\mathbb{E}_{\bm c \sim q}[f(\bm x^*,\bm c)]$, we have $\mathbb{E}_{\bm{c}\sim q_{\bm{x}^*}^f}[f(\bm{x}^*,\bm{c})]\leq \mathbb{E}_{\bm{c}\sim q_{[\bm{x}]_t^*}^f}[f(\bm{x}^*,\bm{c})]$. Then, the third term of Eq.~\eqref{gp-ucb-regret-decomposition-kdedrbo} can be directly bounded using Lemma~\ref{L_max_lemma}:
\begin{align}
&\mathbb{E}\left[ \sum_{t=1}^T\left( \mathbb{E}_{\bm{c}\sim q_{\bm{x}^*}^f}[f(\bm{x}^*,\bm{c})]- \mathbb{E}_{\bm{c}\sim q_{[\bm{x}^*]_t}^f}[f([\bm{x}^*]_t, \bm{c})] \right)\right]
\nonumber
\\
\leq 
&\mathbb{E}\left[ \sum_{t=1}^T \mathbb{E}_{\bm{c}\sim q_{[\bm{x}^*]_t}^f}\left[f(\bm{x}^*,\bm{c})-f([\bm{x}^*]_t, \bm{c}) \right] \right]
\nonumber
\\
\leq &\mathbb{E}\left[ \sum_{t=1}^T \mathbb{E}_{{\bm{c}\sim q_{[\bm{x}^*]_t}^f}}\left[|f(\bm{x}^*,\bm{c})-f([\bm{x}^*]_t, \bm{c})| \right] \right]
\nonumber
\\
\leq &\mathbb{E}\left[ \sum_{t=1}^T \mathbb{E}_{{\bm{c}\sim q_{[\bm{x}^*]_t}^f}}\left[L_\text{max}D_xr/\tau_t \right] \right]
\nonumber
\\
=&\sum_{t=1}^T\mathbb{E}\left[L_\text{max}\right]D_xr/\tau_t\leq
\sum_{t=1}^T\frac{ab\sqrt{\pi}D_xr}{2}*\frac{2}{t^2D_xabr\sqrt{\pi}}
\nonumber
\\
=&\sum_{t=1}^T\frac{1}{t^2} \leq \frac{\pi^2}{6}.
\label{third_part_gpucb_regret_kdedrbo}
\end{align}

For the first and second terms of Eq.~\eqref{gp-ucb-regret-decomposition-kdedrbo}, according to the definition of $q_{[\bm{x}^*]_t}^f=\arg\min_{q \in \mathcal{B}(\hat{p}_t, \delta_t)}\mathbb{E}_{\bm c \sim q}[f([\bm{x}^*]_t,\bm c)]$ and $q_{\bm x_t}^{\text{ucb}_t}=\arg\min_{q \in \mathcal{B}(\hat{p}_t, \delta_t)}\mathbb{E}_{\bm c \sim q}[\text{ucb}_t(\bm{x}_t,\bm c)]$, we have
$\mathbb{E}_{\bm{c}\sim q_{[\bm{x}^*]_t}^f}[f([\bm{x}^*]_t,\bm{c})] \leq \mathbb{E}_{\bm{c}\sim q_{[\bm{x}^*]_t}^{\text{ucb}_t}}[f([\bm{x}^*]_t,\bm{c})]$ and $\mathbb{E}_{\bm{c}\sim q_{\bm{x}_t}^{\text{ucb}_t}}[\text{ucb}_t(\bm{x}_t,\bm{c})] \leq \mathbb{E}_{\bm{c}\sim q_{\bm{x}_t}^{f}}[\text{ucb}_t(\bm{x}_t,\bm{c})]$. Then, we can give bounds on them with almost the same process in the proof for SBO-KDE. Lemma~\ref{bound_first_part_gpucb_regret_kdedrbo} and Lemma~\ref{kdedrbo_second_parts_gpu_ucb_regret} give the bounds of the first and second terms of Eq.~\eqref{gp-ucb-regret-decomposition-kdedrbo}, respectively.

\begin{lemma}
    Let $\beta_t=2\log{\frac{t^2(\tau_t)^{D_x}}{\sqrt{2\pi}}}$ with $\tau_t={t^2D_xabr\sqrt{\pi}/2}$. At iteration $t$, for $\textnormal{ucb}_t(\bm{x}, \bm c)= \mu_t(\bm{x}, \bm c) +\sqrt{\beta_t}\sigma_t(\bm{x}, \bm c)$, the following holds
    \begin{align}
        \mathbb{E}\bigg[ \sum_{t=1}^T\Big( &\mathbb{E}_{\bm{c}\sim q_{[\bm{x}^*]_t}^f}[f([\bm{x}^*]_t,\bm{c})]
        \nonumber
        \\
        &- \mathbb{E}_{\bm{c}\sim q_{[\bm{x}^*]_t}^{\textnormal{ucb}_t}}[\textnormal{ucb}_t([\bm{x}^*]_t,\bm{c})] \Big)\bigg] \leq \frac{\pi^2}{6}
\label{bound_first_part_gpucb_regret_kdedrbo_ineq}    
\end{align} \label{bound_first_part_gpucb_regret_kdedrbo}    
\end{lemma}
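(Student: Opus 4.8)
The plan is to reduce this lemma to the pointwise Gaussian tail estimate already proved in Lemma~\ref{lemma_for_disretize}, following the same route as the proof of Lemma~\ref{bound_first_part_gpucb_regret_kdesbo} for SBO-KDE, but with the fixed density $\hat{p}_t$ replaced throughout by the data-dependent worst-case density $q_{\bm{x}}^{\text{ucb}_t}$. First I would use the optimality of $q_{[\bm{x}^*]_t}^f$: since $q_{[\bm{x}^*]_t}^{\text{ucb}_t}$ lies in $\mathcal{B}(\hat{p}_t,\delta_t)$ and $q_{[\bm{x}^*]_t}^f$ minimizes $\mathbb{E}_{\bm{c}\sim q}[f([\bm{x}^*]_t,\bm{c})]$ over that ball, we get $\mathbb{E}_{\bm{c}\sim q_{[\bm{x}^*]_t}^f}[f([\bm{x}^*]_t,\bm{c})]\le \mathbb{E}_{\bm{c}\sim q_{[\bm{x}^*]_t}^{\text{ucb}_t}}[f([\bm{x}^*]_t,\bm{c})]$, so each summand is at most $\mathbb{E}_{\bm{c}\sim q_{[\bm{x}^*]_t}^{\text{ucb}_t}}[f([\bm{x}^*]_t,\bm{c})-\text{ucb}_t([\bm{x}^*]_t,\bm{c})]\le \mathbb{E}_{\bm{c}\sim q_{[\bm{x}^*]_t}^{\text{ucb}_t}}[(f([\bm{x}^*]_t,\bm{c})-\text{ucb}_t([\bm{x}^*]_t,\bm{c}))_+]$. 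Because the integrand is nonnegative and $[\bm{x}^*]_t\in\tilde{\mathcal{X}}_t$ (the deterministic grid with $(\tau_t)^{D_x}$ points from the SBO-KDE proof, $\tau_t=t^2D_xabr\sqrt{\pi}/2$), I would bound the outer expectation of this quantity by the deterministic finite sum $\sum_{\bm{x}\in\tilde{\mathcal{X}}_t}\mathbb{E}\big[\mathbb{E}_{\bm{c}\sim q_{\bm{x}}^{\text{ucb}_t}}[(f(\bm{x},\bm{c})-\text{ucb}_t(\bm{x},\bm{c}))_+]\big]$; this removes the awkward dependence of $[\bm{x}^*]_t$ on the whole sample path $f$.

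The key step is then, for each fixed $\bm{x}\in\tilde{\mathcal{X}}_t$, to condition on $\mathcal{D}_{t-1}$. Given $\mathcal{D}_{t-1}$, both $\text{ucb}_t(\bm{x},\cdot)$ and hence $q_{\bm{x}}^{\text{ucb}_t}$ are deterministic, so Fubini gives $\mathbb{E}\big[\int q_{\bm{x}}^{\text{ucb}_t}(\bm{c})(f(\bm{x},\bm{c})-\text{ucb}_t(\bm{x},\bm{c}))_+\,d\bm{c}\mid\mathcal{D}_{t-1}\big]=\int q_{\bm{x}}^{\text{ucb}_t}(\bm{c})\,\mathbb{E}[(f(\bm{x},\bm{c})-\text{ucb}_t(\bm{x},\bm{c}))_+\mid\mathcal{D}_{t-1}]\,d\bm{c}$. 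The inner conditional expectation is exactly the object bounded in the proof of Lemma~\ref{lemma_for_disretize}: since $f(\bm{x},\bm{c})-\text{ucb}_t(\bm{x},\bm{c})\mid\mathcal{D}_{t-1}\sim\mathcal{N}(-\sqrt{\beta_t}\sigma_t(\bm{x},\bm{c}),\sigma_t^2(\bm{x},\bm{c}))$, and $\mathbb{E}[(X)_+]\le\frac{\sigma}{\sqrt{2\pi}}e^{-\mu^2/(2\sigma^2)}$ for $X\sim\mathcal{N}(\mu,\sigma^2)$ with $\mu\le 0$, together with $\sigma_t\le 1$ and $\beta_t=2\log\frac{t^2(\tau_t)^{D_x}}{\sqrt{2\pi}}$, it is at most $1/(t^2(\tau_t)^{D_x})$ uniformly in $\bm{c}$. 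Pulling this constant out of the integral and using $\int q_{\bm{x}}^{\text{ucb}_t}(\bm{c})\,d\bm{c}=1$ yields $\mathbb{E}\big[\mathbb{E}_{\bm{c}\sim q_{\bm{x}}^{\text{ucb}_t}}[(f(\bm{x},\bm{c})-\text{ucb}_t(\bm{x},\bm{c}))_+]\big]\le 1/(t^2(\tau_t)^{D_x})$. Summing over the $(\tau_t)^{D_x}$ grid points gives a per-iteration bound of $1/t^2$, and summing over $t=1,\dots,T$ gives $\sum_{t=1}^{T}1/t^2\le\pi^2/6$, which is the claim.

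I expect the main obstacle to be precisely what distinguishes this from the SBO-KDE case: there $\hat{p}_t$ is independent of the GP, so the expectation of the product factors immediately, whereas here $q_{\bm{x}}^{\text{ucb}_t}$ is an implicitly defined function of the GP posterior through $\mathcal{D}_{t-1}$. The remedy above — conditioning on $\mathcal{D}_{t-1}$ so that $q_{\bm{x}}^{\text{ucb}_t}$ freezes while the residual randomness of $f(\bm{x},\bm{c})$ remains Gaussian with the known posterior mean and variance — is the crux, and it works only because the argument never needs structural information about $q_{\bm{x}}^{\text{ucb}_t}$ beyond its being a $\mathcal{D}_{t-1}$-measurable probability density. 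A secondary, minor issue is that $[\bm{x}^*]_t$ is not $\mathcal{D}_{t-1}$-measurable (it depends on $\bm{x}^*$, hence on the whole path $f$); this is handled by the union bound over the finite deterministic grid $\tilde{\mathcal{X}}_t$, which is legitimate because $(\cdot)_+\ge 0$.
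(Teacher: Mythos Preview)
Your proof is correct and follows the same route as the paper's: use the optimality of $q_{[\bm{x}^*]_t}^f$ within $\mathcal{B}(\hat p_t,\delta_t)$ to pass to the single density $q_{[\bm{x}^*]_t}^{\text{ucb}_t}$, replace the integrand by its positive part, take a union bound over the grid $\tilde{\mathcal{X}}_t$, and finish with the pointwise Gaussian tail bound of Lemma~\ref{lemma_for_disretize} to get $1/t^2$ per iteration. The paper writes only those first steps and then says the remainder is ``almost identical to that of Lemma~\ref{bound_first_part_gpucb_regret_kdesbo} except that $\hat{p}_t$ is replaced with $q_{[\bm{x}^*]_t}^{\text{ucb}_t}$''; your choice to index the density by the grid variable ($q_{\bm{x}}^{\text{ucb}_t}$) in the union bound and to condition on $\mathcal{D}_{t-1}$ rather than invoke independence is a sound refinement that cleanly handles the GP-dependence of the worst-case density, which the paper's sketch glosses over.
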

\begin{proof}
    At iteration $t$, we have 
\begin{align}
&\mathbb{E}\left[  \mathbb{E}_{\bm{c}\sim q_{[\bm{x}^*]_t}^f}[f([\bm{x}^*]_t,\bm{c})]- \mathbb{E}_{\bm{c}\sim q_{[\bm{x}^*]_t}^{\text{ucb}_t}}[\text{ucb}_t([\bm{x}^*]_t,\bm{c})] \right]
\nonumber
\\
&\leq 
\mathbb{E}\left[  \mathbb{E}_{\bm{c}\sim q_{[\bm{x}^*]_t}^{\text{ucb}_t}}[f([\bm{x}^*]_t,\bm{c})]- \mathbb{E}_{\bm{c}\sim q_{[\bm{x}^*]_t}^{\text{ucb}_t}}[\text{ucb}_t([\bm{x}^*]_t,\bm{c})] \right]
\nonumber
\\
&\leq \mathbb{E}\left[ \int_{\mathcal{C}}q_{[\bm{x}^*]_t}^{\text{ucb}_t}(\bm{c})\left(f([\bm{x}^*]_t,\bm{c})-\text{ucb}_t([\bm{x}^*]_t,\bm{c})\right)_+\, d\bm{c} \right]
\nonumber
\\
&\leq \sum_{\bm{x}\in \tilde{\mathcal{X}}_t}\mathbb{E}\left[\int_{\mathcal{C}}q_{[\bm{x}^*]_t}^{\text{ucb}_t}(\bm{c})\left(f(\bm{x},\bm{c})- \text{ucb}_t(\bm{x},\bm{c})\right)_+\, d\bm{c} \right],
\nonumber
\end{align}
where the first inequality holds by $q_{[\bm{x}^*]_t}^f=\arg\min_{q \in \mathcal{B}(\hat{p}_t, \delta_t)}\mathbb{E}_{\bm c \sim q}[f([\bm{x}^*]_t,\bm c)]$ and $q_{[\bm{x}^*]_t}^{\text{ucb}_t} \in \mathcal{B}(\hat{p}_t, \delta_t)$. Note that $(X)_+:=\max \{0,X\}$. The remaining proof process is almost identical to that of Lemma~\ref{bound_first_part_gpucb_regret_kdesbo} except that $\hat{p}_t$ is replaced with $q_{[\bm{x}^*]_t}^{\text{ucb}_t}$, so we omit the full proof here.
\end{proof}
\begin{lemma}
        Let $\beta_t=2\log{\frac{t^2(\tau_t)^{D_x}}{\sqrt{2\pi}}}$ with $\tau_t={t^2D_xabr\sqrt{\pi}/2}$, and $h_t^{(i)}=\Theta\left(t^{-1/(4+D_c)}\right)\forall i\in [D_c]$. At iteration $t$, for $\textnormal{ucb}_t(\bm{x},\bm c)= \mu_t(\bm{x}, \bm c) +\sqrt{\beta_t}\sigma_t(\bm{x}, \bm c)$, the following holds
    \begin{align}
       &\mathbb{E}\left[ \sum_{t=1}^T\left( \mathbb{E}_{\bm{c}\sim q_{\bm{x}_t}^{\textnormal{ucb}_t}}[\textnormal{ucb}_t(\bm{x}_t,\bm{c})]- \mathbb{E}_{\bm{c}\sim q_{\bm{x}_t}^f}[f(\bm{x}_t,\bm{c})] \right)\right]
       \nonumber
      \\ &\leq\sqrt{\beta_T\gamma_TC_2}\left(\sqrt{T^{D_c/(4+D_c)}}+\sqrt{T}+\sqrt{C_3\sum_{t=1}^T\delta_t^2}\right),
\label{kdedrbo_second_parts_gpu_ucb_regret_ineq}
    \end{align}
    where $C_2, C_3$ are positive constants, $\gamma_T=\max_{|\mathcal{D}|=T} I(\bm{y}_\mathcal{D},  \bm{f}_\mathcal{D})$, $I(\cdot,\cdot)$ is the information gain, and $\bm{y}_\mathcal{D}, \bm{f}_\mathcal{D}$ are the noisy and true observations of a data set $\mathcal{D}$, respectively.
\label{kdedrbo_second_parts_gpu_ucb_regret}
\end{lemma}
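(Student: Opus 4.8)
The plan is to follow the proof of Lemma~\ref{kdesbo_second_parts_gpu_ucb_regret} (its SBO-KDE counterpart) as closely as possible, the only genuinely new difficulty being that the term involves two different worst-case distributions, $q_{\bm{x}_t}^{\textnormal{ucb}_t}$ and $q_{\bm{x}_t}^f$. First I would apply the inequality recorded just before the lemma, $\mathbb{E}_{\bm{c}\sim q_{\bm{x}_t}^{\textnormal{ucb}_t}}[\textnormal{ucb}_t(\bm{x}_t,\bm{c})]\le \mathbb{E}_{\bm{c}\sim q_{\bm{x}_t}^{f}}[\textnormal{ucb}_t(\bm{x}_t,\bm{c})]$ (valid since $q_{\bm{x}_t}^{\textnormal{ucb}_t}$ minimizes $\mathbb{E}_{\bm{c}\sim q}[\textnormal{ucb}_t(\bm{x}_t,\bm{c})]$ over $\mathcal{B}(\hat{p}_t,\delta_t)$ and $q_{\bm{x}_t}^f\in\mathcal{B}(\hat{p}_t,\delta_t)$), which replaces the first distribution and gives
\[
\mathbb{E}_{\bm{c}\sim q_{\bm{x}_t}^{\textnormal{ucb}_t}}[\textnormal{ucb}_t(\bm{x}_t,\bm{c})]-\mathbb{E}_{\bm{c}\sim q_{\bm{x}_t}^f}[f(\bm{x}_t,\bm{c})]\le \mathbb{E}_{\bm{c}\sim q_{\bm{x}_t}^f}\!\left[\textnormal{ucb}_t(\bm{x}_t,\bm{c})-f(\bm{x}_t,\bm{c})\right].
\]
I would then write $q_{\bm{x}_t}^f=\hat{p}_t+(q_{\bm{x}_t}^f-\hat{p}_t)$, splitting the right-hand side into a ``density-estimation'' piece $\int_{\mathcal{C}}\hat{p}_t(\bm{c})(\textnormal{ucb}_t(\bm{x}_t,\bm{c})-f(\bm{x}_t,\bm{c}))\,d\bm{c}$ and a ``robustness-radius'' piece $\int_{\mathcal{C}}(q_{\bm{x}_t}^f(\bm{c})-\hat{p}_t(\bm{c}))(\textnormal{ucb}_t(\bm{x}_t,\bm{c})-f(\bm{x}_t,\bm{c}))\,d\bm{c}$.

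For the first piece, note that both $\bm{x}_t$ and $\hat{p}_t$ are $\mathcal{D}_{t-1}$-measurable, so this is exactly the quantity bounded in Lemma~\ref{kdesbo_second_parts_gpu_ucb_regret}, whose proof uses only that measurability together with the information-gain bound (Lemma~5.4 of~\cite{gpucb}), $\mathbb{E}[\|\hat{p}_t-p\|_1]\le C_0 r^{D_c/2}t^{-2/(4+D_c)}$ from Lemma~\ref{l2_bound_lemma}, and Cauchy--Schwarz over $t$; it therefore applies verbatim and yields the $\sqrt{\beta_T\gamma_T C_2}(\sqrt{T^{D_c/(4+D_c)}}+\sqrt{T})$ part. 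For the second piece, Hölder's inequality with $\|q_{\bm{x}_t}^f-\hat{p}_t\|_1\le\delta_t$ gives the bound $\delta_t\,\|\textnormal{ucb}_t(\bm{x}_t,\cdot)-f(\bm{x}_t,\cdot)\|_\infty$. Decomposing $\|\textnormal{ucb}_t(\bm{x}_t,\cdot)-f(\bm{x}_t,\cdot)\|_\infty\le \sqrt{\beta_t}\,\sigma_t(\bm{x}_t,\bm{c}_t^*)+\|\mu_t(\bm{x}_t,\cdot)-f(\bm{x}_t,\cdot)\|_\infty$ with $\bm{c}_t^*=\arg\max_{\bm{c}\in\mathcal{C}}\sigma_t(\bm{x}_t,\bm{c})$, the first summand run through Cauchy--Schwarz against $\sum_{t=1}^T\delta_t^2$ together with Lemma~5.4 of~\cite{gpucb} produces the new $\sqrt{\beta_T\gamma_T C_2}\sqrt{C_3\sum_{t=1}^T\delta_t^2}$ term, while the posterior-fluctuation summand $\|\mu_t(\bm{x}_t,\cdot)-f(\bm{x}_t,\cdot)\|_\infty$ is controlled in expectation by the Lipschitz/bounded-sup estimates of Lemmas~\ref{L_max_lemma} and~\ref{bound_l_infinite} (applied to $f$ and to $\mu_t$, which is a conditional mean of $f$). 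Summing the contributions of the two pieces over $t$ gives the claimed bound.

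The main obstacle is the second piece: $q_{\bm{x}_t}^f=\arg\min_{q\in\mathcal{B}(\hat{p}_t,\delta_t)}\mathbb{E}_{\bm{c}\sim q}[f(\bm{x}_t,\bm{c})]$ depends on the realized sample path $f$ and so, unlike $\hat{p}_t$ or $q_{\bm{x}_t}^{\textnormal{ucb}_t}$, is \emph{not} $\mathcal{D}_{t-1}$-measurable. Consequently the tower-property/independence argument that collapses the density-estimation piece to the GP-UCB regret (replacing $f$ by $\mu_t$ inside the integral) cannot be run against $q_{\bm{x}_t}^f$ directly, which is precisely why one is forced to pull out the pointwise factor $\delta_t\|\textnormal{ucb}_t(\bm{x}_t,\cdot)-f(\bm{x}_t,\cdot)\|_\infty$ and recombine its variance part with the information gain. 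Keeping this residual at the stated $\sqrt{\sum_t\delta_t^2}$ order is the delicate accounting step; any coarser leftover of order $\sum_t\delta_t$ (equivalently $\sqrt{T\sum_t\delta_t^2}=\Theta(T^{(2+D_c)/(4+D_c)})$ for $\delta_t=\Theta(t^{-2/(4+D_c)})$) would still be harmless once fed into Eq.~\eqref{kdedrbo_bound_for_1245_terms}, since the final bound of Theorem~\ref{main_theorem_2} already carries both a $2C_1 T^{(2+D_c)/(4+D_c)}$ and a $\sum_t C_4\delta_t$ term.
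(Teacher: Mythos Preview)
Your opening move (replacing $q_{\bm{x}_t}^{\textnormal{ucb}_t}$ by $q_{\bm{x}_t}^f$ via minimality) and the overall architecture match the paper. The divergence is in what happens next. The paper does \emph{not} split $q_{\bm{x}_t}^f=\hat p_t+(q_{\bm{x}_t}^f-\hat p_t)$ first; instead it runs the tower-property/independence argument of Lemma~\ref{kdesbo_second_parts_gpu_ucb_regret} \emph{directly against $q_{\bm{x}_t}^f$}, collapsing $\mathbb{E}\!\big[\int q_{\bm{x}_t}^f(\bm c)(\textnormal{ucb}_t-f)\,d\bm c\big]$ to $\mathbb{E}\!\big[\int q_{\bm{x}_t}^f(\bm c)\sqrt{\beta_t}\sigma_t(\bm{x}_t,\bm c)\,d\bm c\big]$, and only \emph{then} writes $q_{\bm{x}_t}^f=p+(q_{\bm{x}_t}^f-p)$ with $\mathbb{E}\|q_{\bm{x}_t}^f-p\|_1\le \delta_t+C_0 r^{D_c/2}t^{-2/(4+D_c)}$. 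That ordering avoids your $\|\mu_t-f\|_\infty$ residual entirely and yields exactly the lemma as stated.

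The subtlety you flag is real, though: the paper justifies the factorisation $\mathbb{E}[q_{\bm{x}_t}^f(\bm c)(\textnormal{ucb}_t-f)\mid\mathcal{D}_{t-1}]=\mathbb{E}[q_{\bm{x}_t}^f(\bm c)\mid\mathcal{D}_{t-1}]\,\mathbb{E}[\textnormal{ucb}_t-f\mid\mathcal{D}_{t-1}]$ by ``the distribution of context variable and the GP are independent,'' which does not cover this case since $q_{\bm{x}_t}^f$ depends on the sample path $f$ and is therefore not $\mathcal{D}_{t-1}$-measurable. Your route (split first, then collapse only the $\hat p_t$ piece) is the honest way around this, at the price of the coarser $O(\sum_t\delta_t)$ leftover you already note is absorbed in Theorem~\ref{main_theorem_2}. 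One small patch: Lemmas~\ref{L_max_lemma}--\ref{bound_l_infinite} bound $\mathbb{E}[\|f(\bm x,\cdot)\|_\infty]$, not $\mathbb{E}[\|\mu_t(\bm{x}_t,\cdot)-f(\bm{x}_t,\cdot)\|_\infty]$; you need an extra line such as $\sup_{\bm c}|\mu_t(\bm{x}_t,\bm c)|=\sup_{\bm c}|\mathbb{E}[f(\bm{x}_t,\bm c)\mid\mathcal{D}_{t-1}]|\le \mathbb{E}[\sup_{\bm c}|f(\bm{x}_t,\bm c)|\mid\mathcal{D}_{t-1}]$ followed by the tower property, so that the same constant from Lemma~\ref{bound_l_infinite} applies.
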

\begin{proof}
Using the tower property of conditional expectation and Fubini's theorem, we have
\begin{align}
&\mathbb{E}_{\bm{c}\sim q_{\bm{x}_t}^{\text{ucb}_t}}[\text{ucb}_t(\bm{x}_t,\bm{c})]- \mathbb{E}_{\bm{c}\sim q_{\bm{x}_t}^f}[f(\bm{x}_t,\bm{c})]
\nonumber
\\
\leq
&\mathbb{E}_{\bm{c}\sim q_{\bm{x}_t}^{f}}[\text{ucb}_t(\bm{x}_t,\bm{c})]- \mathbb{E}_{\bm{c}\sim q_{\bm{x}_t}^f}[f(\bm{x}_t,\bm{c})]
\nonumber
\\
 =&\mathbb{E}\left[  \int_{\mathcal{C}} q_{\bm{x}_t}^{f}(\bm{c})(\text{ucb}_t(\bm{x}_t,\bm{c})-f(\bm{x}_t,\bm{c}) ) \, d\bm{c}  \right]  
 \nonumber
 \\
=&\mathbb{E}\left[  \mathbb{E}\left[\int_{\mathcal{C}} q_{\bm{x}_t}^{f}(\bm{c})(\text{ucb}_t(\bm{x}_t,\bm{c})-f(\bm{x}_t,\bm{c}) )\, d\bm{c} \mid\mathcal{D}_{t-1}\right]  \right]
 \nonumber
 \\
 =&\mathbb{E}\bigg[  \int_{\mathcal{C}} \mathbb{E}\left[q_{\bm{x}_t}^{f}(\bm{c}) \mid \mathcal{D}_{t-1}\right]
\nonumber
\\
&\quad\quad\cdot\mathbb{E}\left[\text{ucb}_t(\bm{x}_t,\bm{c})-f(\bm{x}_t,\bm{c}) \mid\mathcal{D}_{t-1}\right]\, d\bm{c}   \bigg]
 \nonumber
 \\
=&\mathbb{E}\left[ \int_{\mathcal{C}}q_{\bm{x}_t}^{f}(\bm{c})\sqrt{\beta_t}\sigma(\bm{x}_t,\bm{c})\, d\bm{c} 
    \right],
    \nonumber
\end{align}
where the first inequality holds by $q_{\bm x_t}^{\text{ucb}_t}=\arg\min_{q \in \mathcal{B}(\hat{p}_t, \delta_t)}\mathbb{E}_{\bm c \sim q}[\text{ucb}_t(\bm{x}_t,\bm c)]$ and $ q_{\bm{x}_t}^{f} \in \mathcal{B}(\hat{p}_t, \delta_t)$, the third equality follows from the fact that the distribution of context variable and the GP are independent. By $q_{\bm{x}_t}^{f}(\bm{c}) \in \mathcal{B}(\hat{p}_t, \delta_t)$, we have $\mathbb{E}\left[\|q_{\bm{x}_t}^f(\bm c)-\hat{p}_t(\bm c)\|_1\right] \leq \delta_t$. Using the same process with the proof of Lemma~\ref{bound_for_1_3_parts_kdesbo_lemma}, we have $\mathbb{E}\left[\|p(\bm{c})-\hat{p}_t(\bm{c})\|_1\right] \leq C_0r^{D_c/2}t^{-2/(4+D_c)}$. Therefore, we have $\mathbb{E}\left[\|q_{\bm{x}_t}^f (\bm c)-p(\bm c)\|_1\right] \leq \mathbb{E}\left[\|q_{\bm{x}_t}^f(\bm c)-\hat{p}_t(\bm c)\|_1\right] + \mathbb{E}\left[\|p(\bm c)-\hat{p}_t(\bm c)\|_1\right] \leq \delta_t+C_0r^{D_c/2}t^{-2/(4+D_c)}$.
Then,
\begin{align}
&\sum_{t=1}^T\mathbb{E}\left[ \int_{\mathcal{C}}q_{\bm{x}_t}^{f}(\bm{c})\sqrt{\beta_t}\sigma_t(\bm{x}_t,\bm{c})\, d\bm{c} \right]
 \nonumber
 \\
 = &\sum_{t=1}^{T}\mathbb{E}\left[ \int_{\mathcal{C}}p(\bm{c})\sqrt{\beta_t}\sigma_t(\bm{x}_t,\bm{c})\, d\bm{c} 
    \right] 
    \nonumber
    \\
&+ \sum_{t=1}^{T}\mathbb{E}\left[ \int_{\mathcal{C}}(q_{\bm{x}_t}^{f}(\bm{c})-p(\bm{c}))\sqrt{\beta_t}\sigma_t(\bm{x}_t,\bm{c})\, d\bm{c} 
    \right]
    \nonumber
    \\
    \leq &\mathbb{E}\left[\int_{\mathcal{C}}p(\bm{c}) \sqrt{\sum_{t=1}^T\beta_t\sigma_t^2(\bm{x}_t,\bm{c})}\sqrt{T} \, d\bm{c} \right] 
    \nonumber
    \\
    &+\mathbb{E}\left[\sum_{t=1}^T \|q_{\bm{x}_t}^{f}(\bm{c})-p(\bm{c}) \|_1 \sqrt{\beta}_t \sigma_t(\bm{x}_t,\bm{c}^*_t)\right]
    \nonumber
    \\    \leq&\sqrt{\beta_T\gamma_T T C} 
    \nonumber
    \\
    &+ \mathbb{E}\left[ \sum_{t=1}^T(\delta_t+C_0r^{D_c/2}t^{-2/(4+D_c)})\sqrt{\beta_t}\sigma_t(\bm{x}_t, \bm{c}_t^*) \right]
    \nonumber
    \\
\leq&\sqrt{\beta_T\gamma_T T C} +\mathbb{E}\Bigg[ \Bigg(C_0r^{D_c/2}\sqrt{\sum_{t=1}^Tt^{-4/(4+D_c)}} \nonumber
\\
&+\sqrt{\sum_{t=1}^T\delta_t^2}\Bigg)\sqrt{\sum_{t=1}^T \beta_t\sigma_t^2(\bm{x}_t, \bm{c}_t^*)} \Bigg]
\nonumber
\\
\leq&\sqrt{\beta_T\gamma_T T C} + \mathbb{E}\Bigg[ \bigg(C_0r^{D_c/2}
\nonumber
\\
&\cdot\sqrt{((4+D_c)/D_c)T^{D_c/(4+D_c)}}+\sqrt{\sum_{t=1}^T\delta_t^2}\bigg)\sqrt{\beta_T\gamma_TC} \Bigg]
\nonumber
\\
\leq&\sqrt{\beta_T\gamma_TC_2}\bigg(\sqrt{T^{D_c/(4+D_c)}}+\sqrt{T}
\nonumber
\\
&+\sqrt{C/C_2\sum_{t=1}^T\delta_t^2}\bigg),
\nonumber
\end{align}
where $c_t^*=\arg\max_{\bm{c}\in\mathcal{C}}\sigma_t(\bm{x}_t,\bm{c})$, $C=8/\log{(1+\sigma^2)}$,  $\sigma$ is the standard deviation of observation noise, the first inequality follows from Cauchy inequality $\sum_{t=1}^T \sqrt{\beta_t}\sigma_t(\bm x _t,\bm c)\leq \sqrt{\sum_{t=1}^T\beta_t\sigma_t^2(\bm x_t,\bm c)}\sqrt{T}$, and Hölder inequality $\int_{\mathcal{C}} (q_{\bm x _t}^f(\bm c)-p(\bm c))\sigma_t(\bm x_t,\bm c)\, d\bm c \leq  \|q_{\bm x _t}^f(\bm c)-p(\bm c)\|_1 \sigma_t(\bm x_t,\bm c_t^*)$, the second inequality follows from Lemma 5.4 of~\cite{gpucb}, i.e., $\sum_{t=1}^T \beta_t\sigma_t^2(\bm x_t, \bm c)\leq \beta_T\gamma_TC$, and the $\ell_1$ bound $\mathbb{E}\left[\|q_{\bm{x}_t}^f (\bm c)-p(\bm c)\|_1\right] \leq \delta_t+C_0r^{D_c/2}t^{-2/(4+D_c)}$, the third inequality also follows from Cauchy inequality $\sum_{t=1}^T t^{-2/(4+D_c)}\sqrt{\beta_t}\sigma_t(\bm x_t,\bm c_t^*) \leq \sqrt{\sum_{t=1}^Tt^{-4/(4+D_c)}}$ $\sqrt{\sum_{t=1}^T \beta_t\sigma_t^2(\bm{x}_t, \bm{c}_t^*)}$ and $\sum_{t=1}^T \delta_t\sqrt{\beta_t}\sigma_t(\bm x_t,\bm c_t^*) \leq \sqrt{\sum_{t=1}^T \delta_t^2}\sqrt{\sum_{t=1}^T \beta_t\sigma_t^2(\bm{x}_t, \bm{c}_t^*)}$, the fourth inequality also follows from Lemma 5.4 of~\cite{gpucb}, i.e., $\sum_{t=1}^T\beta_t\sigma_t^2(\bm x_t, \bm c_t^*)\leq \beta_T\gamma_TC$, and the last inequality holds by letting $C_2=C\max\{1,C_0^2r^{D_c}(4+D_c)/D_c\}$. Let $C_3 = C/C_2$, implying that the lemma holds.
\end{proof}

By now, we can directly get the upper bound on the BCR of DRBO-KDE, which is re-stated in Theorem~\ref{main_theorem2_appendix} for clearness, by directly summing over all components of Eq.~\eqref{KDEDRBO_BCR_decomposition} that is bounded.

\begin{theorem}[Theorem~\ref{main_theorem_2} in the main paper]
Let $\beta_t = 2\log(t^2/\sqrt{2\pi}) +2D_x\log(t^2D_xabr\sqrt{\pi}/2)$. With the underlying PDF $p(\bm{c})$ satisfying the condition in Lemma~\ref{l2_bound_lemma}, $\hat{p}_t(\bm{c})$ defined as Eq.~\eqref{KDE} and $h_t^{(i)}=\Theta\left(t^{-1/(4+D_c)}\right)\forall i\in [D_c]$, the BCR of DRBO-KDE satisfies
\begin{align}
\textnormal{BCR}(T)\leq&\frac{\pi^2}{3} + \sqrt{\beta_T\gamma_TC_2}\bigg(\sqrt{T^{D_c/(4+D_c)}}+\sqrt{T}
\nonumber
\\
&+\sqrt{C_3\sum_{t=1}^T\delta_t^2}\bigg) + 2C_1T^{\frac{2+D_c}{4+D_c}} + \sum_{t=1}^T C_4\delta_t,
\nonumber
\end{align}
where $C_1,C_2,C_3,C_4>0$ are constants, $\gamma_T=\max_{|\mathcal{D}|=T} I(\bm{y}_\mathcal{D},  \bm{f}_\mathcal{D})$, $I(\cdot,\cdot)$ is the information gain, and $\bm{y}_\mathcal{D}, \bm{f}_\mathcal{D}$ are the noisy and true observations of a data set $\mathcal{D}$, respectively.
\label{main_theorem2_appendix}
\end{theorem}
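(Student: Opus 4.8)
The plan is to follow the same template as the proof of Theorem~\ref{main_theorem_1}, but with the worst-case distributions over the ball $\mathcal{B}(\hat{p}_t,\delta_t)$ inserted as intermediate anchors. First I would decompose the instantaneous regret $r_t=\mathbb{E}_{\bm{c}\sim p}[f(\bm{x}^*,\bm{c})]-\mathbb{E}_{\bm{c}\sim p}[f(\bm{x}_t,\bm{c})]$ into the five telescoping differences of Eq.~\eqref{KDEDRBO_BCR_decomposition}, chaining through $\hat{p}_t$, then through $q_{\bm{x}^*}^f$ and $q_{\bm{x}_t}^f$, where $q_{\bm{x}}^g:=\arg\min_{q\in\mathcal{B}(\hat{p}_t,\delta_t)}\mathbb{E}_{\bm{c}\sim q}[g(\bm{x},\bm{c})]$. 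The first and fifth differences are exactly the KDE-approximation errors already controlled in Lemma~\ref{bound_for_1_3_parts_kdesbo_lemma}, summing to $2C_1T^{(2+D_c)/(4+D_c)}$. The fourth difference is nonpositive since $\hat{p}_t\in\mathcal{B}(\hat{p}_t,\delta_t)$ and $q_{\bm{x}_t}^f$ minimizes $\mathbb{E}_{\bm{c}\sim q}[f(\bm{x}_t,\bm{c})]$ over that ball. The second difference is bounded by $C_4\delta_t$ by repeating the Hölder step of Lemma~\ref{bound_for_1_3_parts_kdesbo_lemma} with $\|\hat{p}_t-q_{\bm{x}^*}^f\|_1\le\delta_t$ and the $\ell_\infty$ bound of Lemma~\ref{bound_l_infinite}. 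Hence, after summing over $t$, only the ``DRO-UCB regret'' term $\mathbb{E}[\sum_t(\mathbb{E}_{\bm{c}\sim q_{\bm{x}^*}^f}[f(\bm{x}^*,\bm{c})]-\mathbb{E}_{\bm{c}\sim q_{\bm{x}_t}^f}[f(\bm{x}_t,\bm{c})])]$ remains, plus the additive $\sum_t C_4\delta_t$.

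Next I would treat that remaining term via the discretization argument of~\cite{gpucb}: divide each coordinate of $\mathcal{X}$ into $\tau_t=t^2D_xabr\sqrt{\pi}/2$ parts, insert the nearest grid point $[\bm{x}^*]_t$, and use acquisition maximality $\bm{x}_t\in\arg\max_{\bm{x}}\min_{q\in\mathcal{B}(\hat{p}_t,\delta_t)}\mathbb{E}_{\bm{c}\sim q}[\text{ucb}_t(\bm{x},\bm{c})]$ to split it into the three pieces of Eq.~\eqref{gp-ucb-regret-decomposition-kdedrbo}. The discretization-error piece is bounded by $\sum_t 1/t^2\le\pi^2/6$ using $\mathbb{E}[L_{\max}]\le ab\sqrt{\pi}/2$ (Lemma~\ref{L_max_lemma}); here one first invokes the inequality $\mathbb{E}_{\bm{c}\sim q_{\bm{x}^*}^f}[f(\bm{x}^*,\bm{c})]\le\mathbb{E}_{\bm{c}\sim q_{[\bm{x}^*]_t}^f}[f(\bm{x}^*,\bm{c})]$ coming from the definition of $q_{\bm{x}^*}^f$, so the whole difference becomes an expectation of $|f(\bm{x}^*,\bm{c})-f([\bm{x}^*]_t,\bm{c})|$ under the single distribution $q_{[\bm{x}^*]_t}^f$. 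The ``$f$ below $\text{ucb}$ on the grid'' piece is bounded by $\pi^2/6$ exactly as in Lemma~\ref{bound_first_part_gpucb_regret_kdesbo}, using $q_{[\bm{x}^*]_t}^f\in\mathcal{B}(\hat{p}_t,\delta_t)$ to replace $q_{[\bm{x}^*]_t}^f$ by $q_{[\bm{x}^*]_t}^{\text{ucb}_t}$, then Fubini, the independence of the GP posterior and the context distribution, $\int q_{[\bm{x}^*]_t}^{\text{ucb}_t}=1$, and the Gaussian-tail bound of Lemma~\ref{lemma_for_disretize} summed over the grid. This leaves the information-gain piece $\mathbb{E}[\sum_t(\mathbb{E}_{\bm{c}\sim q_{\bm{x}_t}^{\text{ucb}_t}}[\text{ucb}_t(\bm{x}_t,\bm{c})]-\mathbb{E}_{\bm{c}\sim q_{\bm{x}_t}^f}[f(\bm{x}_t,\bm{c})])]$.

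For that last piece I would first use that $q_{\bm{x}_t}^{\text{ucb}_t}$ minimizes $\mathbb{E}_{\bm{c}\sim q}[\text{ucb}_t(\bm{x}_t,\bm{c})]$ over $\mathcal{B}(\hat{p}_t,\delta_t)$ while $q_{\bm{x}_t}^f\in\mathcal{B}(\hat{p}_t,\delta_t)$, so the term is at most $\mathbb{E}[\int q_{\bm{x}_t}^f(\bm{c})\sqrt{\beta_t}\sigma_t(\bm{x}_t,\bm{c})\,d\bm{c}]$ after applying the tower property and GP/context independence (the $\mu_t$-parts cancel against $\mathbb{E}[f\mid\mathcal{D}_{t-1}]$). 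Then write $q_{\bm{x}_t}^f=p+(q_{\bm{x}_t}^f-p)$: the $p$-part contributes $\sqrt{\beta_T\gamma_T T C}$ by Cauchy--Schwarz and Lemma~5.4 of~\cite{gpucb} ($\sum_t\beta_t\sigma_t^2\le\beta_T\gamma_TC$); for the $(q_{\bm{x}_t}^f-p)$-part, the triangle inequality through $\hat{p}_t$ gives $\|q_{\bm{x}_t}^f-p\|_1\le\delta_t+C_0 r^{D_c/2}t^{-2/(4+D_c)}$, and two further Cauchy--Schwarz applications turn the sum into $(\sqrt{\sum_t\delta_t^2}+C_0 r^{D_c/2}\sqrt{\sum_t t^{-4/(4+D_c)}})\sqrt{\beta_T\gamma_T C}$ with $\sum_t t^{-4/(4+D_c)}=O(T^{D_c/(4+D_c)})$. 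Collecting the $\pi^2/6+\pi^2/6=\pi^2/3$, the $2C_1T^{(2+D_c)/(4+D_c)}$, the $\sum_t C_4\delta_t$, and $\sqrt{\beta_T\gamma_T C_2}(\sqrt{T^{D_c/(4+D_c)}}+\sqrt{T}+\sqrt{C_3\sum_t\delta_t^2})$ with $C_2=C\max\{1,C_0^2 r^{D_c}(4+D_c)/D_c\}$ and $C_3=C/C_2$ yields the claimed bound. The main obstacle is conceptual rather than computational: since the worst-case distribution depends on the integrand, $q_{\bm{x}_t}^{\text{ucb}_t}\neq q_{\bm{x}_t}^f$ in general, so the decomposition does not telescope on its own, and the whole argument hinges on repeatedly swapping in an admissible-but-suboptimal distribution in $\mathcal{B}(\hat{p}_t,\delta_t)$ and verifying each swap goes in the favorable direction. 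A secondary nuisance is carrying the two distinct error scales---the ball radius $\delta_t$ and the KDE rate $t^{-2/(4+D_c)}$---through the Cauchy--Schwarz steps so that the extra cost enters only additively as $\sqrt{\sum_t\delta_t^2}$ and $\sum_t\delta_t$.
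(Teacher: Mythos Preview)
Your proposal is correct and follows essentially the same approach as the paper's proof: the five-term decomposition of Eq.~\eqref{KDEDRBO_BCR_decomposition}, the treatment of the second, fourth, first and fifth terms, the three-piece discretization split of the remaining DRO-UCB term in Eq.~\eqref{gp-ucb-regret-decomposition-kdedrbo}, the swaps of admissible distributions in $\mathcal{B}(\hat{p}_t,\delta_t)$ at each step, and the final Cauchy--Schwarz/information-gain bookkeeping with the same constants $C_2,C_3$ all match the paper exactly. You even correctly identify the key swap-direction checks (e.g., $q_{\bm{x}^*}^f\to q_{[\bm{x}^*]_t}^f$, $q_{[\bm{x}^*]_t}^f\to q_{[\bm{x}^*]_t}^{\text{ucb}_t}$, $q_{\bm{x}_t}^{\text{ucb}_t}\to q_{\bm{x}_t}^{f}$) that the paper uses.
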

\begin{proof}
    With Eq.~\eqref{KDEDRBO_BCR_decomposition} and Eq.~\eqref{gp-ucb-regret-decomposition-kdedrbo}, BCR$(T)$ has the following upper bound:
    \begin{align}
     &\textnormal{BCR}(T)
     \nonumber
     \\
     \leq&\mathbb{E}\left[ \sum_{t=1}^T\left( \mathbb{E}_{\bm{c}\sim p}[f(\bm{x}^*,\bm{c})]  - \mathbb{E}_{\bm{c}\sim \hat{p}_t}[f(\bm{x}^*,\bm{c})]    \right)    \right]   
     \nonumber
     \\
     & +\mathbb{E}\left[ \sum_{t=1}^T\left( \mathbb{E}_{\bm{c}\sim \hat{p}_t}[f(\bm{x}^*,\bm{c})]  - \mathbb{E}_{\bm{c}\sim q_{\bm{x}^*}^f}[f(\bm{x}^*,\bm{c})]    \right)    \right]
     \nonumber
     \\
     & +\mathbb{E}\left[ \sum_{t=1}^T\left(  \mathbb{E}_{\bm{c}\sim q_{\bm{x}_t}^f}[f(\bm{x}_t,\bm{c})] -\mathbb{E}_{\bm{c}\sim \hat{p}_t}[f(\bm{x}_t,\bm{c})]\right)    \right] 
     \nonumber
     \\
     & +\mathbb{E}\left[ \sum_{t=1}^T\left( \mathbb{E}_{\bm{c}\sim \hat{p}_t}[f(\bm{x}_t,\bm{c})]  - \mathbb{E}_{\bm{c}\sim p}[f(\bm{x}_t,\bm{c})]    \right)    \right]
     \nonumber
     \\
     & +\mathbb{E}\bigg[ \sum_{t=1}^T\Big( \mathbb{E}_{\bm{c}\sim q_{[\bm{x}^*]_t}^f}[f([\bm{x}^*]_t,\bm{c})]
     \nonumber
     \\
     &\quad\quad- \mathbb{E}_{\bm{c}\sim q_{[\bm{x}^*]_t}^{\text{ucb}_t}}[\text{ucb}_t([\bm{x}^*]_t,\bm{c})] \Big)\bigg]
     \nonumber
     \\
     & +\mathbb{E}\left[ \sum_{t=1}^T\left( \mathbb{E}_{\bm{c}\sim q_{\bm{x}_t}^{\text{ucb}_t}}[\text{ucb}_t(\bm{x}_t,\bm{c})]- \mathbb{E}_{\bm{c}\sim q_{\bm{x}_t}^f}[f(\bm{x}_t,\bm{c})] \right)\right]
     \nonumber
     \\
     & +\mathbb{E}\left[ \sum_{t=1}^T\left( \mathbb{E}_{\bm{c}\sim q_{\bm{x}^*}^f}[f(\bm{x}^*,\bm{c})]- \mathbb{E}_{\bm{c}\sim q_{[\bm{x}^*]_t}^f}[f([\bm{x}^*]_t, \bm{c})] \right)\right].
     \nonumber
    \end{align}
The sum of the first four terms has an upper bound of $2C_1T^{(2+D_c)/(4+D_c)}+\sum_{t=1}^TC_4\delta_t$ by Eq.~\eqref{kdedrbo_bound_for_1245_terms}. The fifth term has a bound of $\pi^2/6$ by Eq.~\eqref{bound_first_part_gpucb_regret_kdedrbo_ineq}. The sixth term has a bound of $\sqrt{\beta_T\gamma_TC_2}\left(\sqrt{T^{D_c/(4+D_c)}}+\sqrt{T}+\sqrt{C_3\sum_{t=1}^T\delta_t^2}\right)$ by Eq.~\eqref{kdedrbo_second_parts_gpu_ucb_regret_ineq}. The last term has a bound of $\pi^2/6$ by Eq.~\eqref{third_part_gpucb_regret_kdedrbo}. By summing all these up, the theorem holds.
\end{proof}

\section{Details of Experimental Setting}
\label{experimental_setting_appendix}

\subsection{Method Implementation}
\label{method_appendix}
The acquisition functions for all algorithms are implemented and optimized using BoTorch~\cite{botorch}. The inner convex optimization problems of DRBO-KDE and DRBO-MMD are solved by CVXPY~\cite{cvxpy}. For all algorithms, we set the exploration-exploitation trade-off parameter of UCB as $\sqrt{\beta_t}=1.5$. When optimizing acquisition functions in BoTorch, the number of samples for initialization "raw\_samples" is set to 1,024 except for DRBO-KDE and DRBO-MMD, which are set to 32 due to their slow inner optimization speed, and the number of starting points for multi-start acquisition function optimization "num\_restarts" is set to 50 except for DRBO-KDE and DRBO-MMD, which are set to 5 for the same reason. The other hyper-parameters and details for each algorithm are summarized as follows.
\begin{itemize}
\item \textbf{SBO-KDE.} For the number of Monte Carlo samples for the SAA method, we set $M=1,024$. We choose the Gaussian kernel $K(\bm x)=\frac{1}{(2\pi)^{D_c/2}}e^{-\|\bm x\|_2^2}$ for KDE. The bandwidth $h_t^{(i)}=(\frac{4}{D_c+2})^{\frac{1}{D_c+4}} \hat{\sigma}_t^{(i)}t^{-1/(4+D_c)}$ based on the rule of thumb~\cite{kde_1}, where $\hat{\sigma}_t^{(i)}$ is the standard deviation of the $i$th dimension of observed context.
\item \textbf{DRBO-KDE.} The radius of the distribution set is set as $\delta_t=t^{-2/(4+D_c)}$, which can guarantee a sub-linear regret as we introduced in Section~\ref{sec-DRBO-KDE}. The kernel and bandwidth for KDE are the same as those used in SBO-KDE. For the optimization of $\inf_{\bm{c}\in\mathcal{C}}\text{ucb}_t(\bm{x},\bm{c})$ for the calculation of $S(\text{ucb}_t)$ in Eq.~\eqref{eq-dro-total-saa}, we use a Sobol sequence of size 1,024 and select the minimum value over them. The other hyper-parameters are the same as SBO-KDE.
\item 
\textbf{DRBO-MMD.} The continuous context space is discretized into a space 
 of size $|\tilde{C}|=\lceil 100^{1/D_c} \rceil^{D_c}$. We don't set $|\tilde{C}|$ too big because a larger size would significantly enlarge the running time. The reference probability distribution $\hat{P}_t$ is defined as the empirical distribution in the discretized context space $\tilde{C}$. The radius of the distribution set is set as $\delta_t=\frac{1}{\sqrt{t}}\left(2+\sqrt{2\log{1/\gamma}}\right)$ with $\gamma=0.1$ as stated in Lemma 3 in~\cite{drbo}.
\item \textbf{DRBO-MMD-MinimaxApprox.} The continuous context space is discretized into a larger space 
 of size $|\tilde{C}|=\lceil 1024^{1/D_c} \rceil^{D_c}$, because it accelerates DRBO-MMD with minimax approximation. The radius $\delta_t$ of the distribution set and the reference probability distribution $\hat{P}_t$ are the same as MMD.
\item \textbf{StableOpt.} There is no standard way of choosing $\mathcal{C}_t$. Instead of setting $\mathcal{C}_t=\mathcal{C}$, which is the worst case under the full context space and too conservative, we let each dimension of $\mathcal{C}_t$ be $[\mu_{\bm{c}}^{(i)}-\sigma_{\bm{c}}^{(i)}, \mu_{\bm{c}}^{(i)}+\sigma_{\bm{c}}^{(i)}]$, where $\mu_{\bm{c}}^{(i)}$ and $\sigma_{\bm{c}}^{(i)}$ are the mean and standard deviation of the $i$th dimension of observed contexts, respectively. For the inner optimization loop, we use a Sobol sequence of size 1,024 instead of using other optimizing strategys like L-BFGS. That's because other methods are too slow for this two-layers optimization.
\item \textbf{GP-UCB.} No specific hyper-parameters for GP-UCB except $\sqrt{\beta_t}=1.5$.
\end{itemize}

\subsection{Problem Definition}
\label{problem_definition_appendix}
In this part, we provide the detailed definition of the problems employed in the experiments.
\begin{itemize}
\item \textbf{Ackley.} The Ackley function in the experiment is a three dimensional function with two dimensional decision variable $\bm{x}\in[0,1]^2$ and one dimensional context variable $c\in[0,1]$:
\begin{align}
f(\bm{x},c)=&ae^{-b\sqrt{\frac{1}{3}(\sum_{i=1}^2\tilde{x}_i^2+\tilde{c}^2})}
\nonumber
\\
&-e^{\frac{1}{3}(\sum_{i=1}^2\cos{h\tilde{x}_i}+\cos{h\tilde{c}})}+a+e,
    \nonumber
\end{align}
where $a=20,b=0.2,h=2\pi,\tilde{x}_i = 65.536x_i-32.768, \tilde{c} = 65.536c-32.768$.

The distribution of $c$ is defined as $\mathcal{N}(0.5, 0.15^2)$, and $c$ is clipped to $[0,1]$ when it is out of the bound.
\item \textbf{Modified Branin.} The modified Branin function is modified from~\cite{modified_branin}. The original Branin function is defined as
\begin{equation}
    h(u,v)=a(v-bu^2+cu-r)^2+s(1-t)\cos{(u)}+s,
    \nonumber
\end{equation}
where $a=1, b=5.1/(4\pi^2),c=5/\pi,r=6,s=10,t=1/(8\pi).$

The Modified Branin function in the experiment is a four dimensional function with two dimensional decision variable $\bm{x}\in[0,1]^2$ and two dimensional context variable $\bm{c}\in[0,1]^2$:
\begin{equation}
f(\bm{x},\bm{c})=-\sqrt{h(15x_1-5,15c_1)h(15c_2-5,15x_2)}.
    \nonumber
\end{equation}
The distributions of both dimensions of $\bm{c}$ are defined as $\mathcal{N}(0.5, 0.1^2)$, and $\bm{c}$ is clipped to $[0,1]$ when they are out of the bound. 

\item \textbf{Hartmann.} The Hartmann function in the experiment is a six dimensional function with five dimensional decision variable $\bm{x}\in[0,1]^5$ and one dimensional context variable $c\in[0,1]$:
\begin{equation}
    f(\bm{x},c)=\sum_{i=1}^4\alpha_ie^{-\sum_{j=1}^6A_{ij}(y_j-P_{ij})^2},
    \nonumber
\end{equation}
where $c=y_6,$ $\bm{x}=(y_1,\dots,y_5)^{\mathrm{T}},$ $\bm{\alpha}=(1.0,2.0,3.0,3.2)^{\mathrm{T}}$,
\begin{equation}
\mathbf{A}=\left(
\begin{array}{cccccc}
    10 & 3 & 17 & 3.50 &1.7 & 8
     \\
     0.05 & 10 & 17 & 0.1 & 8 & 14
     \\
     3 & 3.5 & 1.7 & 10 & 17 & 8
     \\
     17 & 8 & 0.05 & 10 & 0.1 & 14
\end{array}
\right),
    \nonumber
\end{equation}
\begin{align}
&\mathbf{P}=
\nonumber
\\
&10^{-4}\left(
\begin{array}{cccccc}
    1312 & 1696 & 5569 & 124 & 8283 & 5886
     \\
     2329 & 4135 & 8307 & 3736 & 1004 & 9991
     \\
     2348 & 1451 & 3522 & 2883 & 3047 & 6650
     \\
     4047 & 8828 & 8732 & 5743 & 1091 & 381
\end{array}
\right).
    \nonumber
\end{align}

The distribution of $c$ is defined as $\mathcal{N}(0.5, 0.1^2)$, and $c$ is clipped to $[0,1]$ when it is out of the bound. 

\item \textbf{Complicated Hartmann.} The function definition is the same as the Hatrmann function with five dimensin decision variable $\bm x \in [0,1]^5$ and one dimensional context variable $c\in[0,1]$.

The distribution of $c$ now is a more complicated distribution than that of Hartmann, which is defined as a mixture of six normal distributions and two Cauchy distributions: $\mathcal{N}(0.1,0.02^2),\mathcal{N}(0.3,0.075^2),\mathcal{N}(0.4,0.1^2),$ $\mathcal{N}(0.5, 0.1^2),\mathcal{N}(0.7, 0.075^2),\mathcal{N}(0.8, 0.03^2)$ and $C(0.2, 0.02),C(0.8, 0.02)$, where $C(x_0,\gamma)$ denotes the Cauchy distribution, and $c$ is clipped to $[0,1]$ when it is out of the bound.

\item \textbf{Newsvendor Problem.} The newsvendor problem from~\cite{simoptgithub} in the experiment is a two dimensional function with one dimensional decision variable $\bm x\in[0,1]$ (initial inventory) and one dimensional context variable $\bm c \in [0,1]$ (customer demand). The function value is defined as $f(x,c) = 9\min\{x,c\}+\max\{0,x-c\}-5x$. The customer demand $c$ follows a Burr Type XII distribution with PDF $p(c;\alpha,\beta)=\alpha\beta\frac{c^{\alpha-1}}{(1+c^{\alpha})^{\beta+1}}$, where $\alpha=2$ and $\beta=20$, and $c$ is clipped to $[0,1]$ when it is out of the bound. 

\item \textbf{Portfolio Optimization.} The portfolio optimization problem in the experiment is a five dimensional function with three  dimensional decision variable $\bm{x}\in[0,1]^3$ (risk and trade aversion parameters, and holding cost multiplier), and two dimensional context variable $\bm{c}\in[0,1]^2$ (bid-ask spread and borrowing cost). The objective function is computed as the posterior mean of a GP trained on 3,000 samples generated from the CVXPortfolio~\cite{portfolio} by~\cite{risk1}. Here, the scale of the variables are all re-scaled to $[0,1]$ by~\cite{risk1} in the data they provided.

The distribution of context variable is defined in two ways, corresponding to two different problems. The first is that both dimensions of $\bm{c}$ are subject to uniform distribution $\mathrm{U}(0,1)$. The second is that the distributions of both dimensions of $\bm{c}$ are defined as $\mathcal{N}(0.5, 0.1^2)$ and $\bm{c}$ is clipped to $[0,1]$ when they are out of the bound.
\end{itemize}

\section{Computational Complexity Comparison}
\label{ComplexityAppendix}
In this section, we provide the computational complexity comparison of the algorithms in the experiments.

The computational complexity of fitting a GP model with $t$
 evaluated points is $\mathcal{O}(t^3+t^2d)$
 for all BO algorithms, where $d$
 is the dimension of the function. The major difference among their computational complexity lies in the evaluation time of the acquisition functions. For GP-UCB, it takes $\mathcal{O}(t^2+td)$
 time for each evaluation. Each evaluation of DRBO-MMD and DRBO-MMD-MinimaxApprox incurs $\mathcal{O}((t^2+td)|\mathcal{C}|+|\mathcal{C}|^3)$
 and $\mathcal{O}((t^2+td)|\mathcal{C}|+|\mathcal{C}|^2)$
 time, respectively, as shown in~\cite{drbo_wcs}. Note that $|\mathcal{C}|$
 denotes the size of the discretized space $\mathcal{C}$. For StableOpt, it takes $\mathcal{O}(n_{\text{sobol}}(t^2+td))$ time for each evaluation, where $n_{\text{sobol}}$
 denotes the size of Sobol sequence used to calculate the minimal value of UCB and is set to $1024$
 in our experiments. For SBO-KDE, the time of each evaluation is $\mathcal{O}((t^2+td)M)$, because $M$ 
 Monte Carlo samples are used for estimating the acquisition function in SAA. For DRBO-KDE, each evaluation requires solving an inner optimization problem, which is transformed into a two dimensional convex optimization problem as shown in Proposition~\ref{dro_transformation_prop}. To calculate the constraint $S(\text{ucb}_t)$
 of the transformed problem, it takes $\mathcal{O}(n_{\text{sobol}}(t^2+td))$
 time, where $n_{\text{sobol}}$
 is also set to $1024$ in our experiments. For the transformed two dimensional convex optimization, it incurs up to $\mathcal{O}((t^2+td)M)$
 time using the interior point method, where $M$
 is the number of Monte Carlo samples. Hence, the complexity of each evaluation of DRBO-KDE is $\mathcal{O}((t^2+td)(M+n_{\text{sobol}}))$. For SBO-KDE and DRBO-KDE, $M$ is set to 
$1024$.

From the above analysis, we can find that GP-UCB has the lowest complexity. DRBO-MMD is very expensive in terms of $|\mathcal{C}|$
 while DRBO-MMD-MinimaxApprox is much cheaper. StableOpt and SBO-KDE have a similar time complexity, while DRBO-KDE is a bit more expensive than SBO-KDE due to the inner two dimensional convex optimization. In Table~\ref{table1}, we show the average runtime (in seconds) for $100$ evaluations of 10 independent runs of each algorithm on the Ackley function, using the same setting of the optimizer in BoTorch. We can observe that the runtime order in the table is consistent with our computational complexity analysis.

\begin{table}[h]
\begin{center}
\begin{tabular}{|c | c|} 
 \hline
 \textbf{Algorithm} & \textbf{Time}  \\  
 \hline
 GP-UCB	 & $11.35\pm 0.40$  \\ 
 \hline
 DRBO-MMD	 & $5254.65\pm 356.32 $ \\
 \hline
 DRBO-MMD-MinimaxApprox	 & $119.04\pm 2.61$\\
 \hline
 StableOpt	 & $32.75\pm 1.35$ \\
 \hline
 SBO-KDE	 &$ 30.57\pm 1.06$\\ 
 \hline
 DRBO-KDE	 &$ 180.61\pm 6.90$\\ 
 \hline
\end{tabular}
\caption{Comparison on the runtime (in seconds) for 100 evaluations on the Ackley function, which is averaged over 10 independent runs.}
\label{table1}
\end{center}
\end{table}

\section{Experiments with Higher Context Dimension}
\label{AppendixD}
We use a small dimension of context variable in the main paper by following the experiments in DRBO literature, where the dimension of context variable tends to be relatively low (at most three)~\cite{drbo,drbo_wcs}. In fact, the context variable has a relatively low dimension in many real-world problems, such as customer demand in inventory management. Here, we conduct one more experiment on a problem with higher dimensional context variable, which is the Ackley function with 2 dimensional decision variable and 4 dimensional context variable. The distribution of all dimensions of context variable $\boldsymbol{c}$ is defined as $\mathcal{N}(0.5, 0.15^2)$, and $\boldsymbol{c}$ is clipped to $[0,1]^4$ when it is out of the bound. The result is presented in Figure~\ref{figackley}, showing that SBO-KDE and DRBO-KDE still outperform other methods.

\begin{figure}[h]
    \centering \includegraphics[width=0.7\linewidth]{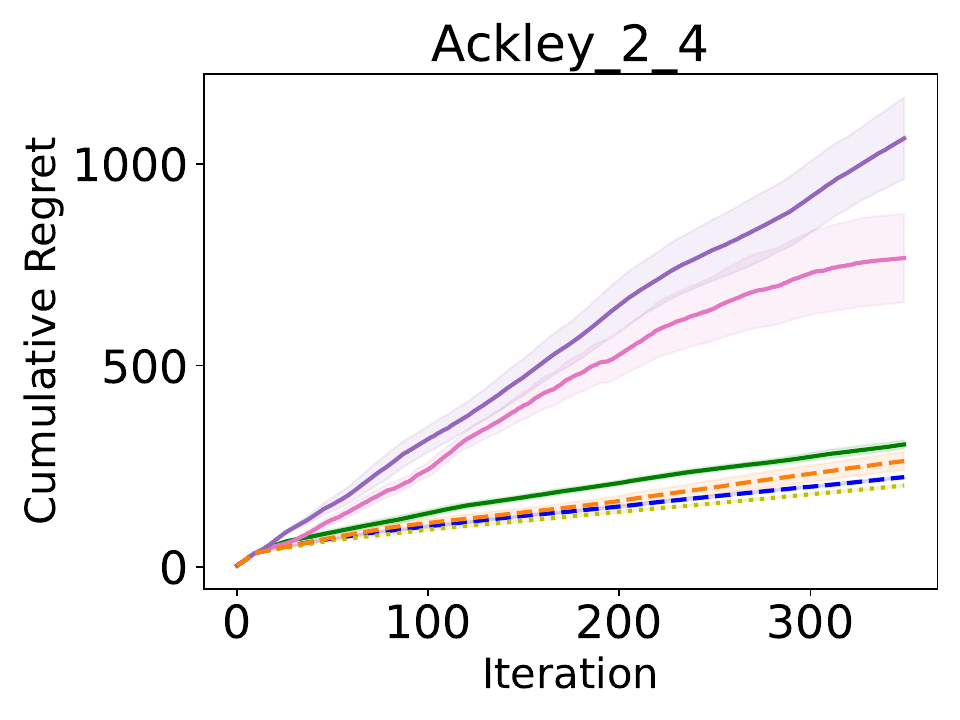}
\includegraphics[width=1.0\linewidth]{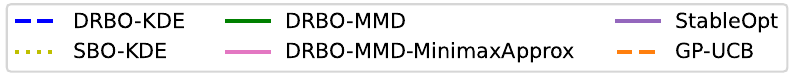}
    \caption{Mean and standard error of cumulative regret on Ackley function with 2 dimensional decision variable and 4 dimensional context variable.}
    \label{figackley}
\end{figure}

\section{Analysis of Complexity of Context Variable}
\label{AppendixE}
\begin{table}[h]
\begin{center}
\begin{tabular}{|c | c| c|} 
 \hline
 \textbf{\#Samples} & \textbf{Normal} & \textbf{Complicated} \\  
 \hline
 10		 & $0.3222 \pm 0.0327$  & $0.6216 \pm 0.0190$\\ 
 \hline
 100	 & $0.1315\pm 0.0088 $ &$0.5380 \pm 0.0333$ \\
 \hline
 200	 & $0.1156\pm 0.0079$& $0.4853 \pm 0.0233$\\
 \hline
 300	 & $0.0962 \pm 0.0083$ & $0.4724 \pm 0.0258$\\
 \hline
\end{tabular}
\caption{Average total variation of 20 independent runs under different number of samples between PDF estimated by KDE and the true context distribution in the experiments on Hartmann function.}
\label{table2}
\end{center}
\end{table}

We introduce DRBO-KDE considering that the estimated PDF may have high estimation error when the true distribution is complicated. To strengthen the motivation, in this section, we report the discrepancy (measured by total variation) between PDF estimated by KDE and the true context distribution in the experiments of Hartmann function with two distributions, i.e., the normal distribution and the complicated distribution. Table~\ref{table2} gives the average results of 20 independent runs under different number of samples for KDE. We can observe a greater discrepancy under the complicated distribution. The last two sub-figures of Figure1(a) in the main paper have shown that SBO-KDE outperforms DRBO-KDE for the normal distribution, while vice versa for the complicated distribution. Thus, these results validate our motivation of introducing DRBO-KDE.

\end{document}